\pgfplotsset{compat=1.17}
\newcommand{\Tau}{\mathcal{T}}
\DeclareMathOperator*{\E}{\mathbb{E}} 
\newcommand{\Scal}{\mathcal{S}}
\newcommand{\tstar}{\theta^{\star}}
\def\1{\bm{1}}
\DeclareMathAlphabet{\mathsfit}{\encodingdefault}{\sfdefault}{m}{sl}
\SetMathAlphabet{\mathsfit}{bold}{\encodingdefault}{\sfdefault}{bx}{n}
\DeclareMathOperator*{\argmax}{arg\,max}
\newcommand{\bA}{\mathbf{A}}
\newcommand{\bV}{\mathbf{V}}
\newcommand{\bPsi}{\mathbf{\Psi}}
\newcommand{\bPhi}{\mathbf{\Phi}}
\newcommand{\mS}{\mathcal{S}}
\newcommand{\mA}{\mathcal{A}}
\declaretheorem[numberwithin=section]{thm}
\declaretheorem[sibling=thm]{theorem}
\declaretheorem[numberwithin=section]{assumption}
\icmltitlerunning{Efficient Personalization of Generative Models}
\begin{document}

\twocolumn[
\icmltitle{Efficient Personalization of Generative Models via Optimal Experimental Design}

\begin{icmlauthorlist}
\icmlauthor{Guy Schacht}{eth}
\icmlauthor{Ziyad Sheebaelhamd}{mpi}
\icmlauthor{Riccardo De Santi}{eth,ethai}
\icmlauthor{Mojm\'{\i}r Mutn\'{y}}{eth,broad}
\icmlauthor{Andreas Krause}{eth,ethai}
\end{icmlauthorlist}

\icmlaffiliation{eth}{ETH Zurich, Zurich, Switzerland}
\icmlaffiliation{ethai}{ETH AI Center, Zurich, Switzerland}
\icmlaffiliation{mpi}{Max Planck Institute for Intelligent Systems, T\"ubingen, Germany}
\icmlaffiliation{broad}{Broad Institute, Boston, USA}

\icmlcorrespondingauthor{Guy Schacht}{guysc1993@gmail.com}

\icmlkeywords{Preference Learning, Optimal Experimental Design, Generative Models}

\vskip 0.3in
]

\printAffiliationsAndNotice{}

\begin{abstract}
Preference learning from human feedback has the ability to align generative models with the needs of end-users. Human feedback is costly and time-consuming to obtain, which creates demand for data-efficient query selection methods. This work presents a novel approach that leverages optimal experimental design to ask humans the most informative preference queries, from which we can elucidate the latent reward function modeling user preferences efficiently. We formulate the problem of preference query selection as the one that maximizes the information about the underlying latent preference model. We show that this problem has a convex optimization formulation, and introduce a statistically and computationally efficient algorithm ED-PBRL that is supported by theoretical guarantees and can efficiently construct structured queries such as images or text. We empirically present the proposed framework by personalizing a text-to-image generative model to user-specific styles, showing that it requires less preference queries compared to random query selection.
\end{abstract}

\section{Introduction}
\vspace{-0.1cm}\paragraph{Generative Models \& Reinforcement Learning}
In recent years, large-scale generative models have demonstrated tremendous success in generating high-fidelity content across various modalities~\citep{DBLP:journals/corr/abs-2005-14165, DBLP:journals/corr/abs-2112-10752, brooks2024video}. These models produce content through iterative processes: LLMs generate text token by token~\citep{DBLP:journals/corr/abs-2005-14165, ouyang2022traininglanguagemodelsfollow} and diffusion models refine outputs over multiple denoising steps~\citep{DBLP:journals/corr/abs-2105-05233}. For this reason, the Reinforcement Learning (RL) paradigm provides a natural framework for controlling and personalizing these models through feedback mechanisms. Several works have successfully leveraged intermediate feedback during generation: LLMs can be guided through conversational feedback~\citep{ouyang2022traininglanguagemodelsfollow, christiano2023deepreinforcementlearninghuman}, text-to-image models can incorporate human preferences at various generation stages~\citep{lee2023aligningtexttoimagemodelsusing, black2024trainingdiffusionmodelsreinforcement}, and diffusion models can be steered using reward signals during the denoising process~\citep{fan2023dpokreinforcementlearningfinetuning, clark2024directlyfinetuningdiffusionmodels}. This RL framing enables optimizing policies to produce outputs aligned with learned reward functions, improving generation quality~\citep{lee2023aligningtexttoimagemodelsusing, xu2023imagerewardlearningevaluatinghuman}, ensuring safety constraints~\citep{bai2022constitutional, bai2022training}, and personalizing to user preferences~\citep{ouyang2022traininglanguagemodelsfollow, rafailov2024directpreferenceoptimizationlanguage, stiennon2022learningsummarizehumanfeedback}.

\vspace{-0.2cm} \looseness -1 \paragraph{Preference-Based RL for Personalization}
Framing the generative process as an RL problem is particularly powerful for personalization, as it allows for aligning the agent's policy with a user's subjective taste. The key challenge is that this taste is difficult to formalize as a numerical reward function. Reinforcement Learning from Human Feedback (RLHF) is the established paradigm for this, learning rewards from human-supplied demonstrations or other forms of feedback~\citep{ziebart2008maximum, finn2016guided, lindner2023activeexplorationinversereinforcement, casper2023open}. Perhaps the most prominent and practical instance of RLHF is Preference-Based Reinforcement Learning (PBRL), where the latent reward model is learned from comparative feedback (e.g., a user choosing between two generated images). This feedback modality is often more intuitive for humans to provide than absolute scores or full demonstrations~\citep{christiano2023deepreinforcementlearninghuman, sadigh2017active, DBLP:journals/corr/abs-1910-04365, ouyang2022traininglanguagemodelsfollow, DBLP:journals/corr/abs-2111-04850, azar2023generaltheoreticalparadigmunderstand}. After collecting preference feedback from the user, an estimated reward model then serves as the reward signal aligning the RL agent to the human.

\vspace{-0.2cm} \looseness -1  \paragraph{PBRL Query Selection via OED}
The success of PBRL, however, hinges on the accuracy of this learned reward model, which in turn depends on the quality of the preference queries presented for user feedback. Collecting these user preferences is a significant practical bottleneck, as it requires a human to provide numerous labels—a process that is both time-consuming and costly~\citep{ouyang2022traininglanguagemodelsfollow, lee2023aligningtexttoimagemodelsusing}. This data collection bottleneck makes sample efficiency paramount, which requires selecting maximally informative queries. Existing PBRL methods for selecting such queries often face a trade-off: they are either computationally tractable but lack theoretical guarantees, or they are theoretically grounded but computationally expensive~\citep{DBLP:journals/corr/abs-2111-04850, wu2023making, sadigh2017active, zhan2023provable, DBLP:journals/corr/abs-2111-04850}. This raises a fundamental question for making personalization practical:\vspace{-0.2cm}
\noindent\begin{center}
    \setlength{\fboxsep}{10pt}
    \colorbox{gray!12}{%
        \parbox{0.75\linewidth}{\centering
        \emph{Can we find policies that select queries for preference-based feedback in a way that is both statistically efficient, and computationally tractable?}}
    }
\end{center}\vspace{-0.2cm}
In this work, we address this question by leveraging the principles of Optimal Experimental Design (OED)~\citep{chaloner1995bayesian, doi:10.1137/1.9780898719109, fedorov1997model}. We propose a method to select the most informative queries to present to the user, ensuring that the preference model is learned with as few interactions as possible. Specifically, our objective is to determine a set of $K$ distinct exploration policies for the agent that generate queries. These policies are carefully chosen to generate informative set of queries. When the user provides feedback on these, they provide maximal information about their latent reward parameters. We achieve this by reformulating the generally intractable OED problem~\citep{doi:10.1137/1.9780898719109, fedorov1997model} into a continuous optimization problem over the space of state visitation measures induced by the exploration policies. This allows us to use Convex Reinforcement Learning~\cite{hazan2019provably} to efficiently compute the optimal set of policies for query generation. In tabular settings, we prove that our objective is concave and global optimal solution can be reached.

\vspace{-0.2cm}\paragraph{Our contributions} 
We provide the following:
\begin{itemize}
    \item We formalize the problem of query selection for generative models with Markov processes (Sec. \ref{sec:problem_setting}), and propose ED-PBRL, a method that builds on Optimal Experimental Design to efficiently solve the problem of learning preferences from a minimal number of queries in the Markov process (Sec. \ref{sec:exp_design}). Our method advances the intersection between RLHF theory and practice, by being theoretically principled and at the same time applicable to real-life settings.
    \item A novel upper bound on the MSE matrix of the (regularized) MLE in terms of the (regularized) Fisher information matrix, obtained via a self-concordant analysis (App. \ref{sec:appendix_mse_derivation}).
    \item Assuming a generative model on discrete space, we provide global convergence guarantees for ED-PBRL based on Convex-RL (Sec. \ref{sec:guarantees_highlights}).
    \item An experimental evaluation of the proposed method on three types of experiments: synthetic ground truth models, LLM-simulated preference feedback (using GPT-4.1-mini as a human proxy), and also REINFORCE-based optimization applying a stochastic version of our objective to GPT2 based exploration policies. Our method showcases promising performance for the personalization of text-to-image models (Sec. \ref{sec:experiments_main}).
\end{itemize}
We further show that our framework extends beyond fixed vocabularies: in Appendix~\ref{sec:reinforce}, we present a vocabulary-free variant that optimizes LLM prompt generators directly via REINFORCE \citep{williams1992simple}, avoiding the need for pre-computed embedding matrices.

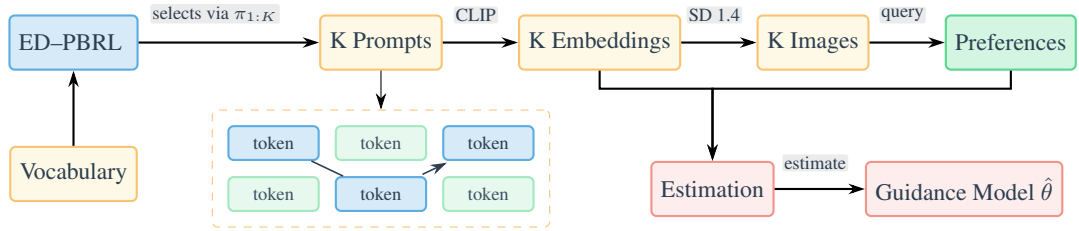
\begin{figure*}[t]
\centering
\rule{0pt}{0.5cm}\\
\usetikzlibrary{positioning,arrows.meta}
\usetikzlibrary{matrix,fit}

\tikzset{
    tokenBlue/.style={dataNode, draw=humanStroke,  fill=humanFill,
        inner sep=1pt, minimum width=12mm, minimum height=4.5mm, font=\scriptsize},
    tokenOrange/.style={dataNode, draw=startStroke, fill=startFill,
        inner sep=1pt, minimum width=12mm, minimum height=4.5mm, font=\scriptsize},
    tokenPath/.style={->, semithick, draw=textColor, rounded corners=2pt, shorten >=1.5pt, shorten <=1.5pt},
    tokenGroup/.style={draw=dataStroke, rounded corners=2pt, dashed, inner sep=2mm}
}
\definecolor{startFill}{HTML}{D6EAF8}
\definecolor{startStroke}{HTML}{5DADE2}
\definecolor{processFill}{HTML}{E8F8F5}
\definecolor{processStroke}{HTML}{76D7C4}
\definecolor{dataFill}{HTML}{FEF9E7}
\definecolor{dataStroke}{HTML}{F8C471}
\definecolor{outcomeFill}{HTML}{FDEDEC}
\definecolor{outcomeStroke}{HTML}{F1948A}
\definecolor{humanFill}{HTML}{D5F5E3}
\definecolor{humanStroke}{HTML}{58D68D}
\definecolor{textColor}{HTML}{2C3E50}

\tikzset{
	>={Stealth[length=2.2mm,width=1.4mm]},
	line cap=round,
    nodeBase/.style={
        rounded corners=2pt,
        thick,
        align=center,
        inner sep=3.6pt, 
        minimum height=7.2mm, 
        text=textColor
    },
	startNode/.style={nodeBase, draw=startStroke, fill=startFill},
	processNode/.style={nodeBase, draw=processStroke, fill=processFill},
	dataNode/.style={nodeBase, draw=dataStroke, fill=dataFill},
	outcomeNode/.style={nodeBase, draw=outcomeStroke, fill=outcomeFill},
	humanInputNode/.style={nodeBase, draw=humanStroke, fill=humanFill},
    edgeLabel/.style={font=\scriptsize, fill=black!8, rounded corners=1pt, inner sep=0.8pt, text=textColor}
}
	\begin{tikzpicture}[node distance=20mm and 10mm, scale = 0.6, font=\footnotesize	]
\node[startNode, xshift=-16mm]          (B) {ED--PBRL };
		\node[dataNode, below=10mm of B]                        (A) {Vocabulary};
\node[dataNode,    right=24mm of B]          (C) {K Prompts};
\node[dataNode,    right=10mm of C]          (D) {K Embeddings};
\node[dataNode,    right=10mm of D]          (F) {K Images};
\node[humanInputNode, right=10mm of F]       (G) {Preferences};
		
\node[outcomeNode, below=12mm of D, xshift=15mm]     (H) {Estimation};
\node[outcomeNode, right=12mm of H]     (I) {Guidance Model $\hat{\theta}$};
		
		\draw[->, thick] (A) -- (B);
\draw[->, thick] (B) -- node[pos=.44, above=3mm, xshift=-0.4mm, yshift=-0.6mm, edgeLabel]{selects via $\pi_{1:K}$} (C);
\draw[->, thick] (C) -- node[pos=.5, above=3mm, xshift=-0.6mm, yshift=-0.6mm, edgeLabel]{CLIP} (D);
\draw[->, thick] (D) -- node[pos=.5, above=3mm, xshift=-0.6mm, yshift=-0.6mm, edgeLabel]{SD 1.4} (F);
\draw[->, thick] (F) -- node[pos=.5, above=3mm, xshift=-0.6mm, yshift=-0.6mm, edgeLabel]{query} (G);
		
		
		\draw[->, thick] (D.south) -- ++(0,-4mm) -| (H.north);
		\draw[->, thick] (G.south) -- ++(0,-4mm) -|	(H.north);
		
\draw[->, thick] (H) -- node[pos=.5, above=3mm, xshift=-0.6mm, yshift=-0.8mm, edgeLabel]{estimate} (I);
		
\matrix (tok) [below=6mm of C, matrix of nodes, row sep=2mm, column sep=2mm]{
			\node[tokenOrange] (t11) {token}; & \node[tokenBlue, opacity=.55, text opacity=.9] (t12) {token}; & \node[tokenOrange] (t13) {token}; \\
			\node[tokenBlue, opacity=.55, text opacity=.9]   (t21) {token}; & \node[tokenOrange] (t22) {token}; & \node[tokenBlue, opacity=.55, text opacity=.9]   (t23) {token}; \\
		};
		
    \draw[tokenPath] (t11) -- (t22) -- (t13);
		
    \node[tokenGroup, fit=(t11) (t23)] (TokGroup) {};
    \draw[->, thin] (C.south) -- (TokGroup.north);
	\end{tikzpicture}
\par\vskip 0.2cm
\caption{The personalization workflow: ED-PBRL calculates policies $\pi_1,\ldots,\pi_K$ which select prompts in the combinatorial token space. These prompts are embedded (CLIP) and rendered with Stable Diffusion 1.4~\citep{DBLP:journals/corr/abs-2112-10752}; preferences on the resulting images are collected and used to estimate the guidance model $\hat{\theta}$. Each prompt is formed by a sequence of tokens---a trajectory of a policy $\pi_i$. The $K$ policies are chosen so that, for a given budget, the preferences and embeddings yield an accurate estimate of the guidance model. Notice that each policy can be parameterized via a large table or as a separate generative language model.}
\label{fig:human_exp_flow_simple}
\end{figure*}

\vspace{-0.2cm}\section{Related Work}
\vspace{-0.1cm} \looseness -1 \paragraph{Generative Model Guidance}
Generative models, especially diffusion models \cite{DBLP:journals/corr/abs-2006-11239, pmlr-v37-sohl-dickstein15, DBLP:journals/corr/abs-2105-05233} and Large Language Models (LLMs), have achieved remarkable success, and often benefit from guidance to align outputs with user preferences. For diffusion models, guidance techniques steer pre-trained models by incorporating preference information, for example, through gradients from an auxiliary classifier (\textit{classifier guidance} \cite{DBLP:journals/corr/abs-2105-05233, DBLP:journals/corr/abs-2011-13456}) or by leveraging conditional model properties (\textit{classifier-free guidance} \cite{ho2022classifierfreediffusionguidance}). Similarly, LLMs are often guided in a post-training phase to better align with user intent; for instance, InstructGPT \cite{ouyang2022traininglanguagemodelsfollow} uses human feedback to fine-tune models to follow instructions. The effectiveness of these methods often hinges on an accurate underlying preference model. Our work focuses on efficiently learning such preference models to enhance personalized generative model guidance.
\vspace{-0.2cm} \looseness -1 \paragraph{Preference-Based Reinforcement Learning}
A key challenge in realizing effective generative model guidance is the accurate and efficient learning of the underlying user preference models. Preference-Based Reinforcement Learning (PBRL) offers a powerful paradigm for this, learning rewards (and thus preference models) from comparative feedback, which is often more intuitive for humans than providing explicit reward values or detailed demonstrations. PBRL's focus on preferences aligns well with capturing nuanced user tastes for guidance. Many PBRL advancements focus on statistical efficiency and regret guarantees \cite{pmlr-v162-chen22ag, DBLP:journals/corr/abs-2111-04850, zhan2023provable, DBLP:journals/corr/abs-2111-04850}. However, these methods can rely on computationally expensive components, such as oracles for selecting informative queries over pairs of policies from an exponentially large set, or complex algorithmic structures \cite{wu2023making}. Our work differs by focusing on a computationally tractable method for query selection in PBRL. We optimize a set of $K$ exploration policies to generate informative comparative queries using an experiment design (ED) objective, rather than relying on pairwise policy comparison oracles. Closest to our goal are two related methods. Information Directed Reward Learning (IDRL) \cite{NEURIPS2021_1fa6269f} selects queries to disambiguate return differences between a maintained set of plausibly optimal policies; maintaining such a candidate set can be restrictive in large spaces. \citet{schlaginhaufen2025efficient} et. al apply OED to perform offline query selection, but they solve a sequential OED problem by repeatedly selecting from a precomputed trajectory space which is exponentially large. They rely on submodularity of the D-optimal criteria and therefore limited to it. They offer regret guarantees, but those carry a $\sqrt{\kappa}$ factor, typically exponential in the reward parameter norm. By contrast, we reformulate the design problem over \emph{state visitation measures}, which reduces the search space from exponential (trajectories) to polynomial (occupancy measures), supports any concave matrix-monotone scalarization (A-, V-, D-optimal), and yields a globally optimal solution via convex optimization.
\vspace{-0.2cm} \paragraph{Optimal Experiment Design} \looseness -1
To efficiently learn preference models for guidance, the queries presented to the user must be highly informative. Optimal Experimental Design (OED) \cite{doi:10.1137/1.9780898719109, fedorov1997model} provides principles for selecting experiments to maximize information gain, often by optimizing scalar criteria of the Fisher Information Matrix. Due to the NP-hardness of discrete design, continuous relaxations optimizing over design measures are common. \citet{pmlr-v206-mutny23a, wagenmaker2023instancedependentnearoptimalpolicyidentification} and \citet{folch2024transitionconstrainedbayesianoptimization} applied OED to active exploration in Markov Chains by optimizing over visitation measures of a single policy. Our work adapts OED to PBRL by designing a \emph{set of $K$ policies} for generating informative \emph{comparative queries}, whilst making it tractable using the framework of convex Reinforcement Learning (Convex-RL) \cite{hazan2019provably, DBLP:journals/corr/abs-2106-00661}.


\vspace{-0.2cm}\section{Problem Setting}
\label{sec:problem_setting}
\vspace{-0.2cm} We frame the task of personalized content generation as an RL problem, where the agent sequentially appends to or refines its output. The reward function is unknown and defines the latent personal user's taste. The agent's goal is to learn this latent reward model using the fewest preference queries possible to be given user feedback upon.

\vspace{-0.2cm} \subsection{Markov Decision Process}
To formalize the control of generative models, we employ a finite-horizon Markov Decision Process (MDP) defined by the tuple $\mathcal{M} = (\mS, \mA, P, H)$. Here, $\mS$ and $\mA$ represent the state and action spaces, $P(s'|s,a)$ is the \emph{known} transition matrix governing the dynamics of the generation process, and $H$ is the finite horizon. A policy $\pi(a|s)$ defines the agent's strategy for making sequential choices.

\vspace{-0.2cm}\paragraph{Examples of MDPs for Generative Models}\looseness -1 
Our abstract MDP formulation is intentionally general. To make this concrete, we provide several examples of how it can be instantiated:
\vspace{-0.2cm}
\begin{description}
    \item \emph{Automatic Prompt Engineering}: The sequential construction of prompts for text-to-image models can be modeled as an MDP, where states are timesteps in prompt construction and actions are choices of design tokens (full words or phrases) from a fixed vocabulary. This is the formulation used in our main experiments (Section \ref{sec:experiments_main}).
    \item \emph{Autoregressive Text Generation}: For an LLM, a state is the sequence of tokens generated so far, and an action is selecting the next token (BPE subword) from the model's vocabulary. We demonstrate this formulation in Appendix~\ref{sec:reinforce}, where policies are parameterized as GPT-2 models optimized via REINFORCE.
    \item \emph{Iterative Refinement in Diffusion Models}: A state can be the noisy data (e.g., an image) at a denoising step, and an action could be selecting a guidance direction for refinement.
\end{description}
\vspace{-0.2cm}\subsection{Latent Reward and Preference Feedback}
We model the user's latent reward as linear in known features: a map $\phi: \mS\!\times\!\mA \to \mathbb{R}^d$ embeds state-action pairs, and $r(s,a) = (\theta^*)^\top \phi(s,a)$ for an unknown $\theta^* \in \mathbb{R}^d$. The goal is to estimate $\theta^*$ with $\hat{\theta}$ efficiently from preferences. The framework and guarantees extend to richer classes (e.g., RKHS linear functionals, c.f. \cite{pmlr-v206-mutny23a}). For clarity the reward is only state dependent, e.g. $r(s)=(\theta^*)^\top\phi(s)$. This is without loss of generality as state-action rewards are recovered by augmenting state space as $\mS' = \mS \times \mA$.
\vspace{-0.2cm}\looseness -1 \paragraph{Preference model}To learn $\theta^*$, we rely on comparative feedback. Given $K$ options (e.g. states, actions or trajectories), denoted by $\{x_1, \dots, x_K\}$, a user selects the one they prefer most. We model the probability of this choice using the standard multinomial logit (softmax) model. The probability that a user chooses option $x_q$ is then:
\begin{equation} \label{eq:softmax_general}
\text{P}(x_q \text{ is best}) = \frac{\exp((\theta^*)^\top \phi(x_q))}{\sum_{k'=1}^K \exp((\theta^*)^\top \phi(x_{k'}))}
\end{equation}
where $\phi(x_k')$ is the feature vector of option $x_k'$. This model is a generalization of the Bradley-Terry model (which corresponds to $K=2$) \citep{19ff28b9-64f9-3656-ba40-08326a05748e}.

\vspace{-0.2cm}\looseness -1 \subsection{Interaction Protocol}
The learning process follows a fixed experimental design protocol with three phases:
\begin{enumerate}
    \item \textbf{Policy Optimization:} The algorithm determines a set of $K$ exploration policies, $\pi_1, \dots, \pi_K$, by solving an information-maximization optimization problem (detailed in Section \ref{sec:exp_design}).
    \item \textbf{Data Collection:} The $K$ policies are executed for $T$ episodes, generating $T$ sets of trajectories. Each set is $\{\tau_{t,1}, \dots, \tau_{t,K}\}$, where $\tau_{t,q} \sim \pi_q$. These sets (or their components, see below) are presented to the user, who provides one preference choice at each of the $H$ timesteps, resulting in a dataset of $T \times H$ feedback comparisons.
    \item \textbf{Parameter Estimation:} Using the collected feedback and the features of the corresponding trajectories, the algorithm computes the final estimate $\hat{\theta}$ of the true parameter $\theta^*$.
\end{enumerate}
The central challenge, which we address, is how to perform Phase 1 to select policies that make the estimation in Phase 3 as efficient as possible. The pipeline is visualized in Figure \ref{fig:human_exp_flow_simple} with image feedback generated via prompts (trajectories).

\vspace{-0.2cm}\subsection{Feedback Models}
\label{sec:feedback_models}
\vspace{-0.2cm} We consider two plausible models for how feedback is elicited over the generated trajectories.

\textbf{State-based Preference Feedback} \looseness -1
At each timestep $h \in [H]$, the user compares states $\{s_{1,h}, \dots, s_{K,h}\}$ from the $K$ trajectories. The choice probability is given by Eq. \ref{eq:softmax_general} using state features $\phi(s_{q,h})$. This model is mainly used for our theoretical analysis.

\textbf{Truncated Trajectory Preference Feedback}  \looseness -1
More practically, the user compares partial outputs. The options $x_q$ are trajectories truncated at step $h$, which we denote as a sequence of states $\tau_q[1:h] = \{s_{q,1}, \dots, s_{q,h}\}$. For example, in prompt generation, users compare partial prompts like \textit{``A painting of...''} vs. \textit{``A photo of...''}. The choice probability is again given by Eq. \ref{eq:softmax_general}, but using features of the partial sequence, $\phi(\tau_q[1:h])$. These features (e.g., a CLIP embedding of a partial sentence) are not necessarily simple sums of their constituent state features. We use this model in our experiments.

\vspace{-0.2cm}\subsection{Estimation}
Given a dataset of $T \times H$ preferences, the parameter $\theta^*$ is estimated via regularized maximum likelihood. Let $y_{t,h,q}$ be a one-hot indicator that alternative $q$ was chosen at step $h$ of episode $t$. The probability of this choice, $p(q|t,h, \theta)$, is given by the softmax model from Eq. \ref{eq:softmax_general} applied to the features of the options presented under the relevant feedback model. The estimate $\hat{\theta}$ is the solution to $\hat{\theta} =$
\begin{equation}
 \argmax_{\theta \in \mathbb{R}^d} \sum_{t=1,h=1,q=1}^{T,H,K}  y_{t,h,q} \, \log\big(p(q_{t,h}\mid t,h, \theta)\big) - \frac{\lambda}{2} \, \|\theta\|_2^2.
\end{equation}
where $\lambda \ge 0$ is a regularization coefficient.

\vspace{-0.2cm}\section{Optimal Experimental Design for Preference Learning}\label{sec:exp_design}

\vspace{-0.1cm} Our main motivation is to select queries for PBRL in a sample efficient manner. \emph{Given a budget $T$, how should we select $K$ exploration policies to generate $T$ sets of $K$ trajectories that are maximally informative for estimating $\theta^*$?} To address this, we use an information-theoretic approach, leveraging the Fisher Information Matrix.

\vspace{-0.2cm}\subsection{Fisher Information and Estimation Error}
\vspace{-0.2cm}The quality of the estimate~$\hat{\theta}$ is fundamentally linked to which queries are selected. The Fisher Information Matrix (FIM), $I(\theta)$, quantifies the information content of the data; classically, the Cram\'er--Rao Lower Bound (CRLB) relates $I(\theta)$ to a \emph{lower} bound on the covariance of unbiased estimators. In our setting, we establish a \emph{novel upper bound} for the regularized MLE---derived via a self-concordant analysis of the (regularized) log-likelihood: the Mean Squared Error (MSE) matrix of~$\theta_\lambda$ is controlled by the inverse of the regularized FIM at the true parameter,~$I_\lambda(\theta^*)^{-1}$. This is formalized in the following result.

\begin{restatable}[Maximizing FIM improves Estimation]{theorem}{MSEBound}
\label{thm:mse_bound}
Under mild conditions (Appendix \ref{sec:appendix_mse_derivation}), the expected square error of $\hat{\theta}_\lambda$, of multinomial likelihood, satisfies
\[
\E[(\hat{\theta}_\lambda - \theta^*)(\hat{\theta}_\lambda - \theta^*)^T] \preceq C^{\lambda}_{\theta^*} \cdot I_\lambda(\theta^*)^{-1}
\]
where $C^{\lambda}_{\theta^*} = (1-r^{\lambda}_{\theta^*})^{-4}$ depends on a local consistency radius $r^{\lambda}_{\theta^*}\!\in[0,1)$.
\end{restatable}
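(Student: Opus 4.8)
The plan is to treat $\hat\theta_\lambda$ as the minimizer of the regularized negative log-likelihood $L(\theta) = -\sum_{t,h,q} y_{t,h,q}\log p(q\mid t,h,\theta) + \frac{\lambda}{2}\|\theta\|_2^2$ and to run the classical M-estimation argument, but with every Hessian comparison controlled by self-concordance rather than by global smoothness or strong-convexity constants. The first fact I would record is that for the softmax/multinomial model the Hessian $\nabla^2 L(\theta)$ does not depend on the realized labels $y$: it equals the $\lambda$-regularized Fisher information $I_\lambda(\theta) = \sum_{t,h}\Cov_{p(\cdot\mid t,h,\theta)}[\phi] + \lambda\mathbf{I}$, whose per-query term is the feature covariance under the choice distribution. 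Hence $\nabla^2 L(\theta^*) = I_\lambda(\theta^*)$ exactly, and the only randomness enters through the score $g := \nabla L(\theta^*)$.

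Next I would establish the two analytic ingredients. First, self-concordance: I would show the per-query log-partition function of the multinomial logit is (generalized) self-concordant, so that on the Dikin-type ball $\{\theta : \|\theta-\theta^*\|_{I_\lambda(\theta^*)}\le r\}$ with $r<1$ one obtains the Hessian-stability sandwich $(1-r)^2 I_\lambda(\theta^*)\preceq I_\lambda(\theta)\preceq (1-r)^{-2} I_\lambda(\theta^*)$. Second, the Fisher identity for the score: since $\E[y_{t,h,q}] = p(q\mid t,h,\theta^*)$, the score has mean $\lambda\theta^*$ and covariance exactly the unregularized information, giving $\E[g g^\top] = I_0(\theta^*) + \lambda^2\theta^*(\theta^*)^\top \preceq I_\lambda(\theta^*)$, where the final inequality is the ``mild condition'' $\lambda\|\theta^*\|_2^2\le 1$ absorbing the regularization bias.

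With these in hand, the error representation comes from stationarity $\nabla L(\hat\theta_\lambda)=0$ and the integral mean-value theorem: $\hat\theta_\lambda-\theta^* = -\bar H^{-1} g$ with $\bar H = \int_0^1 I_\lambda(\theta^*+s(\hat\theta_\lambda-\theta^*))\,ds$. A localization step—bounding the Newton decrement $\|g\|_{I_\lambda(\theta^*)^{-1}}$ and invoking the self-concordant basin-of-attraction guarantee—defines the local consistency radius $r^\lambda_{\theta^*}\in[0,1)$ and certifies $\|\hat\theta_\lambda-\theta^*\|_{I_\lambda(\theta^*)}\le r^\lambda_{\theta^*}$, so that every $\bar H$ along the segment obeys the sandwich. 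Substituting $\bar H^{-1}\preceq (1-r)^{-2} I_\lambda(\theta^*)^{-1}$ on both factors of $\bar H^{-1} g g^\top \bar H^{-1}$ and cancelling one power of the information against $\E[g g^\top]\preceq I_\lambda(\theta^*)$ would produce the two factors of $(1-r)^{-2}$, i.e.\ $C^\lambda_{\theta^*}=(1-r^\lambda_{\theta^*})^{-4}$, and the claimed bound $\E[(\hat\theta_\lambda-\theta^*)(\hat\theta_\lambda-\theta^*)^\top]\preceq C^\lambda_{\theta^*} I_\lambda(\theta^*)^{-1}$.

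The main obstacle is precisely this last substitution: because $\bar H$ is itself a function of the data (through $\hat\theta_\lambda$, hence through $g$), the sandwich $\bar H^{-1} g g^\top \bar H^{-1}$ is not a product of independent factors, and a naive replacement $\bar H^{-1}\to (1-r)^{-2} I_\lambda(\theta^*)^{-1}$ inside a two-sided matrix product is not a valid Loewner inequality, since the rank-one direction $g$ is rotated by $\bar H^{-1}$. The careful version must either use the $\bar H$-inner-product Cauchy--Schwarz bound $\bar H^{-1} g g^\top \bar H^{-1}\preceq \|g\|_{\bar H^{-1}}^2\,\bar H^{-1}$ together with the localization that ties $\|g\|_{I_\lambda(\theta^*)^{-1}}$ to $r^\lambda_{\theta^*}$, or handle the coupling directly by integrating the Hessian stability along the solution path; I expect reconciling this coupling—while keeping the clean, dimension-free constant $(1-r)^{-4}$ rather than a looser trace factor—to be the crux, alongside verifying self-concordance of the softmax loss with the stated constants. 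I would also need to treat the low-probability event that the estimate escapes the consistency radius, bounding its contribution to the expectation via the regularization, which renders $L$ globally strongly convex with parameter $\lambda$, so that the overall MSE matrix stays controlled.
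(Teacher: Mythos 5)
Your proposal follows essentially the same route as the paper's proof (Theorem \ref{thm:mse_bound_formal}): stationarity of the regularized MLE plus a mean-value expansion of the score gives $\hat\theta_\lambda-\theta^*=I_\lambda(\theta_\tau)^{-1}s_\lambda(\theta^*)$; self-concordance of the multinomial negative log-likelihood gives the Hessian sandwich $I_\lambda(\theta_\tau)^{-1}\preceq(1-r)^{-2}I_\lambda(\theta^*)^{-1}$ inside the consistency ball; and the score second moment satisfies $\E[s_\lambda(\theta^*)s_\lambda(\theta^*)^\top]=I(\theta^*)+\lambda^2\theta^*(\theta^*)^\top\preceq I_\lambda(\theta^*)$ under $\lambda\|\theta^*\|_2^2\le 1$, which is exactly the paper's Assumption \ref{assum:bounded_theta}. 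The one structural difference is localization: the paper does not derive it, but simply assumes (Assumption \ref{assum:local_consistency}) that $\|\hat\theta_\lambda-\theta^*\|_{I(\theta^*)}\le r^{\lambda}_{\theta^*}<1$ holds deterministically for \emph{every} data realization; this is what licenses moving the constant outside the expectation. Your Newton-decrement argument and your treatment of the escape event are extra work the paper avoids by assumption, and if you want to match the stated theorem you can likewise assume the radius.

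The obstacle you single out, however, is real, and you should know that the paper's proof does not resolve it---it performs exactly the naive substitution you warn against. In its Step 3 the paper replaces the data-dependent outer factors $I_\lambda(\theta_\tau)^{-1}$ by their Loewner upper bound $(1-r)^{-2}I_\lambda(\theta^*)^{-1}$ on \emph{both sides} of the rank-one matrix $s_\lambda s_\lambda^\top$. As you observe, $0\prec A\preceq B$ does not imply $Avv^\top A\preceq Bvv^\top B$: the two sides are rank-one matrices pointing in different directions ($Av$ versus $Bv$), and generically neither dominates the other; e.g.\ $A=I_2$, $B=\diag(1,2)$, $v=(1,1)^\top$ gives
\[
Bvv^\top B-Avv^\top A=\begin{pmatrix}0&1\\ 1&3\end{pmatrix},
\]
which has negative determinant and hence is not positive semidefinite. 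The deterministic radius of Assumption \ref{assum:local_consistency} justifies extracting the scalar constant from the expectation but does nothing to repair this rank-one rotation problem, so your proposal is incomplete at precisely the point where the paper's own argument is invalid. Your Cauchy--Schwarz repair $M^{-1}ss^\top M^{-1}\preceq (s^\top M^{-1}s)\,M^{-1}$ is sound, but after combining with $M^{-1}\preceq (1-r)^{-2}I_\lambda(\theta^*)^{-1}$ and taking expectations it costs a factor $\Tr\big(I_\lambda(\theta^*)^{-1}\E[s_\lambda s_\lambda^\top]\big)\le d$, i.e.\ you obtain $(1-r)^{-4}\,d\,I_\lambda(\theta^*)^{-1}$ rather than the dimension-free constant claimed in the theorem. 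Whether the dimension-free bound can be salvaged without further assumptions (e.g.\ an assumption tying the direction of $I_\lambda(\theta_\tau)^{-1}s_\lambda$ to that of $I_\lambda(\theta^*)^{-1}s_\lambda$, or a scalar bound on the Newton decrement) is left open by the paper; flagging this, as you did, is the correct assessment of where the difficulty lies.
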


Thus, maximizing $I_\lambda(\theta^*)$ (making its inverse smaller in Loewner order) reduces estimation error; see Appendix \ref{sec:appendix_mse_derivation} for the full proof.

\vspace{-0.2cm}\subsection{The Intractable Ideal Objective}\looseness -1 
\vspace{-0.2cm}Theorem~\ref{thm:mse_bound} motivates maximizing the FIM $I_\lambda(\theta^*)$ to minimize estimation error. Our goal is therefore to select policies that generate trajectories yielding the most information. However, optimizing the FIM over a discrete set of $K{\times}T$ trajectories is typically NP-hard. A standard OED approach is to relax this problem by optimizing over a \textit{distribution} of experiments---in our case, policies that generate trajectories. This leads to optimizing the \textit{expected} FIM that the policies induce~\cite{doi:10.1137/1.9780898719109, fedorov1997model}.

The expected regularized FIM for $K$ policies $\pi_{1:K}$ generating $T$ episodes is:
\begin{equation}
I_\lambda(\pi_{1:K}, \theta) = T \sum_{h=1}^H I_h(\pi_{1:K}, \theta) + \lambda I_d, \label{eq:expected_fim_main}
\end{equation}
where $I_h(\pi_{1:K}, \theta)$ is the FIM contribution from timestep~$h$, averaged over the trajectory distributions~$\eta_{\pi_q}$. Let $s^q_h$ be the state of trajectory $\tau_q \sim \eta_{\pi_q}$ at step~$h$. The expression for~$I_h$ is an expectation over the FIM for a single multinomial choice (derived in Appendix \ref{sec:appendix_fim_derivation_multinomial}):
\begin{multline}
\label{eq:I_h_main}
I_h(\pi_{1:K}, \theta) = \E_{\substack{\tau_q \sim \eta_{\pi_q} \\ q \in [K]}} \!\Bigg[
\sum_{q=1}^K p(q|h)\,\phi(s^q_{h})\phi(s^q_{h})^\top \\
- \Big(\sum_{q=1}^K p(q|h)\,\phi(s^q_{h})\Big)\!\Big(\sum_{l=1}^K p(l|h)\,\phi(s^{l}_{h})\Big)^{\!\top}
\Bigg]
\end{multline}
where $p(q|h)$ is the softmax choice probability for the states $\{s^1_h, \dots, s^K_h\}$, as defined in \eqref{eq:softmax_general}. The ideal policy then optimizes the objective: 
\begin{equation}
\begin{aligned}
\argmax_{\pi_{1:K}} \; s\!\left( I_{\lambda}(\pi_{1:K}, \theta) \right)
\end{aligned}
\end{equation}
However, this ideal objective presents two major challenges:
\begin{itemize}
    \item \textbf{Dependence on unknown $\theta$}: The FIM depends on the true $\theta$, which is unknown at the design stage.
    \item \textbf{Intractable Optimization}: The objective involves an expectation over an exponentially large trajectory space ($\mS {\times} \mS {\times} \cdots {\times} \mS$, $H$ times) of $K$ independent policies.
\end{itemize}

\vspace{-0.2cm}\subsection{Reformulation to a Tractable Objective}
\label{sec:objective}
\vspace{-0.2cm}We address these issues by deriving a tractable objective in three steps; full details appear in App. \ref{sec:appendix_reformulation_tractable_objective}.

\textbf{Step 1: Reformulation using State Visitation Measures.}
We rewrite the expected FIM in terms of state visitation measures (occupancy measures), turning policy expectations into expectations over $s \sim d^h_{\pi_q}$. This yields a design objective over visitation measures; see Lemma \ref{lemma:expectation-state-reformulation}. Appendix Sec. \ref{sec:appendix_reformulation_tractable_objective} gives the explicit form and the Step~1 problem statement. 

\textbf{Step 2: $\theta$-agnostic approximation.}
The per-step information depends on the unknown $\theta$ through the choice probabilities $p(\cdot)$. Appendix Sec. \ref{sec:appendix_reformulation_tractable_objective} gives the explicit expression. To decouple design from~$\theta$ without assuming the worst case~$\theta$, we adopt an average-case approach and replace $p(q\mid h; s_{1:K}, \theta)$ by its expectation under an uninformative prior, giving $p(q\mid h; s_{1:K}) \approx 1/K$. For $K{=}2$ this holds exactly with a Gaussian prior on~$\theta$; for $K{>}2$ it is a reasonable approximation when alternatives are diverse/symmetric. Substituting yields a $\theta$-independent surrogate~$\tilde{I}_h(d^h_{1:K})$; see Appendix Eq. \ref{eq:approx_fim_pre_marginalization_appendix}.

\textbf{Step 3: Marginalization}
The expectation in the approximate FIM (Eq. \ref{eq:approx_fim_pre_marginalization_appendix}) can be resolved into a tractable matrix form. It decomposes into per-timestep contributions $\tilde{I}_{ h}(d_{1:K}^h)$ as:
\begin{align}
\tilde{I}_{h}(d_{1:K}^h) = \Phi^T \left( \frac{1}{K} \sum_{q=1}^K \mathrm{diag}(d^h_q) - \bar{d}^h (\bar{d}^h)^T \right) \Phi, \label{eq:approx_fim_matrix_form_main}
\end{align}
where $\Phi$ is the matrix of concatenated state features, $d^h_q$ is the visitation vector for policy~$q$ at step~$h$, and $\bar{d}^h = \frac{1}{K} \sum_q d^h_q$ is the average visitation.
We show this formally in Theorem \ref{theorem:approx-fim-vector-measure}. The approximated information matrix in Eq. \eqref{eq:approx_fim_matrix_form_main} depends on $|S|{\times}H{\times}K$ variables for discrete state spaces and can be efficiently optimized upon proper scalarization. 

\looseness -1 \vspace{-0.2cm} \paragraph{Scalarization}
The objective \eqref{eq:approx_fim_matrix_form_main} is a matrix-valued function of~$d_{1:K}$. As such, it cannot be directly optimized, since the information about different components~$\theta_i$ of the latent reward needs to be weighted. Classically, experiment design \cite{fedorov1997model} suggests weighting all components of uncertainty equally such as reducing $\|\hat{\theta} - \tstar\|^2$, or minimizing error on a certain projection $(\hat{\theta}{-}\tstar)^\top\mathbf{V}(\hat{\theta} {-} \tstar)$. These are called A- or V-designs, respectively, and result in a particular scalarization function~$s(\cdot)$. With scalarization our final design objective is:
\vspace{-0.1cm}\begin{equation}
\label{eq:final_objective}
\begin{aligned}
&\argmax_{\, d_{1:K} \in \mathcal{D}}
s\!\left( I_{\text{total}}(d_{1:K}) \right)
\;:=\; \\
&\quad \argmax_{\, d_{1:K} \in \mathcal{D}}
s\!\left( T \sum_{h=1}^H \tilde{I}_h\big(d_{1:K}^h\big) + \lambda I_d \right)
\end{aligned}
\end{equation}
For A-design this is $s(\bA) = 1/\operatorname{Tr}(\bA^{-1})$, and for V-design, $s(\bA) = 1/\operatorname{Tr}(\bV\bA^{-1})$.
This optimization is subject to the constraints that each $d^h_q$ must be a valid visitation of policy $\pi_q$ (i.e. $\in \mathcal{D})$. 



\vspace{-0.2cm}\looseness -1 \subsection{Information Relationship of Feedback Models}
\vspace{-0.2cm} The objective (Eq.\ \ref{eq:final_objective}) is derived for the state-based feedback model. Arguably, the more practical variant is the truncated trajectory feedback model. In the following result, we provide a link between state-based and truncated-trajectory-based feedback using additive feature decomposition assumption, and prove that by optimizing a policy for state-based feedback design in Eq.\ \ref{eq:final_objective} we also improve the truncated-trajectory design with the same policy.

\begin{theorem}[Truncated trajectory]
\label{thm:informal_fim_comparison}
If the features of the partial trajectory admit additive decomposition, i.e.\ $\phi(\tau[1{:}h]) = \sum_j^h \phi(s_j)$, the approximate Fisher Information (FI) of truncated trajectory feedback is lower-bounded by the FI of the state-based feedback scaled by~$1/4$. 
\end{theorem}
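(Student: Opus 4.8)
The plan is to work entirely with the $\theta$-agnostic approximate FIM (with $p(q\mid h)=1/K$), reduce the claimed Loewner inequality to a one-dimensional discrete Poincaré-type inequality, and exploit the additive feature assumption so that the truncated-trajectory (cumulative) features are genuinely sums of the per-step (state) features. The comparison then becomes one between the variance of single-step projected features and the variance of their running sums, and the constant $1/4$ is exactly the price of the discrete Poincaré step.

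First I would fix an arbitrary direction $u\in\mathbb R^d$ and an arbitrary realization of the $K$ trajectories, and record the single-comparison contribution in the form of \eqref{eq:I_h_main} under the substitution $p(q\mid h)=1/K$ (which is also the content of the marginalized matrix form \eqref{eq:approx_fim_matrix_form_main}). Writing $b_{q,h}=u^\top\phi(s_{q,h})$, $\bar b_h=\tfrac1K\sum_q b_{q,h}$, and the centered scalar $c_{q,h}=b_{q,h}-\bar b_h$, the state-based per-step quadratic form reads $u^\top \tilde I_h^{\text{state}} u = \mathbb{E}\big[\tfrac1K\sum_q c_{q,h}^2\big]$, i.e. the within-comparison variance of the projected feature. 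Using the additive decomposition $\phi(\tau_q[1:h])=\sum_{j=1}^h\phi(s_{q,j})$ together with linearity of the empirical mean over $q$, the centered cumulative projected feature equals $\sum_{j=1}^h c_{q,j}$, so the truncated per-step quadratic form becomes $u^\top\tilde I_h^{\text{traj}}u=\mathbb{E}\big[\tfrac1K\sum_q(\sum_{j=1}^h c_{q,j})^2\big]$.

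The crux is then the elementary scalar inequality: for any sequence with $a_0=0$,
\[
\sum_{h=1}^H a_h^2 \;\ge\; \tfrac14\sum_{h=1}^H (a_h-a_{h-1})^2 ,
\]
which I would apply with $a_h=\sum_{j=1}^h c_{q,j}$ so that $a_h-a_{h-1}=c_{q,h}$. It follows from $(a_h-a_{h-1})^2\le 2a_h^2+2a_{h-1}^2$ summed over $h$, where the boundary condition $a_0=0$ lets the two shifted sums be absorbed into $4\sum_{h=1}^H a_h^2$. Summing over $q$ with weights $1/K$ and taking the expectation over the trajectory distributions—both of which preserve the inequality because it holds pointwise for every realization and every $q$—yields $\sum_h u^\top\tilde I_h^{\text{traj}}u \ge \tfrac14\sum_h u^\top\tilde I_h^{\text{state}}u$. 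Since $u$ was arbitrary, this is precisely the Loewner bound $\sum_h\tilde I_h^{\text{traj}}\succeq \tfrac14\sum_h\tilde I_h^{\text{state}}$.

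The main obstacle is not the scalar inequality but the bookkeeping in the reduction: one must verify that the mean subtracted in the truncated model, $\tfrac1K\sum_q\phi(\tau_q[1:h])$, equals the cumulative sum of per-step means $\sum_{j=1}^h\bar\phi_j$, so that centering commutes with the additive decomposition. This is where additivity is essential and is what makes the telescoping $a_h-a_{h-1}=c_{q,h}$ exact; without it the cumulative features need not decompose and the running-sum structure is lost. A secondary point to check is that the $\theta$-agnostic approximation $p(q\mid h)=1/K$ is applied consistently to both feedback models, so that each FIM is the uniform-weight covariance of its own feature set and the two are directly comparable term by term.
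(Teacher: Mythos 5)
Your proposal is correct, and it reaches the paper's conclusion by a genuinely different (and more elementary) route. The paper's proof stacks the per-step features of each trajectory into a matrix $\Phi_{t,q}\in\mathbb{R}^{H\times d}$, observes that the cumulative-sum structure of the truncated features corresponds to left-multiplication by the lower-triangular all-ones matrix $S$, and writes both approximate FIMs as quadratic forms with Kronecker structure: $\tfrac1K (I_K-J_K)\otimes I_H$ for the state-based model versus $\tfrac1K (I_K-J_K)\otimes S^\top S$ for the truncated model. It then invokes the known closed-form spectrum $\lambda_k(S^\top S)=\bigl(4\sin^2\bigl(\tfrac{(2k-1)\pi}{2(2H+1)}\bigr)\bigr)^{-1}$ to conclude $S^\top S\succeq \tfrac14 I_H$ and hence the Loewner bound. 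Your proof establishes exactly that same spectral fact, but without quoting the eigenvalue formula: projecting onto an arbitrary direction $u$, using the variance identity $\tfrac1K\sum_q b_{q,h}^2-\bar b_h^2=\tfrac1K\sum_q c_{q,h}^2$, verifying that centering commutes with the cumulative sum (the step where additivity is genuinely needed, and which you correctly flag), and then applying the telescoping inequality $\sum_h a_h^2\ge\tfrac14\sum_h(a_h-a_{h-1})^2$ with $a_0=0$, which follows from $(a_h-a_{h-1})^2\le 2a_h^2+2a_{h-1}^2$. This is precisely the statement $\|Sc\|^2\ge\tfrac14\|c\|^2$, i.e. $\lambda_{\min}(S^\top S)\ge\tfrac14$, proved by hand. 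What each approach buys: yours is self-contained, requires no facts about the spectrum of $S^\top S$, and makes transparent that the inequality holds only after summing over the horizon (it fails per-step, since for a single $h$ there is no pointwise comparison between $a_h^2$ and $(a_h-a_{h-1})^2$); the paper's spectral route yields the exact constant $\lambda_{\min}(S^\top S)>\tfrac14$ for finite $H$, showing the factor $\tfrac14$ is slightly conservative but asymptotically tight as $H\to\infty$. One cosmetic difference: the paper states the theorem for a fixed set of sampled trajectories, whereas you additionally carry an expectation over trajectory distributions; since your inequality holds pointwise per realization and per $q$, this is harmless and subsumes the paper's statement.
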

This result is formally stated and proven as Theorem \ref{thm:fim_comparison} in Appendix \ref{sec:fim_relationship_state_truncated}. 

\vspace{-0.2cm} \section{The ED-PBRL Algorithm and Guarantees}

\label{sec:alg_overview}\vspace{-0.2cm}
In this section, we summarize the optimization and the interaction protocol. In particular, we present a \emph{static} version of ED-PBRL algorithm. We first determine $K$ policies by maximizing the objective in Eq.~\ref{eq:final_objective}. Here $d_{1:K} = \{d_q^h\}_{q\in[K],\,h\in[H]}$ are (state) visitation measures of the K policies. These then generate trajectories (e.g. prompts) for preference collection, after which we fit $\hat{\theta}$. See Alg.~\ref{alg:ed_pbrl_short}.

\begin{algorithm}[H]
\caption{ED-PBRL (Conceptual Overview)}
\label{alg:ed_pbrl_short}
\begin{algorithmic}
\STATE {\bfseries Input:} MDP details ($M, \Phi$), design parameters ($K, H, T, s(\cdot), \lambda$)
\STATE {\bfseries Output:} Estimated preference parameter $\hat{\theta}$
\STATE \textbf{Phase 1: Compute Optimal State Visitation Measures}
\STATE \quad Solve Eq.~\ref{eq:final_objective} for optimal visitation measures $\{d^{*h}_q\}_{h,q}$.
\STATE \textbf{Phase 2: Policy Extraction and Trajectory Sampling}
\STATE \quad Extract policies $\{\pi^*_q\}_{q=1}^K$ from $\{d^{*h}_q\}_{h,q}$.
\STATE \quad Sample $K \times T$ trajectories using $\{\pi^*_q\}$ and collect preference feedback.
\STATE \textbf{Phase 3: Parameter Estimation}
\STATE \quad Estimate $\hat{\theta}$ using all collected feedback (cf. Section \ref{sec:problem_setting}).
\end{algorithmic}
\end{algorithm}
\vspace{-0.5cm} 
The solver in \emph{Phase 1} uses the Frank--Wolfe algorithm, which sequentially linearizes the objective and solve the linearized problem over the visitation measures. Let $x \in \mS$ denote a state, and let $d_{1:K}$ collect the visitation measures for all policies. The algorithm update the policies in iterative fashion, at iterate $n$, for each policy $q$ define the per-policy linearization of Eq.\eqref{eq:final_objective} $G_{q}^{(n)}(x) \;:=\; \nabla_{\, d_{q}(x)}\, s\big(I_{\text{total}}(d_{1:K}^{(n)})\big)$,
evaluated at the current iterate $d_{1:K}^{(n)}$. The linearization decouples across policies and can be solved separately. For a single policy $q\in[K]$, denote by $d_{\pi_q}$ its visitation vector. The linearization oracle for this policy solves:
\[\textstyle \Delta d_{\pi_q}^{n+1} \;=\; \arg\max\limits_{d\in\mathcal{D}}\; \sum_{\mS} d(x)\, G_{q}^{(n)}(x)\]
where $\mathcal{D}$ is the convex set of valid visitation measures, and $\sum$ turns to $\int$ for continuous state spaces. The linear oracle problem exactly coincides with the reinforcement learning problem where $G_q^{(n)}(x)$ plays the role of reward, a key observation from the Convex-RL \citep{hazan2019provably, DBLP:journals/corr/abs-2106-00661}. The next iterate is constructed using convex combination $d^{(n+1)}_{\pi_q} = \alpha_{n+1} d^{(n)}_{\pi_q} + (1-\alpha_{n+1})\Delta d^{n+1}_{\pi_q}$ chosen via line search oracle.  After sufficient number of steps, we extract policies $\pi$ via marginalization $\pi^*_q(a\mid s) \propto d^{*}_{\pi_q}(s,a)$. Afterwards in \emph{Phase 2} and \emph{3}, we sample $K\!\times\!T$ trajectories and estimate $\hat{\theta}$. See Appendix Alg.~\ref{alg:ed_pbrl} for more detailed description. 

\vspace{-0.2cm}\subsection{Theoretical Guarantees and Algorithm Variants }
\label{sec:guarantees_highlights}
 Frank--Wolfe (FW) is known to provably converge to maximum on concave objectives defined on convex sets for well chosen sequence of $\alpha_n$. The set of visitation measures is convex as it is a polytope, and we show the objective is concave. 
\begin{restatable}{theorem}{thmObjectiveConcavityRestateLabel}
\label{thm:objective_concavity}
Let $d_{1:K} = \{d^h_q\}_{h \in [H],\, q \in [K]}$. If the scalarization $s:\,\mathbb{S}^d_+\!\to\!\mathbb{R}$ is concave and matrix-monotone, then $\big(I_{\text{total}}(d_{1:K})\big)$ as defined in \eqref{eq:final_objective} is concave in $d_{1:K}$.
\end{restatable}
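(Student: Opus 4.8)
The plan is to prove concavity of the scalar objective $d_{1:K}\mapsto s\!\left(I_{\text{total}}(d_{1:K})\right)$ (the object defined in \eqref{eq:final_objective}) by a two-layer argument: first establish that the matrix-valued map $I_{\text{total}}$ is \emph{matrix-concave} (concave in the Loewner order), and then invoke the standard composition rule that a concave, matrix-monotone scalar function composed with a matrix-concave map is concave. For the composition step, for any $d,d'\in\mathcal{D}$ and $t\in[0,1]$ I would use matrix-concavity to write $I_{\text{total}}(td+(1-t)d') \succeq t\,I_{\text{total}}(d)+(1-t)\,I_{\text{total}}(d')$, apply matrix-monotonicity of $s$ to get $s\!\big(I_{\text{total}}(td+(1-t)d')\big)\ge s\!\big(t I_{\text{total}}(d)+(1-t)I_{\text{total}}(d')\big)$, and finish with scalar concavity of $s$. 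This reduces the entire claim to matrix-concavity of $I_{\text{total}}$.

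Since $I_{\text{total}}(d_{1:K}) = T\sum_{h=1}^H \tilde I_h(d^h_{1:K}) + \lambda I_d$ is a nonnegative combination of the per-step maps plus a constant, and concavity in the Loewner order is preserved under summation with nonnegative weights and addition of constants, it suffices to show each $\tilde I_h$ is matrix-concave in $d^h_{1:K}$. Writing $\tilde I_h = \Phi^\top M_h \Phi$ with $M_h = \frac1K\sum_q \diag(d^h_q) - \bar d^h(\bar d^h)^\top$, I would split $M_h$ into the term $\frac1K\sum_q \diag(d^h_q)$, which is \emph{linear} (hence matrix-affine) in $d^h_{1:K}$, and the quadratic term $-\bar d^h(\bar d^h)^\top$. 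The congruence $A\mapsto \Phi^\top A\Phi$ is linear and preserves the Loewner order ($A\preceq B \Rightarrow \Phi^\top A\Phi \preceq \Phi^\top B\Phi$), so it transports matrix-concavity of $M_h$ to $\tilde I_h$; thus it is enough to show $M_h$ is matrix-concave.

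The crux is the quadratic term. Here I would use the elementary but key fact that the outer-product map $v\mapsto vv^\top$ is matrix-\emph{convex}: for $t\in[0,1]$,
\[
t\,uu^\top + (1-t)\,vv^\top - \big(tu+(1-t)v\big)\big(tu+(1-t)v\big)^\top = t(1-t)\,(u-v)(u-v)^\top \succeq 0,
\]
so its negation is matrix-concave. Because $\bar d^h = \frac1K\sum_q d^h_q$ is a linear function of $d^h_{1:K}$ and precomposition with a linear map preserves matrix convexity/concavity, the term $-\bar d^h(\bar d^h)^\top$ is matrix-concave; adding the matrix-affine diagonal term keeps $M_h$ matrix-concave, closing the chain.

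The only genuinely nontrivial step is this outer-product identity (together with the observation that the diagonal term is \emph{exactly} linear, so nothing convex leaks in there); the remaining ingredients—preservation of the Loewner order under congruence and under nonnegative combinations, and the monotone--concave composition rule—are routine once stated. I would also remark that the A- and V-design scalarizations $s(A)=1/\Tr(A^{-1})$ and $s(A)=1/\Tr(\mathbf{V}A^{-1})$ satisfy the hypotheses of concavity and matrix-monotonicity on $\mathbb{S}^d_{++}$, so the theorem indeed covers the designs used by ED-PBRL.
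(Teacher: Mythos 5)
Your proposal is correct and follows essentially the same route as the paper's own proof: the same split of $M_h$ into the linear $\diag$ term and the quadratic term, the same identity showing $v \mapsto vv^\top$ is matrix-convex (hence its negation matrix-concave after precomposition with the linear map $\bar d^h$), concavity transported through the congruence $A \mapsto \Phi^\top A \Phi$ and nonnegative sums, and finally the concave--matrix-monotone composition rule (which you spell out where the paper cites Boyd \& Vandenberghe). No gaps; your closing remark on the A- and V-design scalarizations is a sensible addition but not required by the statement.
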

\vspace{-0.2cm} FW converges to the global optimum with the standard $\mathcal{O}(1/n)$ rate under an exact linear oracle \citep{pmlr-v28-jaggi13}. Detailed statements and proofs (including the oracle instantiation and concavity proof) are provided in Appendix Sec.~\ref{sec:appendix_detailed_guarantees}. 

\looseness -1 \vspace{-0.2cm} \paragraph{Adaptive variant} The \emph{static} version does not use partial feedback from prior queries to update policies $\pi_q ~ q \in [K]$. We can run, an \emph{adaptive} variant where collect a batch of preferences, update $\hat{\theta}$, then re-optimize visitation and hence policies $\pi_q$. Round 1 uses $p\approx 1/K$, but with adaptive variant we could use an estimated $\hat{p}(q\mid h)$ (e.g. via $\hat{\theta}$) inside the objective and refine intermediate visitation measures.

\vspace{-0.2cm} \section{Experimental Evaluation}
\label{sec:experiments_main}\vspace{-0.2cm}
We evaluate our framework to personalize text-to-image generation based on CLIP embeddings \cite{radford2021learning}. We conduct two types of experiments: (1) a quantitative evaluation using synthetic ground truth (GT) models to simulate user preferences, and (2) an LLM-simulated preference study using GPT-4.1-mini as a human proxy. We also validate a vocabulary-free extension using REINFORCE in Appendix \ref{sec:reinforce}. The experimental workflow is shown in Figure \ref{fig:human_exp_flow_simple}.

\begin{figure}[ht!]
  \centering
  \subfigure[Cosine Error (Sunny
  GT)]{\includegraphics[width=0.48\columnwidth]{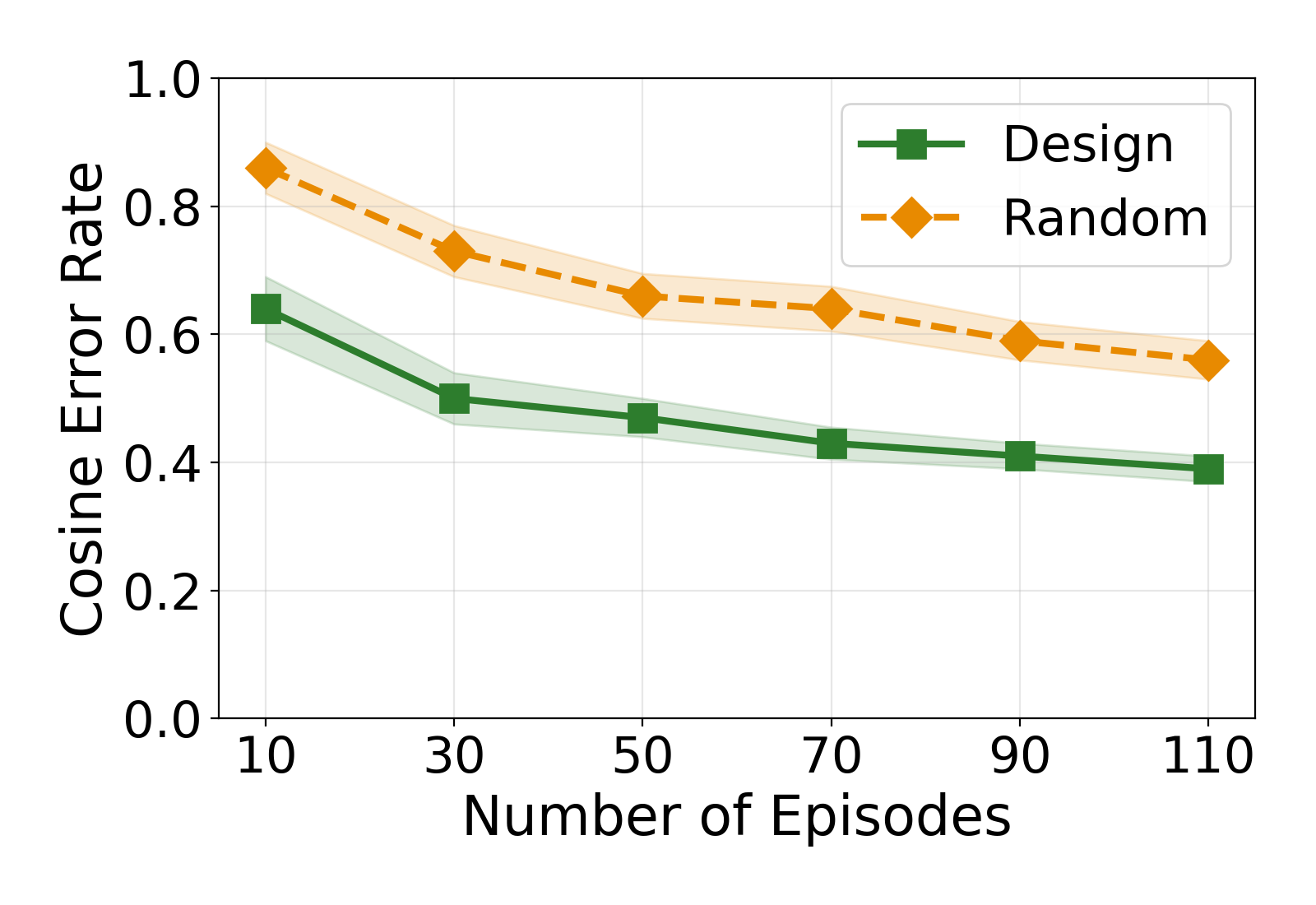}\label{fig:cos_sunny}}%
  \hfill%
  \subfigure[Preference Error (Sunny
  GT)]{\includegraphics[width=0.48\columnwidth]{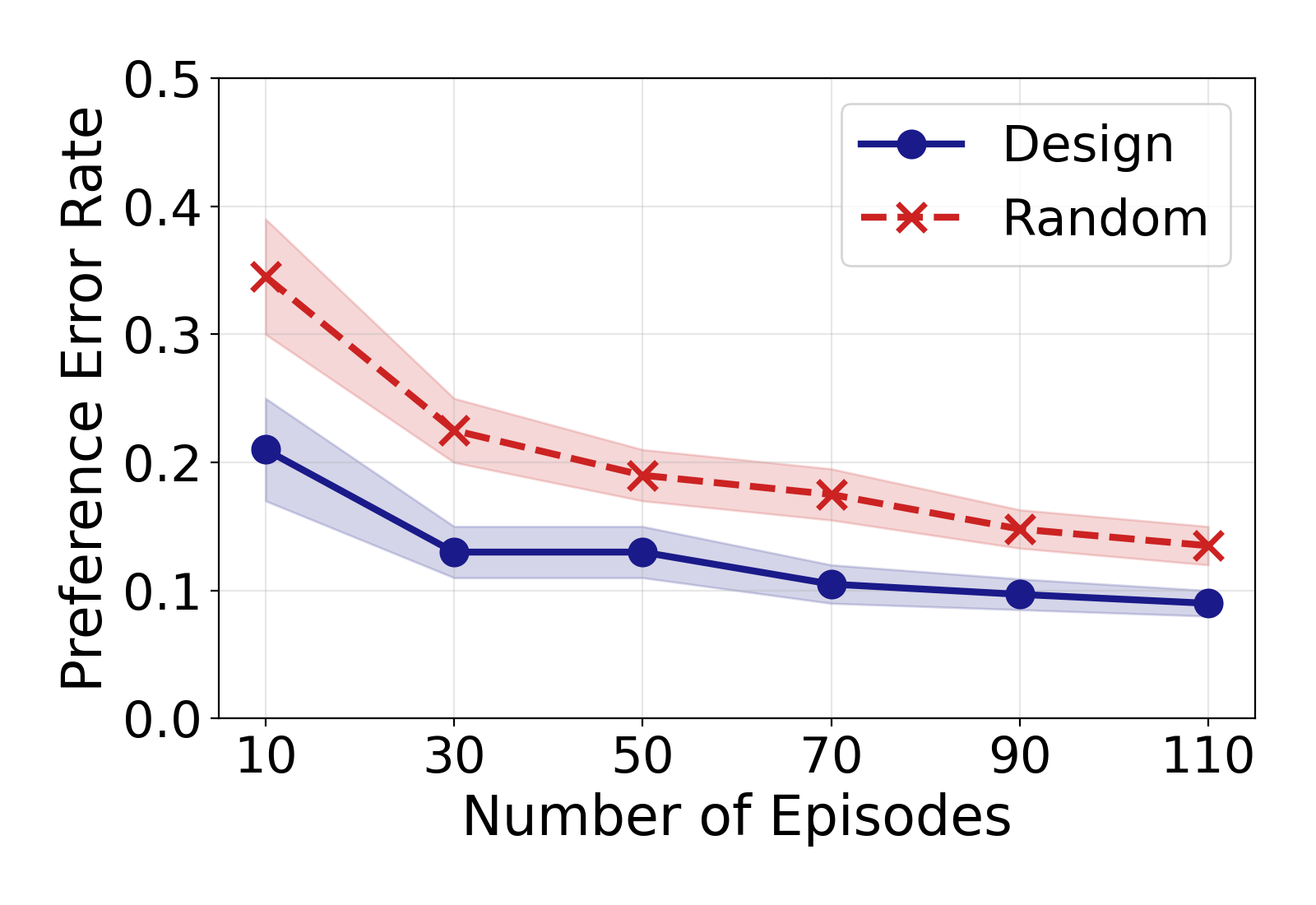}\label{fig:pref_sunny}}
  \caption{Performance of ED-PBRL on the Sunny synthetic Ground Truth (GT) model. We plot the Cosine Error
   (left) and Preference Prediction Error (right) against the number of interaction episodes. These
  results demonstrate the efficiency of our OED approach. Numerical results for all GT models (Sunny,
  Medieval, and Technological) are presented in Appendix (Figure
  \ref{fig:appendix_numerical_all_gt_models}).}
  \label{fig:synthetic_results_all}
  \end{figure}

\vspace{-0.2cm}\looseness -1  \paragraph{Experimental Methodology}
\label{sec:exp_methodology_main}
We briefly comment on the overall setup; full details are in Appendix Sec.\ \ref{sec:appendix_experimental_details}. We construct a prompt in a finite-horizon MDP; states are timesteps ($H=6$), actions are design tokens from a fixed vocabulary, and a trajectory yields a prompt. We randomly select an initial \emph{base-prompt}, which represents the overall content of the image. Token and prompt features are embedded using CLIP, and preferences are modeled linearly via the Truncated Trajectory Feedback model (see Sec.\ \ref{sec:feedback_models}). Design is optimized with the V-design scalarization (Sec.\ \ref{sec:objective}) where the matrix $V$ contains concatenated prompt embeddings of base prompts (see App. \ref{sec:appendix_exp_details_oed} for details). A consolidated list of hyperparameters appears in Table~\ref{tab:hyperparams} (Appendix~\ref{sec:appendix_experimental_details}).

\vspace{-0.2cm}\looseness -1  \subsection{Synthetic Ground Truth Model Experiments}
\label{sec:synthetic_gt_model_experiments_main}
We perform quantitative evaluation of ED-PBRL against known ground truth (GT) preference models. We simulate a user whose preferences are dictated by a GT linear preference model $\theta^*$. Each GT model is constructed from the normalized CLIP text embedding of a descriptive sentence. For instance, the \emph{Sunny GT} model, which is the focus of our main results, uses the phrase \textit{``An image with warm colors depicting bright sunshine''}. We also evaluate against \emph{Medieval} and \emph{Technological GT} models, with full details provided in Appendix \ref{sec:appendix_exp_details_gt_scorer}. The goal is to measure how accurately and efficiently our method recovers~$\theta^*$. We report two metrics (Appendix \ref{sec:appendix_exp_details_metrics}): \textbf{Cosine Error}---the cosine distance between the learned preference vector~$\hat{\theta}$ and the GT vector~$\theta^*$; and \textbf{Preference Prediction Error}---the error rate of~$\hat{\theta}$ in predicting the synthetic user's preference on unseen pairs of prompts from a held-out test token set.

Figure \ref{fig:synthetic_results_all} presents the learning curves for these metrics for the \emph{Sunny GT} model, averaged over multiple independent runs.
Similar trends hold across the remaining models (see Appendix Figure \ref{fig:appendix_numerical_all_gt_models}).

\vspace{-0.2cm}\looseness -1 \subsection{LLM-Simulated Preference Experiment}
\label{sec:llm_feedback_experiment_main}
\vspace{-0.2cm} To systematically evaluate our approach across many experimental conditions, we use an LLM (GPT-4.1-mini) as a simulated preference oracle. The LLM receives the same questionnaire-style queries that would be shown to human participants: four candidate images and a style description, with the task of selecting the image that best matches the specified style. This enables comprehensive evaluation across multiple regularization strengths, training set sizes, and aesthetic styles---a~scale prohibitively expensive with human participants.

\vspace{-0.2cm} \looseness -1 \paragraph{Setup}
We evaluate ten target style conditions spanning diverse aesthetics: futuristic, sunny, forest, landscape, ancient, noir, watercolor, cyberpunk, minimalist, and medieval. For each style, the LLM receives a descriptive prompt (e.g., ``Futuristic style with advanced technologies'') and selects among $K{=}4$ candidate images at each step $h \in \{1,\dots,H\}$ (with $H{=}6$). We systematically vary regularization $\lambda \in \{0.1, 1, 10, 100\}$ and training episodes $\in \{10, 20, 30, 40, 50\}$, with 10 held-out test episodes. Each configuration is repeated across 3 cross-validation folds.

\vspace{-0.2cm} \looseness -1 \paragraph{Evaluation Metric}
We evaluate models by how well the learned $\hat{\theta}$ predicts held-out LLM choices. The held-out benchmark consists of 10 episodes ($10{\times}H{=}60$ decisions). We report the held-out preference accuracy: the fraction of test decisions where the predicted best trajectory matches the LLM's choice. Results are averaged across styles and folds. Figure~\ref{fig:llm_accuracy_summary}\subref{fig:llm_accuracy_curve} shows accuracy vs.\ training episodes at $\lambda{=}100$, while Figure~\ref{fig:llm_accuracy_summary}\subref{fig:llm_accuracy_bar} summarizes accuracy at 50 episodes.

\begin{figure}[t]
\centering
\newlength{\imgcol}\setlength{\imgcol}{0.31\linewidth}%
{\setlength{\tabcolsep}{0pt}%
\begin{tabular*}{\linewidth}{@{}p{\imgcol}@{\extracolsep{\fill}}p{\imgcol}p{\imgcol}@{}}
\centering\textbf{Base Image} &
\centering\textbf{ED-PBRL based} &
\centering\arraybackslash\textbf{Random Exp.} \\[2pt]
\centering\includegraphics[width=\imgcol,height=3cm,keepaspectratio]{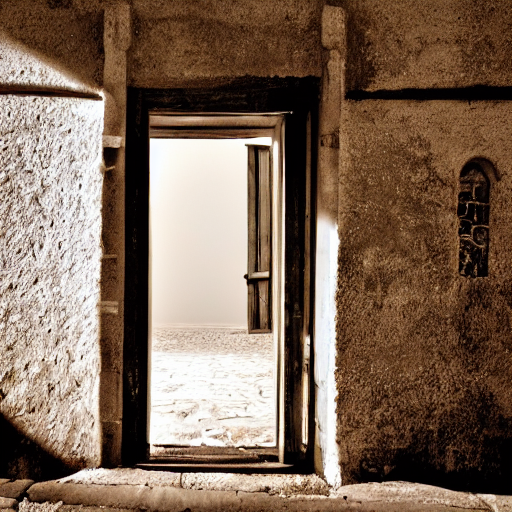} &
\centering\includegraphics[width=\imgcol,height=3cm,keepaspectratio]{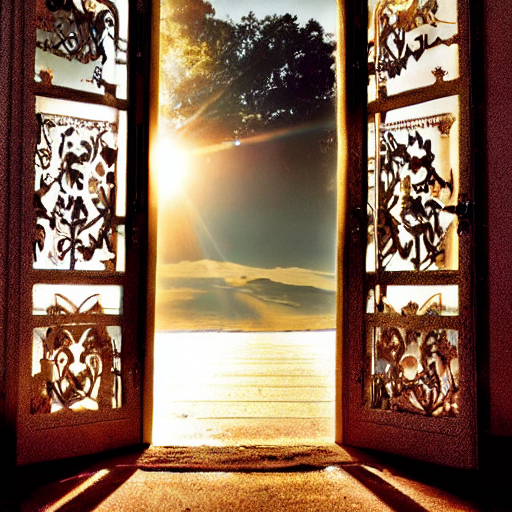} &
\centering\arraybackslash\includegraphics[width=\imgcol,height=3cm,keepaspectratio]{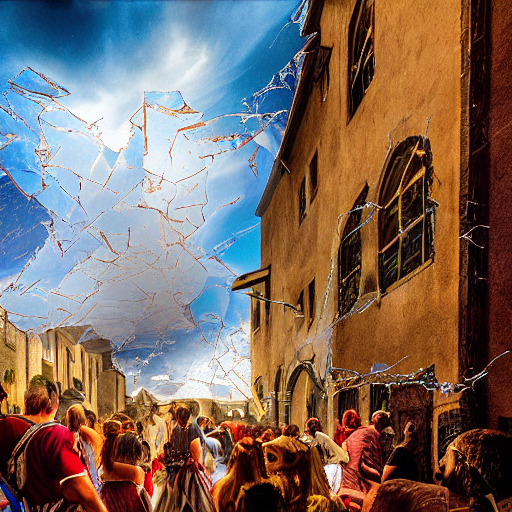} \\[2pt]
\centering\small Prompt: ``A photo of a gate'' &
\centering\small $\hat{\theta}_{\text{ED-PBRL}} \approx \theta^*_{\text{Sunny}}$ &
\centering\arraybackslash\small $\hat{\theta}_{\text{Random}} \approx \theta^*_{\text{Sunny}}$
\end{tabular*}}%
\caption{Sunny GT qualitative personalization. ED-PBRL queries yield better preference estimates than random exploration. The ED-PBRL image better matches the ``sunny'' aesthetic.}
\label{fig:sunny_qualitative}
\end{figure}

\begin{figure}[t]
\centering
\subfigure[Held-out accuracy vs.\ training episodes ($\lambda{=}100$)]{%
    \includegraphics[width=0.48\linewidth]{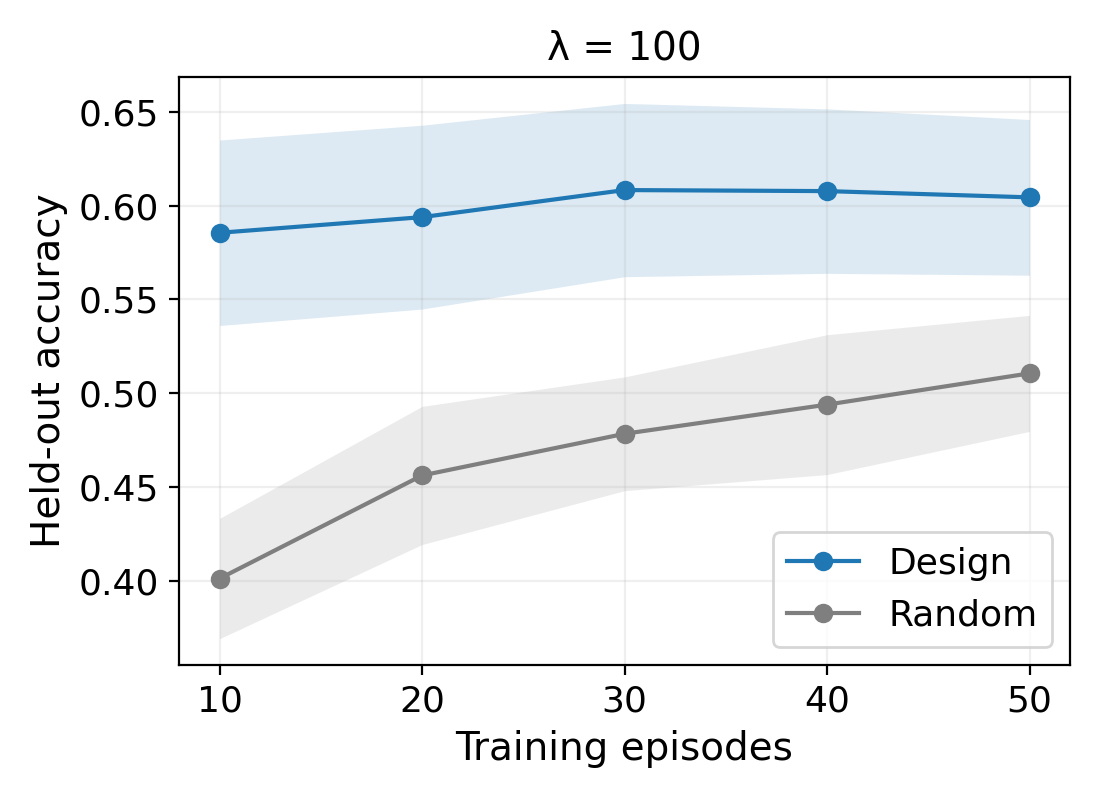}%
    \label{fig:llm_accuracy_curve}%
}%
\hfill%
\subfigure[Summary at 50 training episodes]{%
    \includegraphics[width=0.40\linewidth]{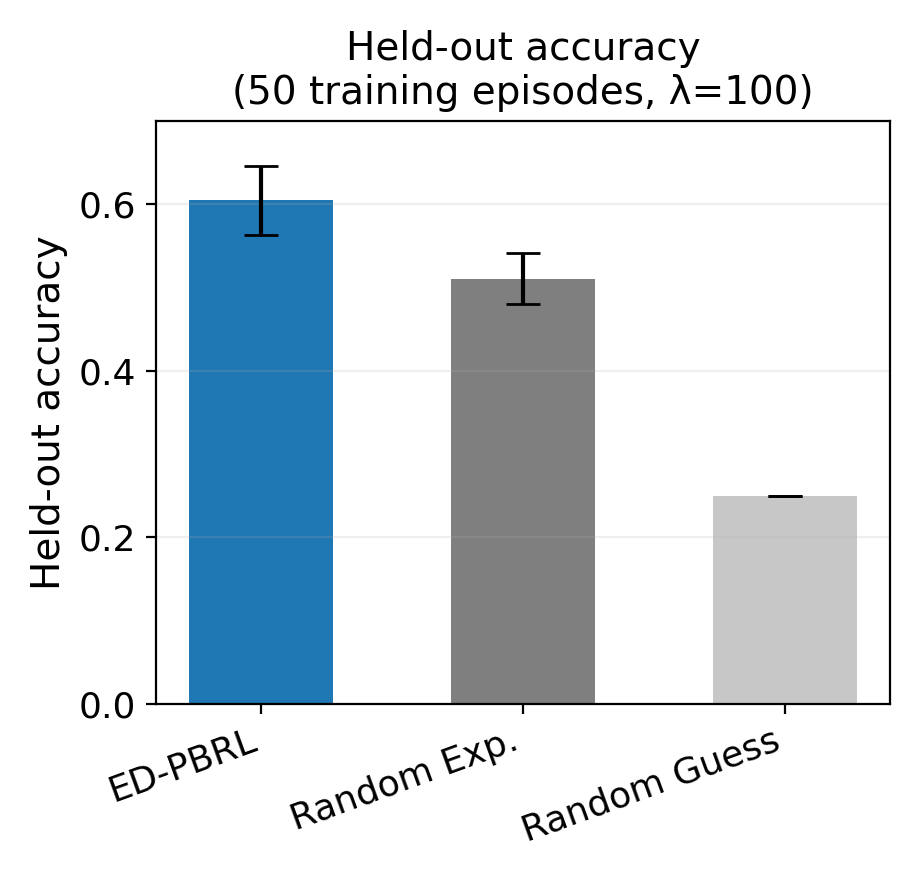}%
    \label{fig:llm_accuracy_bar}%
}
\caption{LLM-simulated held-out preference accuracy. \subref{fig:llm_accuracy_curve} shows accuracy vs.\ training episodes at $\lambda{=}100$ (shaded: s.e.\ over 10 styles). \subref{fig:llm_accuracy_bar} summarizes accuracy at 50 episodes. ED-PBRL stays near ${\sim}60\%$; random exploration degrades with less data. Random-guess baseline is $1/K{=}25\%$.}
\label{fig:llm_accuracy_summary}
\end{figure}

\vspace{-0.2cm}\looseness -1 \subsection{Vocabulary-Free REINFORCE Extension}
The previous approach was limited by a finite vocabulary defining the state-space. This is, however, not necessary for our setup. In the following, we show how to optimize LLM prompt generators directly via the Policy Gradient style algorithm, specifically REINFORCE \citep{williams1992simple}, by using policy parameterization.
\label{sec:reinforce_preview}
This allows us to formulate a vocabulary-free extension where $K$ policies are implemented using separate LLMs (MagicPrompt fine-tuned GPT2's) to generate prompts directly, removing the need for a fixed token set. For each policy (LLM)~$q$, we denote all its trainable parameters as $\theta_q$, then a prompt $\tau_q{=}(w_1,\ldots,w_H)$ yields CLIP prefix embeddings $\phi(\tau_q[1{:}h])$. For a single trajectory of $K$~prompts, the per-position Fisher contribution:
\begin{align}
I_h &= \frac{1}{K}\sum_{q=1}^K \phi_q^{(h)}(\phi_q^{(h)})^\top - \mu_h\mu_h^\top , \mu_h = \frac{1}{K}\sum_{q=1}^K \phi_q^{(h)}\nonumber
\end{align}
We aggregate $T$ trajectories and maximize the D-optimal objective
$I = \sum_{t=1}^{T}\sum_{h=1}^{H} I_h^{(t)} + \lambda I_d$, where $\mathcal{L}=\log\det(I)$.
Optimization uses REINFORCE \citep{williams1992simple} with a batch-mean baseline: for meta-samples $m=1,\ldots,M$,
\begin{equation}
\nabla_{\theta_q}\mathbb{E}[\mathcal{L}]
\approx \frac{1}{M}\sum_{m=1}^{M}(\mathcal{L}_m-\bar{\mathcal{L}})\,
\nabla_{\theta_q}\log p_{\theta_q}(\mathcal{T}_q^{(m)}),
\end{equation}
where $\mathcal{T}_q^{(m)}$ collects the $T$ prompts from policy~$q$ in meta-sample~$m$, $\mathcal{L}_m$ are the $M$ objectives of the batch, and $\bar{\mathcal{L}}$ their average. This uses only on-the-fly CLIP embeddings and scales to unrestricted vocabularies. Full details are in Appendix \ref{sec:reinforce}. The key differences between the REINFORCE approach and the Frank-Wolfe approach presented earlier are summarized in Table \ref{tab:fw-vs-reinforce}. Notice that when $|\mathcal{S}| \gg T \cdot M$ parametrization of policy with an LLM is also more computationally efficient. For our experiments $T\times M=64$, while $|S|>1000$.

\begin{table}[t]
  \centering
  \scriptsize
  \setlength{\tabcolsep}{5pt}
  \renewcommand{\arraystretch}{1.15}
  \caption{Comparison of Frank--Wolfe and REINFORCE.}
  \label{tab:fw-vs-reinforce}
  \begin{tabular*}{\linewidth}{@{\extracolsep{\fill}}lcc}
    \toprule
    & \textbf{Frank--Wolfe} & \textbf{REINFORCE} \\
    \midrule
    Prompt space & Fixed vocabulary $\mathcal{S}$ & Unrestricted \\
    Exploration policy & $d_q \in \Delta^{|\mathcal{S}|}$ & LLM $p_{\theta_q}$ \\
    FIM optimality & Eq.~\ref{eq:final_objective} & Eq.~\ref{eq:reinforce_grad} (App.~\ref{sec:reinforce}) \\
    Optimization & Convex & Non-convex \\
    Iteration (gradient) complexity & $\mathcal{O}(K\,H\,|\mathcal{S}|)$ & $\mathcal{O}(H\,T\,M\,K)$ \\
    \bottomrule
  \end{tabular*}
\end{table}

\begin{figure}[t]
\centering
\subfigure[Optimization trajectory]{%
    \includegraphics[width=\linewidth]{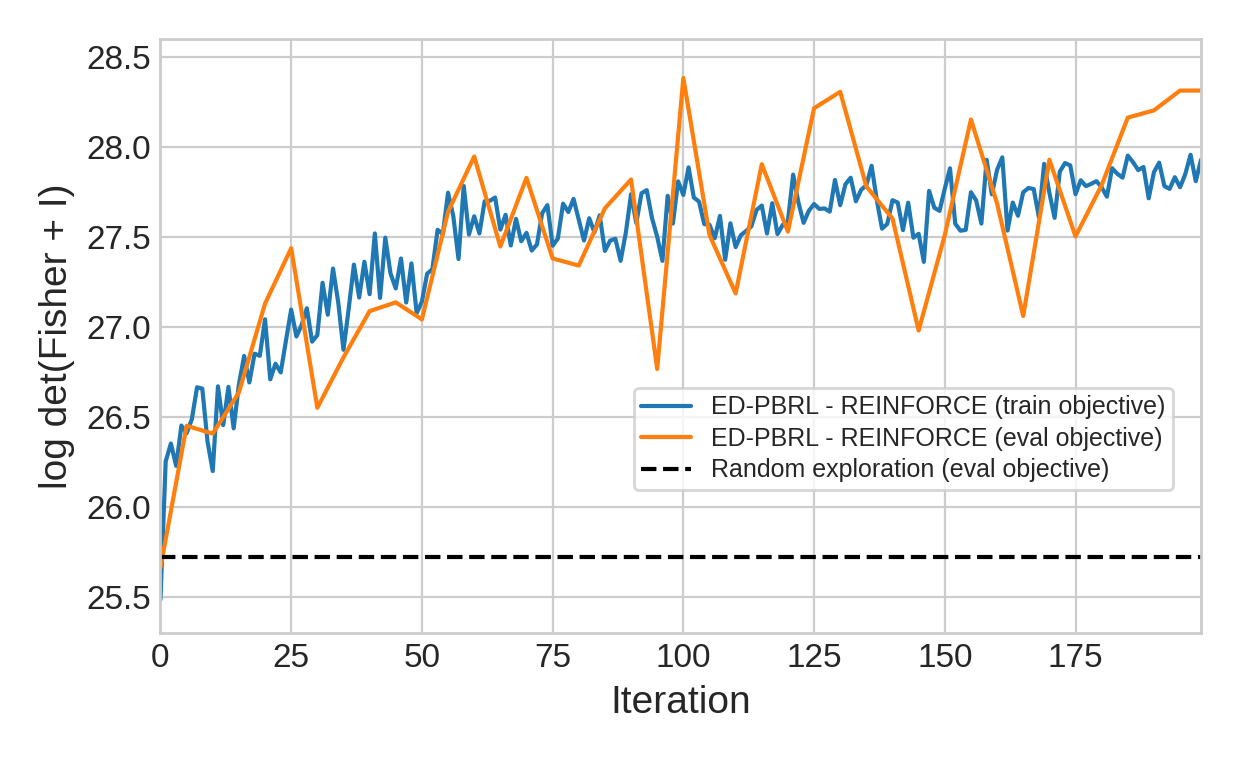}%
    \label{fig:reinforce_trajectory}%
}%
\\[4pt]%
\subfigure[t-SNE of CLIP embeddings]{%
    \includegraphics[width=\linewidth]{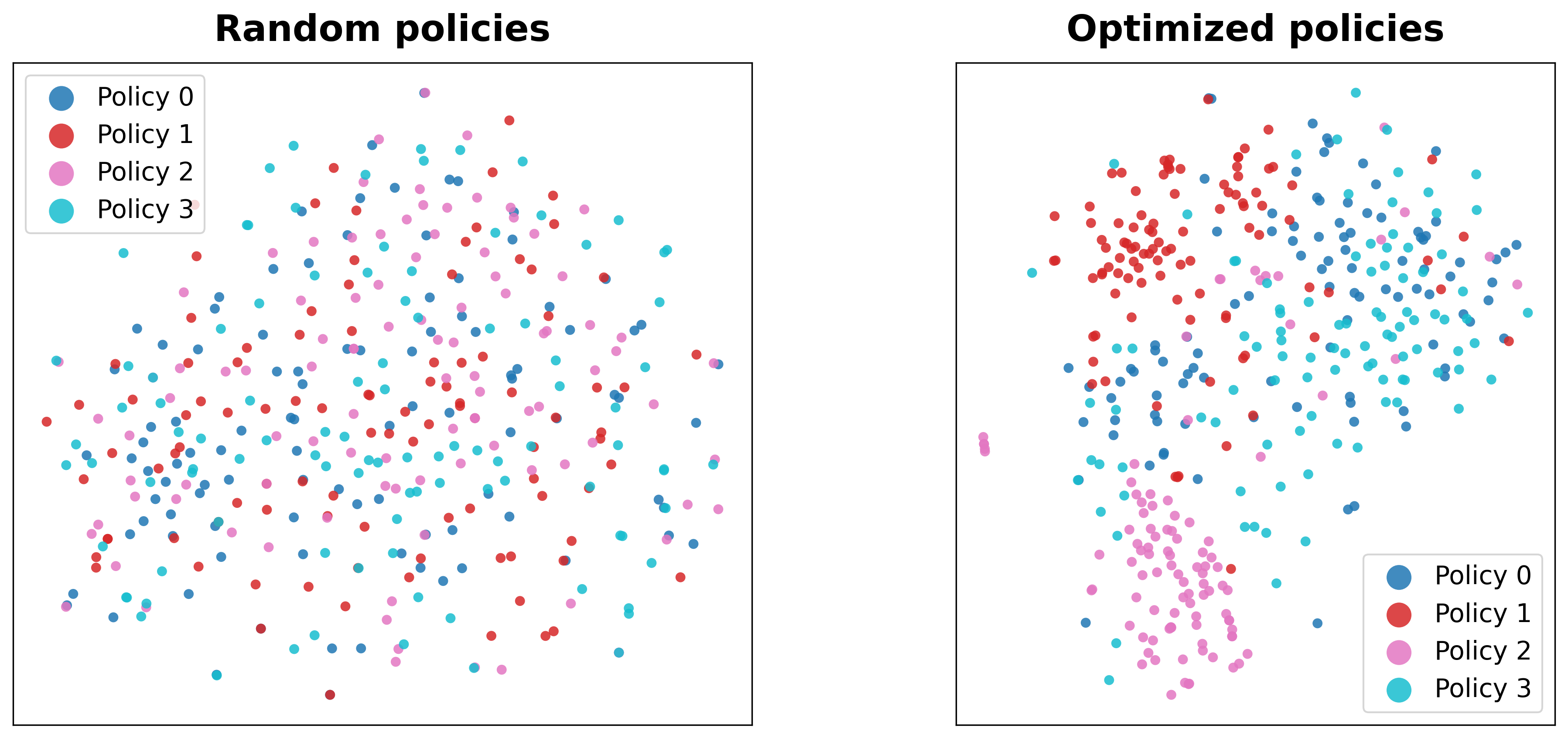}%
    \label{fig:reinforce_tsne}%
}
\caption{REINFORCE experimental validation. \textbf{(a)} D-optimal objective $\log\det(I)$ over
  $N{=}200$ policy gradient iterations. Training (blue) and evaluation (orange, every 5 iterations) curves stay aligned, confirming stable optimization. The gray dotted line shows the objective value of the random (unoptimized) MagicPrompt GPT-2 baseline, which remains flat as its policy is never updated.
  \textbf{(b)} t-SNE of CLIP embeddings (100 prompts per policy): left shows the random baseline, right
  shows ED-PBRL (REINFORCE). The optimized policies form $K=4$ distinct CLIP clusters, demonstrating that
  our exploration strategy produces diverse yet structured queries.}
  \label{fig:reinforce_results}
  \end{figure}


\vspace{-0.2cm}\section{Conclusion}
\vspace{-0.2cm}We introduced ED-PBRL, a principled optimization framework with global convergence guaranties for efficiently personalizing generative models by learning user preferences from a minimal number of comparative queries. Our work provides a theoretical analysis and justification for the practice of Reinforcement-Learning from Human Feedback (RLHF). Our experiments use specific models (CLIP, MagicPrompt fine-tuned GPT-2, Stable Diffusion~1.4) to demonstrate that the principles of Optimal Experimental Design (OED) are effectively applied to real-life generative models. These underlying models can be freely substituted with any advanced generative model, as long as they possess a Markov structure. ED-PBRL accelerates the learning of latent reward functions for each user, opening an avenue to personalize generative AI tools \textit{through optimization}, with fewer interactions compared to random query selection. Our experiments show that the framework scales beyond fixed vocabulary settings to unrestricted language generation by parameterizing policies as LLMs using policy gradient techniques with the same information measuring quantities.




\FloatBarrier
\newpage
\section*{Impact Statement}
This paper studies ML algorithm that influences data collection. Such algorithms may generate long-term biases in collected datasets and need to be used cautiously. Since these algorithms construct queries with which real users interact, there needs to be a comprehensive framework including ethics standards to ensure they are not causing harm to humans. There are many potential consequences of our work if we use it in products to personalize generative models. Our study is only a proof-of-concept visualizing formalism to perform such personalization. The aspect of bias needs to be further investigated before any deployment, which was out of the scope of this work.

\paragraph{Ethics Statement.}
This paper uses LLM-simulated preferences (GPT-4.1-mini) rather than human participants for the preference learning experiments. This approach avoids ethical concerns related to human subject research while enabling systematic evaluation across many experimental conditions. The LLM receives the same questionnaire-style queries that would be shown to human participants, making the evaluation protocol directly transferable to human studies in future work.

\bibliography{references}
\bibliographystyle{icml2026}

\onecolumn

\appendix
\clearpage
\section{Appendix: Detailed Experimental Setup}
\label{sec:appendix_experimental_details}


This section provides a comprehensive description of the experimental environment, parameters, and models used in our evaluation, intended for reproducibility and completeness. The overall workflow is depicted in Figure \ref{fig:human_exp_flow_simple}. A summary of key parameters is available in Table \ref{tab:hyperparams}.

\begin{table*}[t]
\caption{Summary of experimental parameters.}
\label{tab:hyperparams}
\centering
\small
\begin{tabular}{ll@{\hspace{4em}}ll}
\toprule
\multicolumn{4}{c}{\textbf{Common Parameters (All Experiments)}} \\
\cmidrule(r){1-4}
Num. Policies ($K$) & 4 & CLIP Model & ViT-L/14 \\
Feedback Model & Truncated Trajectory & & \\
\midrule
\multicolumn{2}{c}{\textbf{Synthetic Experiment}} & \multicolumn{2}{c}{\textbf{LLM-Simulated Preference Experiment}} \\
\cmidrule(r{2pt} l{4pt}){1-2} \cmidrule(r{4pt} l{2pt}){3-4}
Horizon ($H$) & 6 & Horizon ($H$) & 6 \\
OED Criterion & V-design (App. \ref{sec:appendix_exp_details_oed}) & OED Criterion & V-design (App. \ref{sec:appendix_exp_details_oed}) \\
Frank-Wolfe Iters & 100 & Frank-Wolfe Iters & 100 \\
Feedback Source & GT Model & Feedback Source & GPT-4.1-mini \\
Num. Episodes ($T$) & 10, 30, ..., 110 & Num. Episodes ($T$) & 60 \\
Num. Runs & 25 & Episode Split & \shortstack[l]{Training $\{10,20,30,40,50\}$ /\\ Benchmark $10$} \\
Num. Test Prompts & 1000 & Cross-Validation & 3 folds \\
Num. Eval Pairs & 5000 & Regularization ($\lambda$) & 100 \\
Vocabulary Split & 75\% train / 25\% test & Num. Styles & 10 \\
Regularization ($\lambda$) & 100 & Evaluation Metrics & App. \ref{sec:appendix_exp_details_human} \\
Evaluation Metrics & App. \ref{sec:appendix_exp_details_metrics} & & \\
\midrule
\multicolumn{4}{c}{\textbf{Vocabulary-Free REINFORCE (App. \ref{sec:reinforce})}} \\
\cmidrule(r){1-4}
Base LLM & MagicPrompt (GPT-2) & Horizon ($H$) & 8 words \\
Trajectories ($T$) & 8 & MC Samples ($M$) & 8 \\
Regularization ($\lambda$) & 1 & Objective & D-opt ($\log\det$) \\
Optimizer & SGD & Learning Rate ($\eta$) & $10^{-3}$ \\
Baseline & Batch-mean & Iterations & 200 \\
\bottomrule
\end{tabular}
\end{table*}

\subsection{Common Experimental Components}
\label{sec:appendix_exp_details_common}

\paragraph{Environment: Prompt Construction MDP}
\label{sec:appendix_exp_details_env}
The environment is modeled as a finite-horizon Markov Decision Process (MDP) designed to simulate the construction of textual prompts.
\begin{itemize}
    \item \textbf{States ($\mS$):} States $s \in \{0, 1, \dots, H-1\}$ directly correspond to the current timestep or depth in the prompt construction process.
    \item \textbf{Horizon ($H$):} The horizon corresponds to the number of vocabulary files used for sequential token selection.
    \item \textbf{Actions ($\mA$):} Actions are indices corresponding to unique "design tokens" extracted from the vocabulary files. These tokens represent semantic concepts (e.g., "Man sitting", "artistic", "happy").
    \item \textbf{Vocabulary:} The vocabulary is sourced from $H$ files: `bases.txt`, `ambient.txt`, `style.txt`, `composition.txt`, `lighting.txt`, `detail.txt`. The selection of tokens is structured by timestep. At $s=0$, only "base" concepts are allowed. For $s>0$, tokens from other categories are used.
    \item \textbf{Transitions ($P$):} Deterministic. Selecting a token at state $s$ transitions to state $s+1$.
    \item \textbf{Feature Representation for OED ($\phi(a)$):} The features for design tokens are their 768-dimensional, normalized CLIP text embeddings (`ViT-L/14`).
\end{itemize}

\paragraph{Preference Model and Estimation}
\label{sec:appendix_exp_details_pref_model}
\begin{itemize}
    \item \textbf{Feedback Model:} We use the Truncated Trajectory Feedback model (Section \ref{sec:feedback_models}) for both experiments. At each timestep $h$, a preference is given over $K$ partial prompts $\{\tau_{1}[0:h], \dots, \tau_{K}[0:h]\}$.
    \item \textbf{Features for Estimation ($\phi(\text{partial prompt})$):} The feature vector for a partial prompt is its normalized CLIP text embedding (`ViT-L/14`).
\end{itemize}

\paragraph{Experimental Design (OED)}
\label{sec:appendix_exp_details_oed}
The experimental design objective is to select policies that maximize information about $\theta$.
\begin{itemize}
    \item \textbf{Scalarization Criterion $s(\cdot)$:} We use an A-optimality variant, $s(I_{total,reg}) = -\text{Tr}\left(V (I_{total,reg})^{-1}\right)$, where $I_{total,reg}$ is the regularized total approximate FIM from Eq. \ref{eq:final_objective}.
    \item \textbf{$V$ Matrix Construction:} The matrix $V=C^T C$ is constructed from differences between feature embeddings of tokens from the same thematic category (excluding 'bases.txt'), i.e., $c_k^T = (\phi(a_i) - \phi(a_j))^T$. This V-design criterion directly targets the precision of estimated preference differences, which is essential for learning an effective ranking model. The full construction is detailed in the original appendix text.
    \item \textbf{Optimization:} The Convex-RL procedure (Algorithm \ref{alg:ed_pbrl}) is used to solve the design problem.
    \item \textbf{Computational Cost:} The one-time design optimization for a vocabulary of approximately 5000 tokens takes around 10 minutes on a single NVIDIA A100 GPU.
\end{itemize}


\subsection{Synthetic Ground Truth Model Experiments: Setup and Metrics}
\label{sec:appendix_exp_details_synthetic}

\paragraph{Ground Truth Scorer Models}\label{sec:appendix_exp_details_gt_scorer}
For the synthetic experiments, we simulate user preferences using three distinct ground truth (GT) scorer models. Each is represented by a weight vector $\theta^* \in \mathbb{R}^d$ constructed by taking the normalized CLIP text embedding of a descriptive sentence:
\begin{itemize}
    \item \textbf{Sunny GT Model ($\theta^*_{sunny}$):} From \texttt{CLIP("An image with warm colors depicting bright sunshine")}.
    \item \textbf{Medieval GT Model ($\theta^*_{medieval}$):} From \texttt{CLIP("An image with ancient kingdom depicting medieval times")}.
    \item \textbf{Technological GT Model ($\theta^*_{technological}$):} From \texttt{CLIP("An image with advanced technologies depicting futuristic style")}.
\end{itemize}
The GT vector $\theta^*$ is used to simulate user choices and serves as the ground truth for evaluation.

\paragraph{Evaluation Metrics}\label{sec:appendix_exp_details_metrics}
\begin{itemize}
    \item \textbf{Cosine Error:} $1 - \text{cosine\_similarity}(\hat{\theta}, \theta^*)$. Measures the angular deviation between the estimated preference vector and the ground truth $\theta^*$.
    \item \textbf{Preference Prediction Error:} The fraction of pairs where $\hat{\theta}$'s prediction mismatches the GT's preference on prompts generated exclusively from the held-out testing vocabulary.
\end{itemize}

\subsection{LLM-Simulated Preference Experiment: Setup and Metrics}
\label{sec:appendix_exp_details_human}

\paragraph{Setup}
To enable systematic evaluation across many experimental conditions, we use GPT-4.1-mini as a simulated preference oracle. The LLM receives the identical questionnaire-style queries that would be shown to human participants: at each decision point, it sees four candidate images and a style description, and must select the image that best matches the specified style. This approach enables comprehensive evaluation across multiple regularization strengths ($\lambda \in \{0.1, 1, 10, 100\}$), training set sizes ($\{10, 20, 30, 40, 50\}$ episodes), and ten aesthetic styles---a scale that would be prohibitively expensive with human participants.

\paragraph{LLM Query Format}
The LLM receives the following prompt structure for each preference query:
\begin{quote}
\small
\begin{verbatim}
You are an art reviewer.
Follow the style guidance below
and choose the single best preference label.
Return only a strict JSON object
in the format {"preference": "<label>"}.
Do not include explanations or additional fields.

Style guidance:
[STYLE DESCRIPTION]
\end{verbatim}
\end{quote}
The style description (e.g., ``Futuristic style with advanced technologies'') is substituted for each of the ten target styles. The four candidate images are presented, and the LLM returns its preference.

\paragraph{Cross-Validation Protocol}
For robust evaluation, we use 3-fold cross-validation with rotating test windows. With 60 total episodes per style-algorithm pair, each fold uses 50 episodes for training and 10 for testing. The test window rotates across folds: fold 0 tests on episodes 50--59, fold 1 on episodes 40--49, and fold 2 on episodes 30--39.

\paragraph{Evaluation Metric}
\begin{itemize}
    \item \textbf{Hold-out Preference Accuracy:} We measure how well the learned model predicts the LLM's choices on unseen data. This is the percentage of times that the preference predicted by $\hat{\theta}$ (i.e., $\argmax_q \hat{\theta}^\top \phi(\tau_q[1:h])$) matches the actual choice made by the LLM on the 10 held-out test episodes. With a horizon of $H=6$, this evaluation is performed over $10 \times 6 = 60$ preference decisions per fold, aggregated across 3 folds and 10 styles.
\end{itemize}

\subsection{Full Numerical and Qualitative Results}
\label{sec:appendix_full_results}

This section provides the full set of results for all experiments.

\paragraph{Synthetic Experiment Results}
\begin{figure*}[htbp]
\centering
\subfigure[Cosine Error (Sunny GT)]{\includegraphics[width=0.48\textwidth]{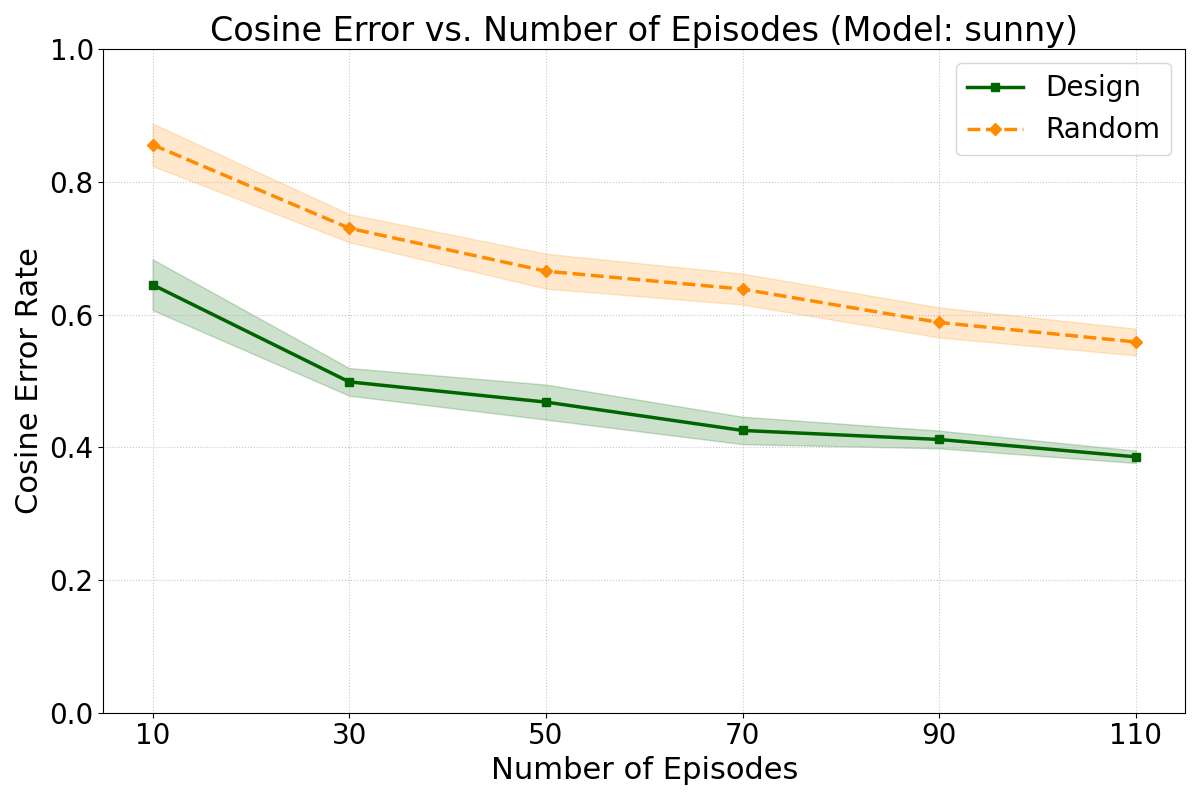}\label{fig:cos_sunny_appendix}}
\hfill
\subfigure[Preference Error (Sunny GT)]{\includegraphics[width=0.48\textwidth]{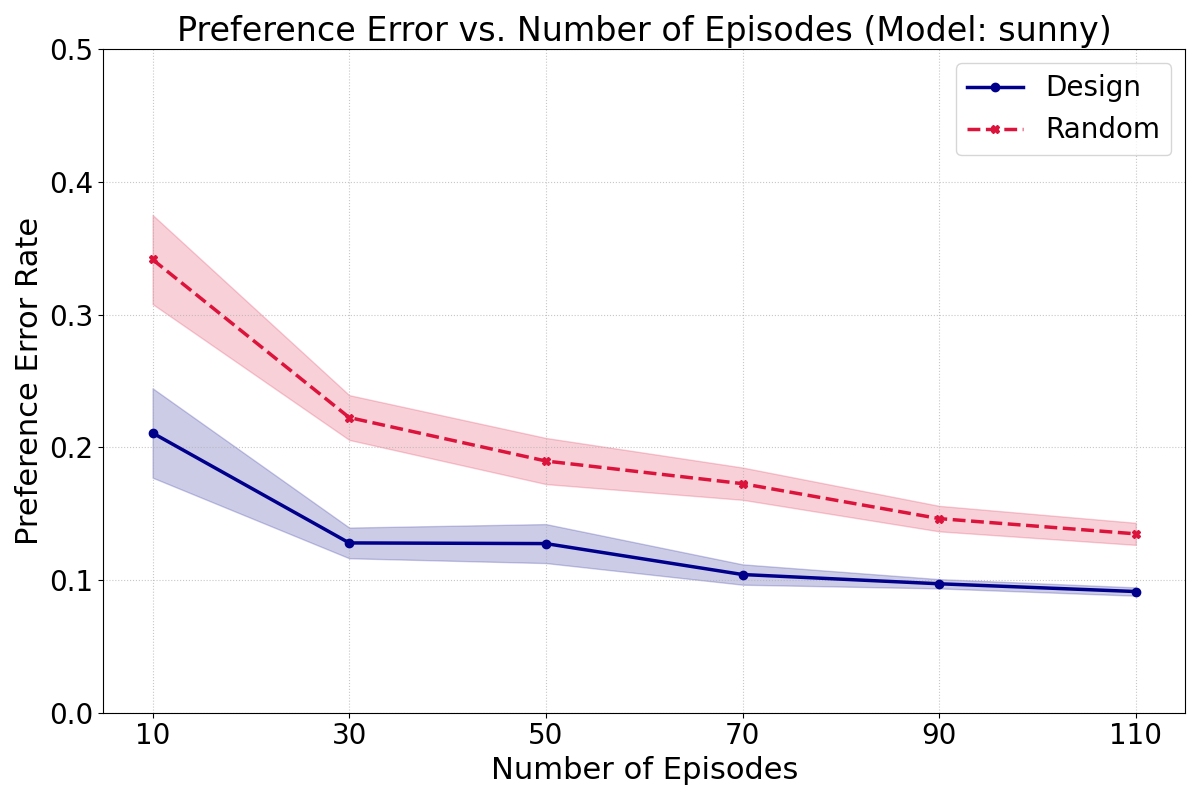}\label{fig:pref_sunny_appendix}}

\vspace{0.5em} 
\subfigure[Cosine Error (Medieval GT)]{\includegraphics[width=0.48\textwidth]{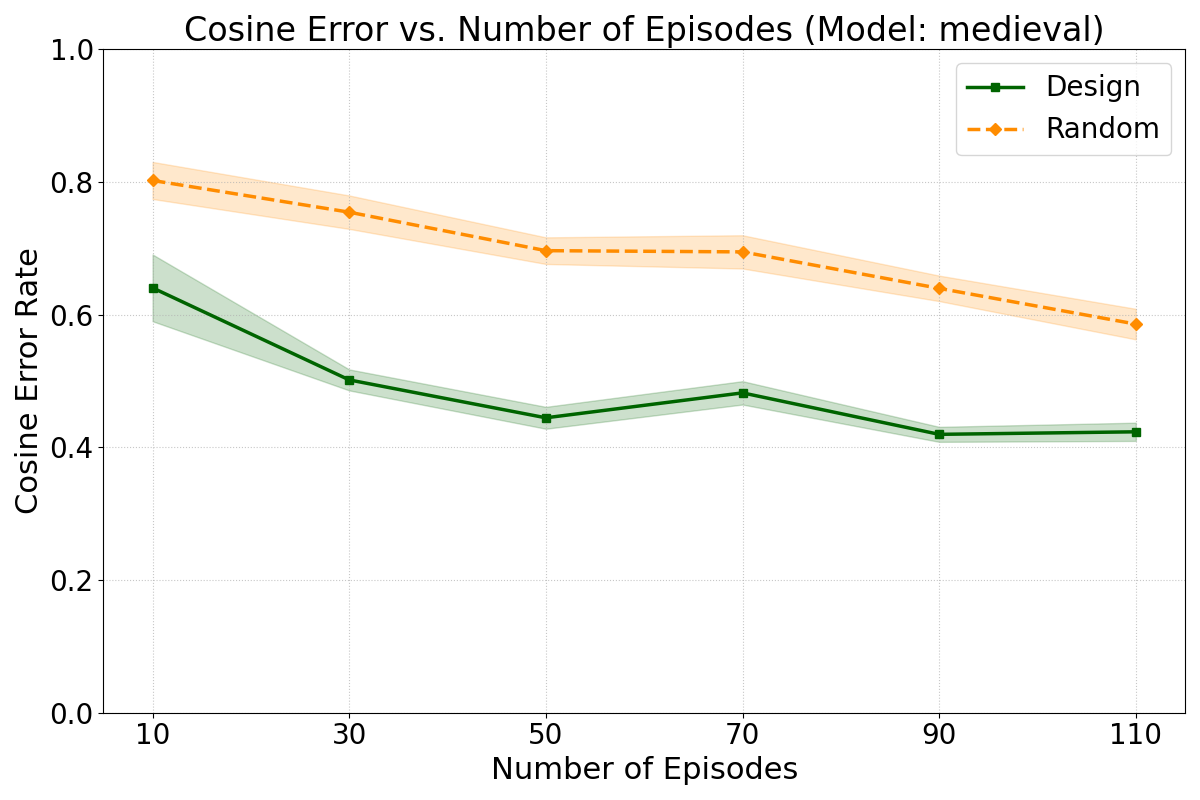}\label{fig:cos_medieval_appendix}}
\hfill
\subfigure[Preference Error (Medieval GT)]{\includegraphics[width=0.48\textwidth]{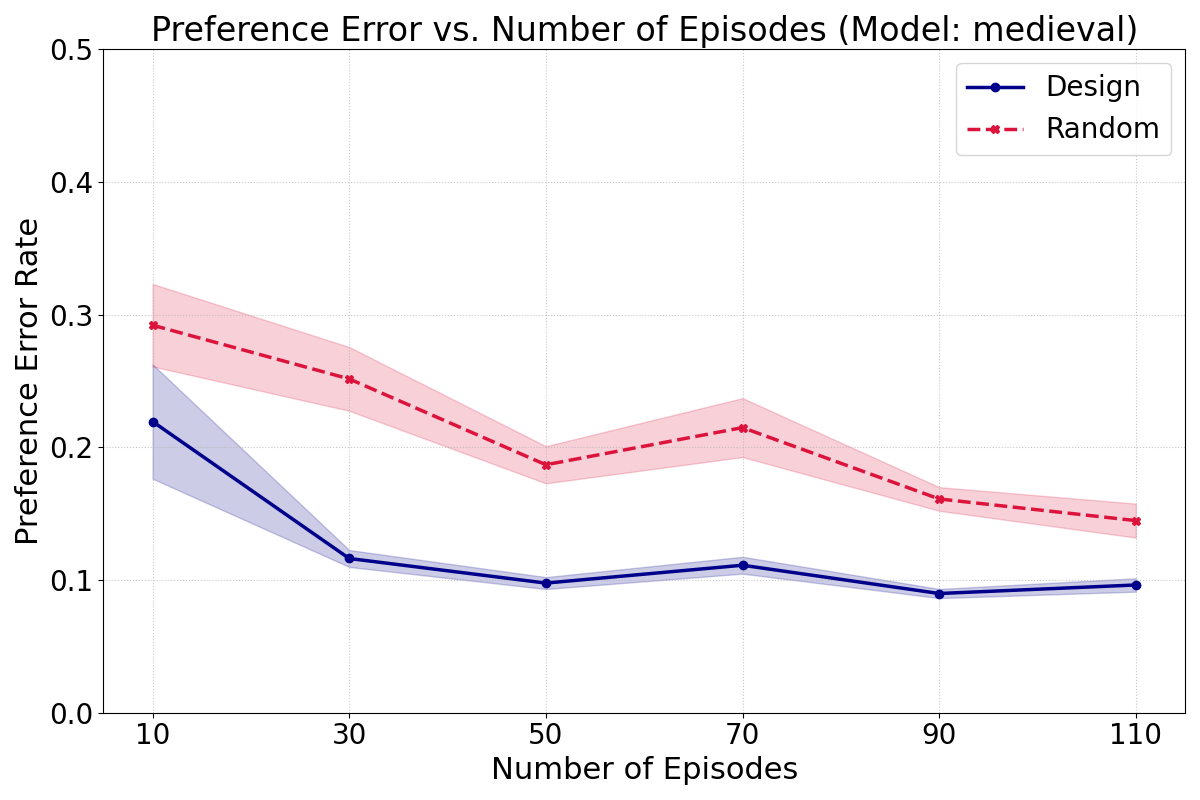}\label{fig:pref_medieval_appendix}}

\vspace{0.5em} 
\subfigure[Cosine Error (Technological GT)]{\includegraphics[width=0.48\textwidth]{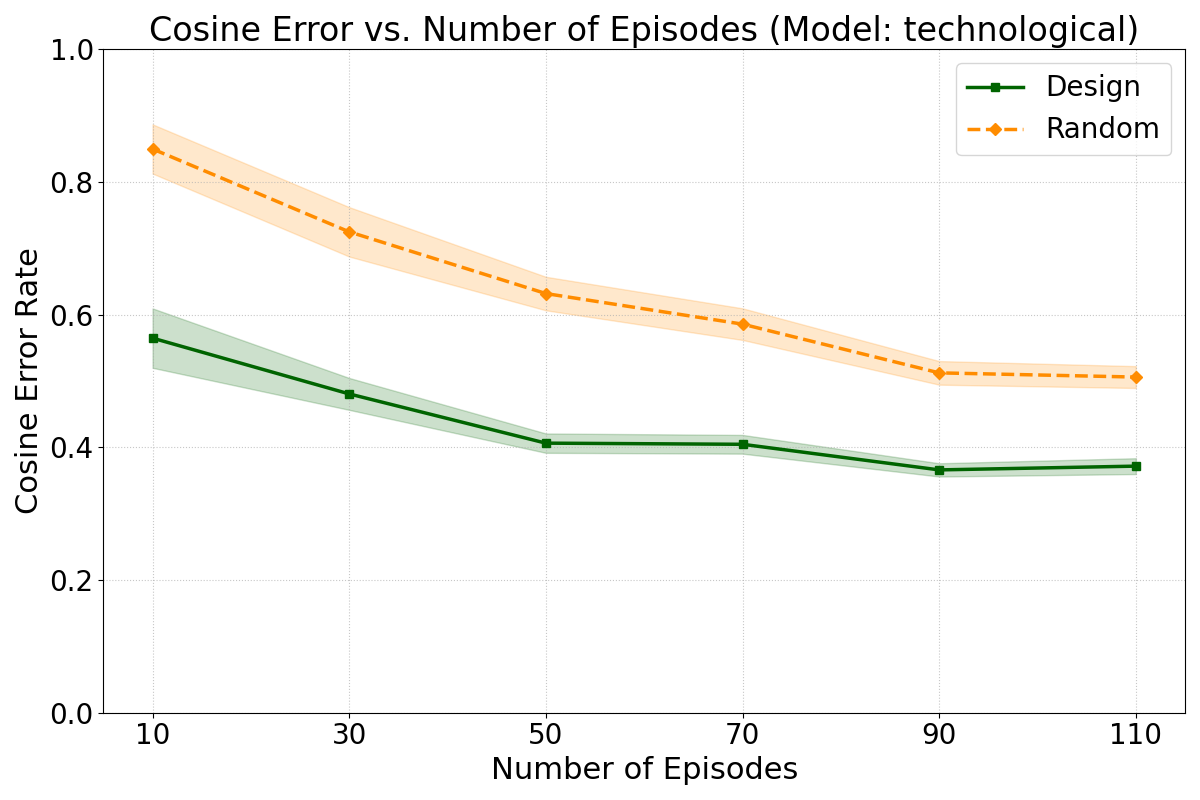}\label{fig:cos_tech_appendix}}
\hfill
\subfigure[Preference Error (Technological GT)]{\includegraphics[width=0.48\textwidth]{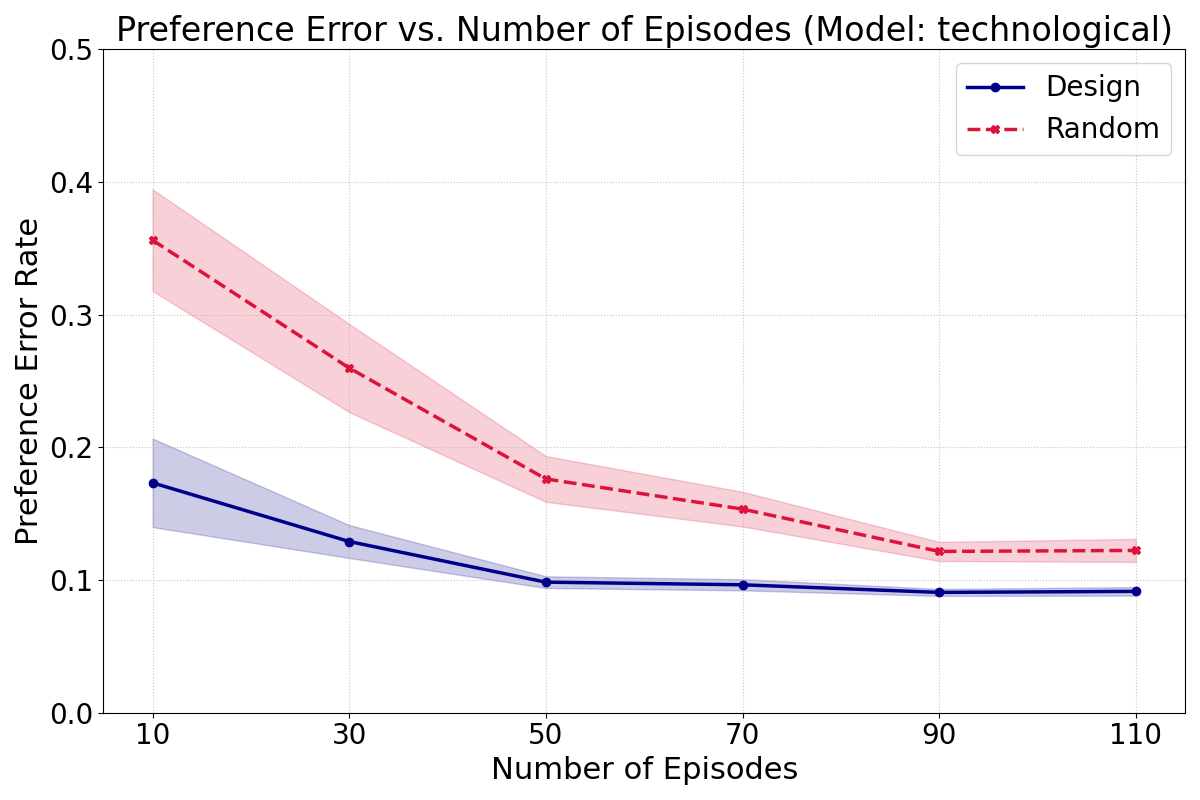}\label{fig:pref_tech_appendix}}
\caption{Performance of ED-PBRL on Sunny, Medieval, and Technological synthetic Ground Truth (GT) models. For each GT model, we plot the Cosine Error (left column) and Preference Prediction Error (right column) against the number of interaction episodes. Results are averaged over N=25 independent runs, and the shaded regions represent the standard error of the mean. The Sunny GT model results are also shown in the main paper (Figure \ref{fig:synthetic_results_all}).}
\label{fig:appendix_numerical_all_gt_models}
\end{figure*}

\paragraph{Qualitative Results Summary (Synthetic)}
For each Ground Truth (GT) model (Sunny, Medieval, Technological), Figures \ref{fig:appendix_summary_sunny}, \ref{fig:appendix_summary_medieval}, and \ref{fig:appendix_summary_technological} show a visual comparison of the prompts generated by ED-PBRL (Design) and Random exploration. The figures correspond to the median cosine error run (out of 25 seeds) after $T=110$ feedback episodes with $K=4$ policies. To test generalization, the personalized prompts are constructed by adding style tokens selected from the held-out test vocabulary to a base prompt.

\begin{figure*}[htbp]
\centering
\subfigure[ED-PBRL (Design) - Top Prompts for Sunny GT]{\makebox[\textwidth][c]{\includegraphics[width=0.99\textwidth]{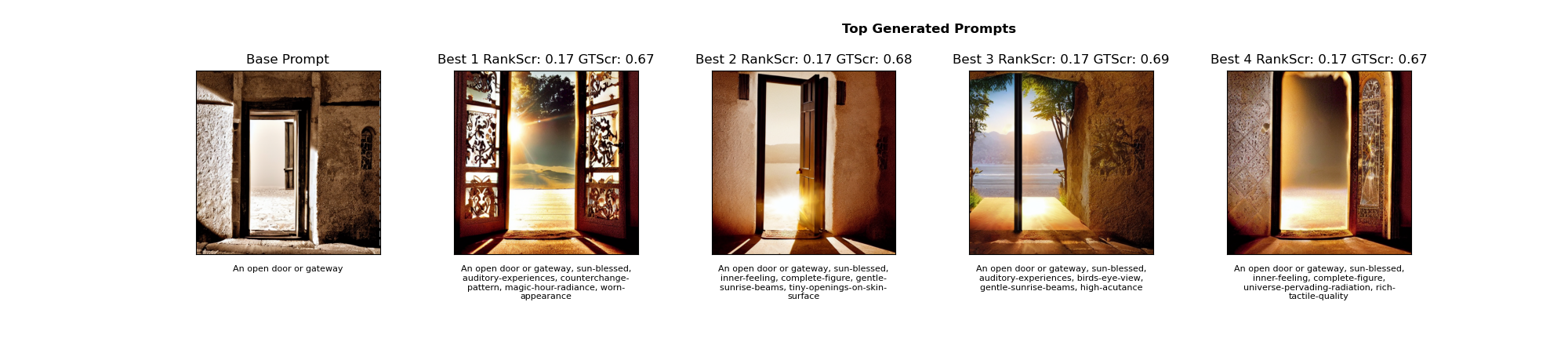}}\label{fig:summary_sunny_dsn_appendix}}
\\
\subfigure[Random Exploration - Top Prompts for Sunny GT]{\makebox[\textwidth][c]{\includegraphics[width=0.99\textwidth]{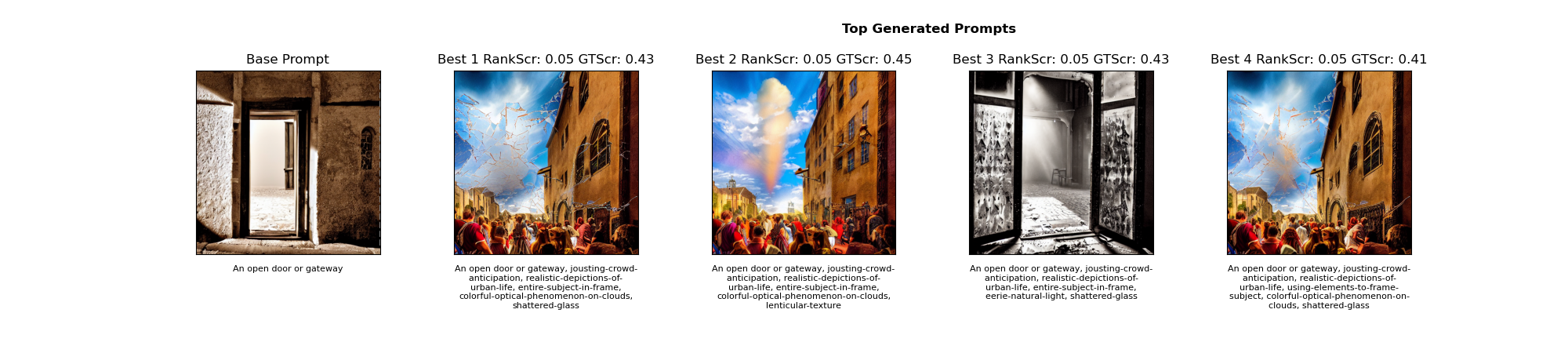}}\label{fig:summary_sunny_rand_appendix}}
\caption{Full summary of top generated prompts for the Sunny GT Model. The images compare prompts generated via ED-PBRL (Design) and Random exploration. Each personalized image is annotated with its estimated score from the learned model (RankScore) and its true score from the ground truth model (GTScore), where a higher GTScore indicates better alignment with the target 'Sunny' aesthetic. Note that ED-PBRL consistently finds prompts that yield higher GT Scores, demonstrating its superior personalization capability.}
\label{fig:appendix_summary_sunny}
\end{figure*}

\begin{figure*}[htbp]
\centering
\subfigure[ED-PBRL (Design) - Top Prompts for Medieval GT]{\makebox[\textwidth][c]{\includegraphics[width=0.99\textwidth]{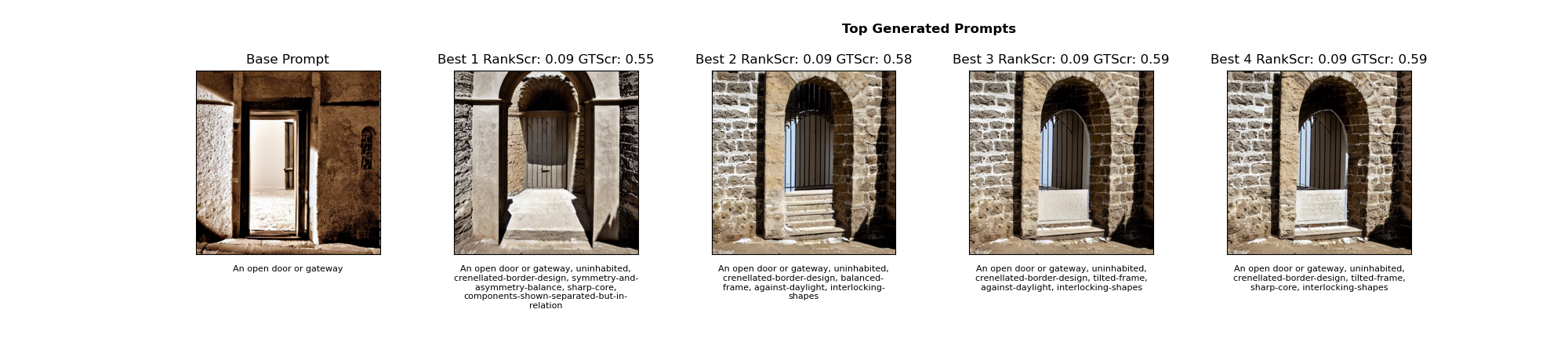}}\label{fig:summary_medieval_dsn_appendix}}
\\
\subfigure[Random Exploration - Top Prompts for Medieval GT]{\makebox[\textwidth][c]{\includegraphics[width=0.99\textwidth]{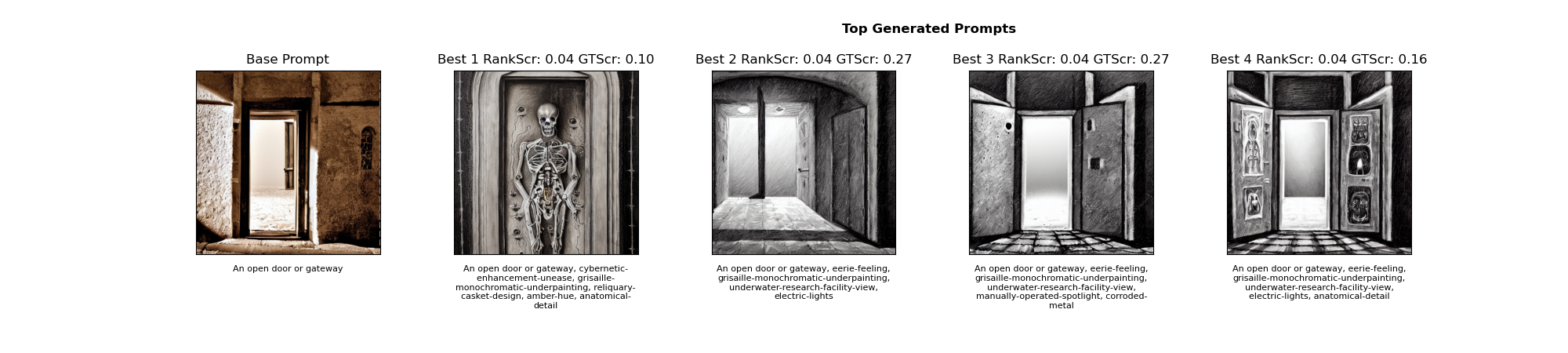}}\label{fig:summary_medieval_rand_appendix}}
\caption{Full summary of top generated prompts for the Medieval GT Model. The images compare prompts generated via ED-PBRL (Design) and Random exploration. Each personalized image is annotated with its estimated score from the learned model (RankScore) and its true score from the ground truth model (GTScore), where a higher GTScore indicates better alignment with the target 'Medieval' aesthetic. Note that ED-PBRL consistently finds prompts that yield higher GT Scores, demonstrating its superior personalization capability.}
\label{fig:appendix_summary_medieval}
\end{figure*}

\begin{figure*}[htbp]
\centering
\subfigure[ED-PBRL (Design) - Top Prompts for Technological GT]{\makebox[\textwidth][c]{\includegraphics[width=0.99\textwidth]{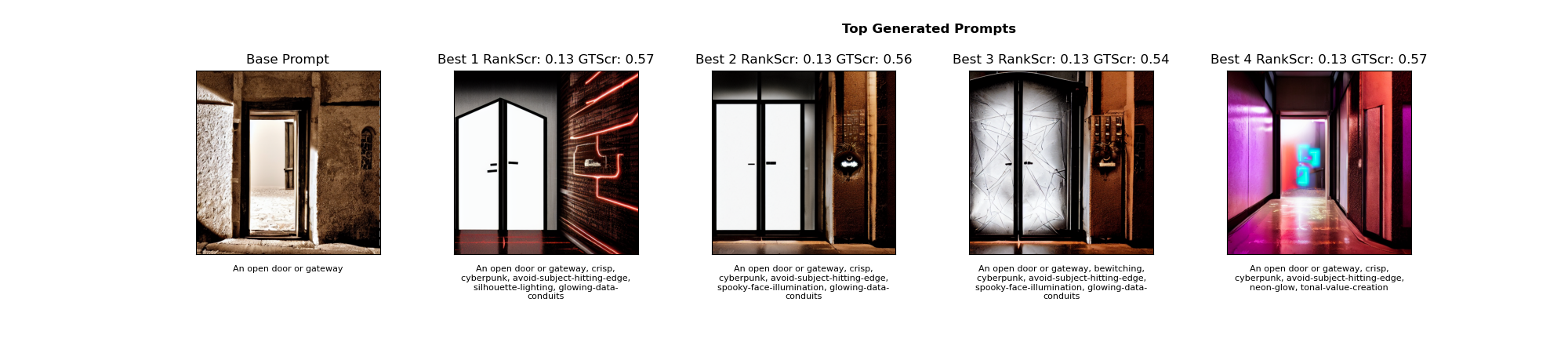}}\label{fig:summary_technological_dsn_appendix}}
\\
\subfigure[Random Exploration - Top Prompts for Technological GT]{\makebox[\textwidth][c]{\includegraphics[width=0.99\textwidth]{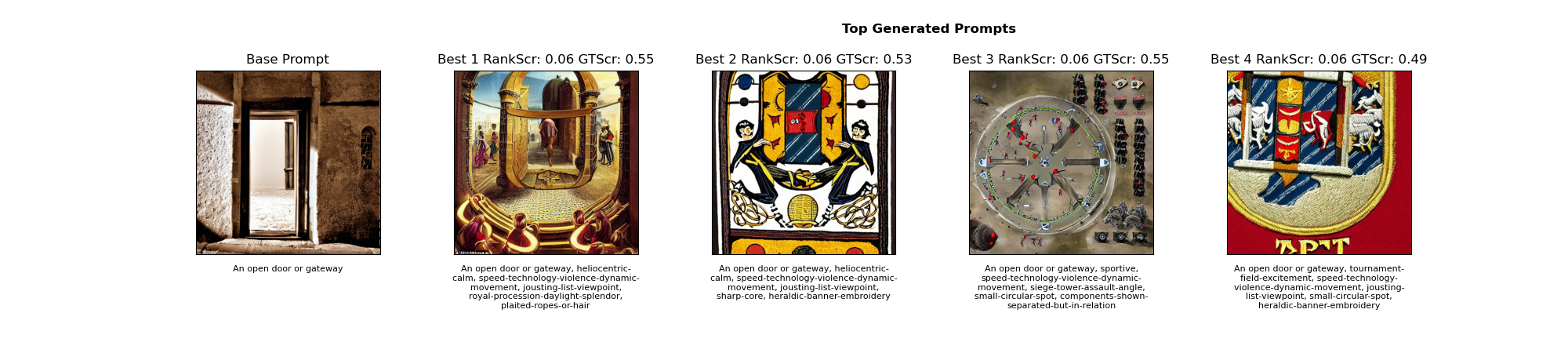}}\label{fig:summary_technological_rand_appendix}}
\caption{Full summary of top generated prompts for the Technological GT Model. The images compare prompts generated via ED-PBRL (Design) and Random exploration. Each personalized image is annotated with its estimated score from the learned model (RankScore) and its true score from the ground truth model (GTScore), where a higher GTScore indicates better alignment with the target 'Technological' aesthetic. Note that ED-PBRL consistently finds prompts that yield higher GT Scores, demonstrating its superior personalization capability.}
\label{fig:appendix_summary_technological}
\end{figure*}

\FloatBarrier
\paragraph{Qualitative Results from Preliminary Human Study}
\label{sec:appendix_full_hf_results}
To illustrate how learned preference models generalize to new prompts, we present qualitative results from a preliminary human study. A participant whose stated preference was for ``foresty images with a lot of green, nature and landscapes'' provided feedback during the exploration phase. After training, we tested the learned models ($\hat{\theta}_{\text{ED-PBRL}}$ and $\hat{\theta}_{\text{Random}}$) on four new base prompts chosen by the participant. The following figures show the top-ranked personalized images generated by each model.

\begin{figure*}[htbp]
\centering
\subfigure[ED-PBRL (Design) - Top Prompts for "Reflecting last year"]{\makebox[\textwidth][c]{\includegraphics[width=0.99\textwidth]{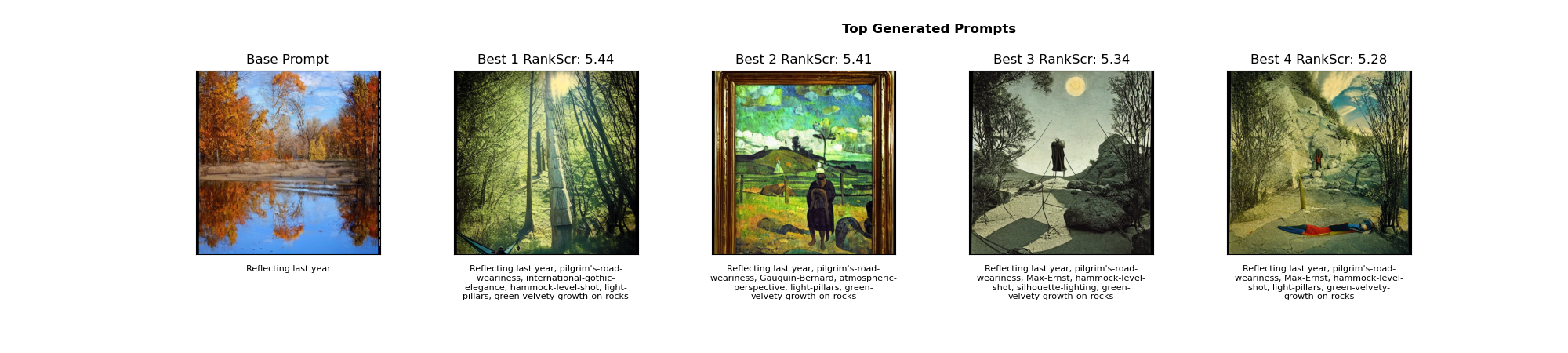}}\label{fig:summary_hf_reflecting_dsn_appendix}}
\\
\subfigure[Random Exploration - Top Prompts for "Reflecting last year"]{\makebox[\textwidth][c]{\includegraphics[width=0.99\textwidth]{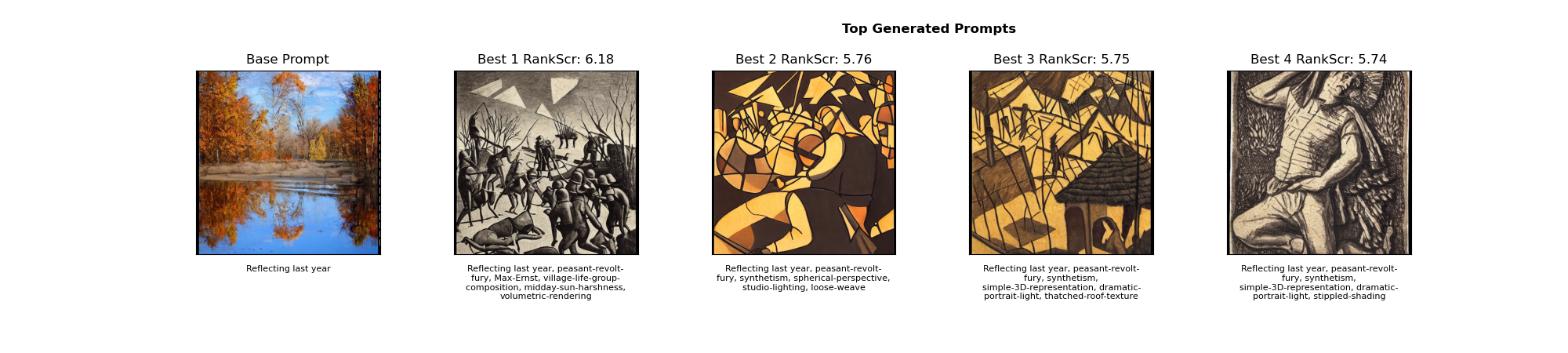}}\label{fig:summary_hf_reflecting_rand_appendix}}
\caption{Top generated prompts for the base prompt ``Reflecting last year'' from the preliminary human study. Images are ranked by the score from the respective learned models (RankScore).}
\label{fig:appendix_summary_hf_reflecting}
\end{figure*}

\begin{figure*}[htbp]
\centering
\subfigure[ED-PBRL (Design) - Top Prompts for "A novice boxer"]{\makebox[\textwidth][c]{\includegraphics[width=0.99\textwidth]{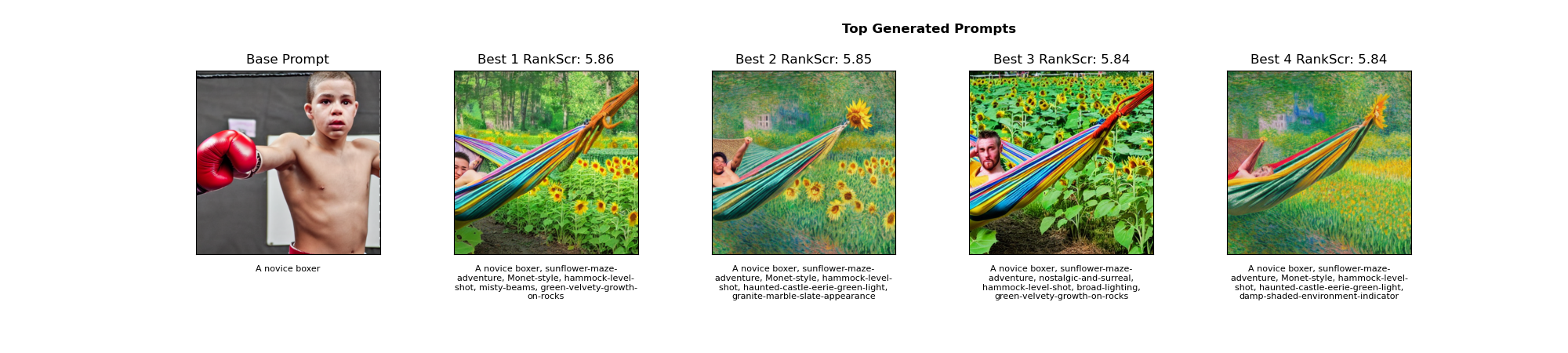}}\label{fig:summary_hf_boxer_dsn_appendix}}
\\
\subfigure[Random Exploration - Top Prompts for "A novice boxer"]{\makebox[\textwidth][c]{\includegraphics[width=0.99\textwidth]{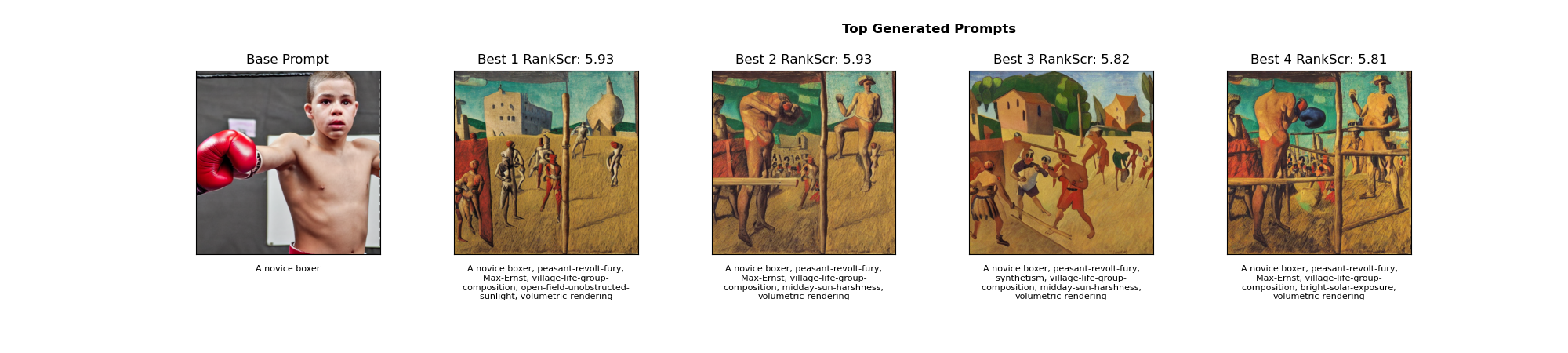}}\label{fig:summary_hf_boxer_rand_appendix}}
\caption{Top generated prompts for the base prompt ``A novice boxer'' from the preliminary human study.}
\label{fig:appendix_summary_hf_boxer}
\end{figure*}

\begin{figure*}[htbp]
\centering
\subfigure[ED-PBRL (Design) - Top Prompts for "Half open window"]{\makebox[\textwidth][c]{\includegraphics[width=0.99\textwidth]{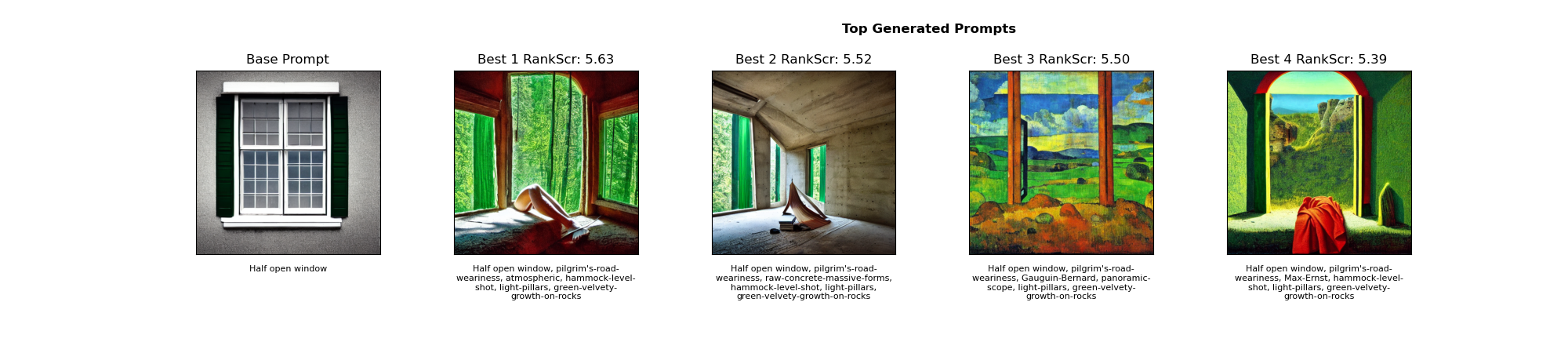}}\label{fig:summary_hf_window_dsn_appendix}}
\\
\subfigure[Random Exploration - Top Prompts for "Half open window"]{\makebox[\textwidth][c]{\includegraphics[width=0.99\textwidth]{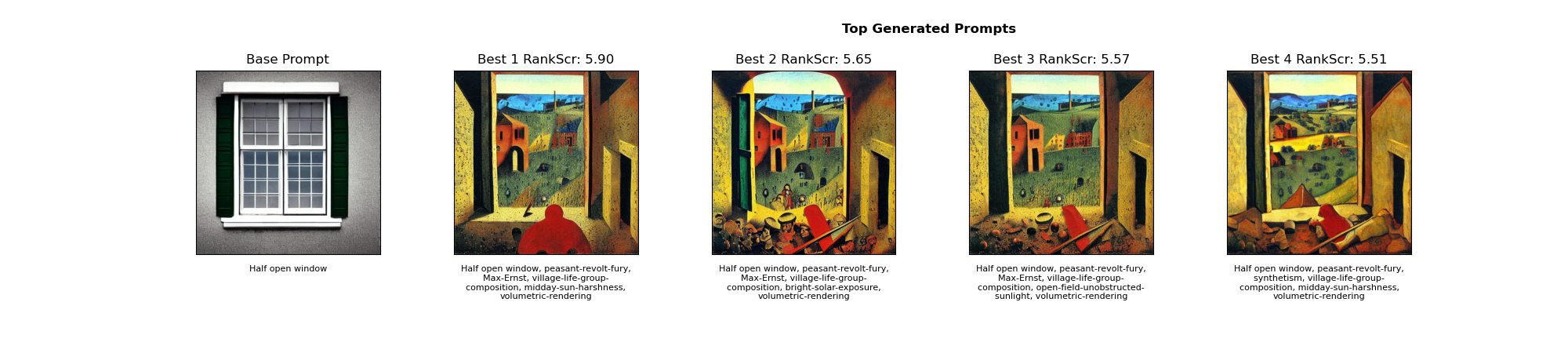}}\label{fig:summary_hf_window_rand_appendix}}
\caption{Top generated prompts for the base prompt ``Half open window'' from the preliminary human study.}
\label{fig:appendix_summary_hf_window}
\end{figure*}

\begin{figure*}[htbp]
\centering
\subfigure[ED-PBRL (Design) - Top Prompts for "A family vibing"]{\makebox[\textwidth][c]{\includegraphics[width=0.99\textwidth]{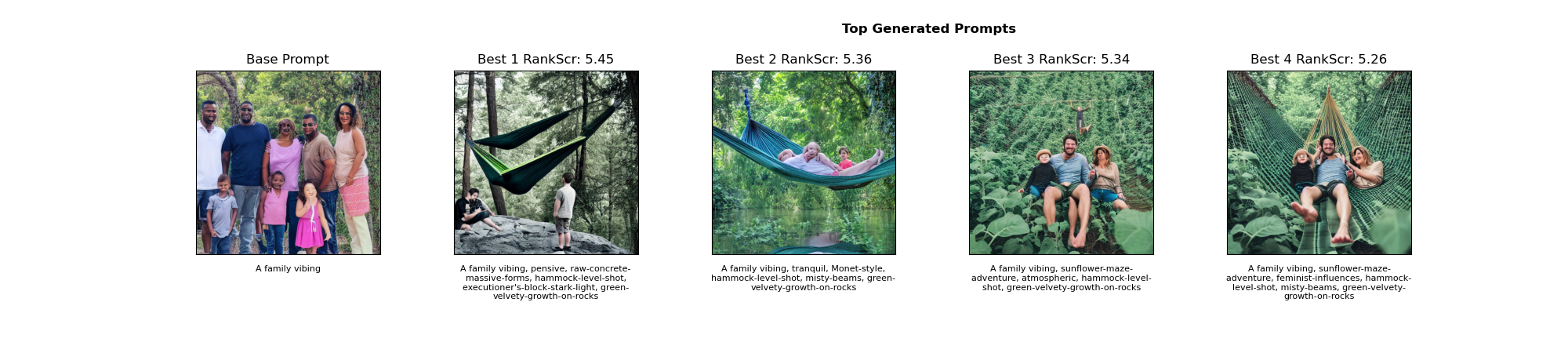}}\label{fig:summary_hf_family_dsn_appendix}}
\\
\subfigure[Random Exploration - Top Prompts for "A family vibing"]{\makebox[\textwidth][c]{\includegraphics[width=0.99\textwidth]{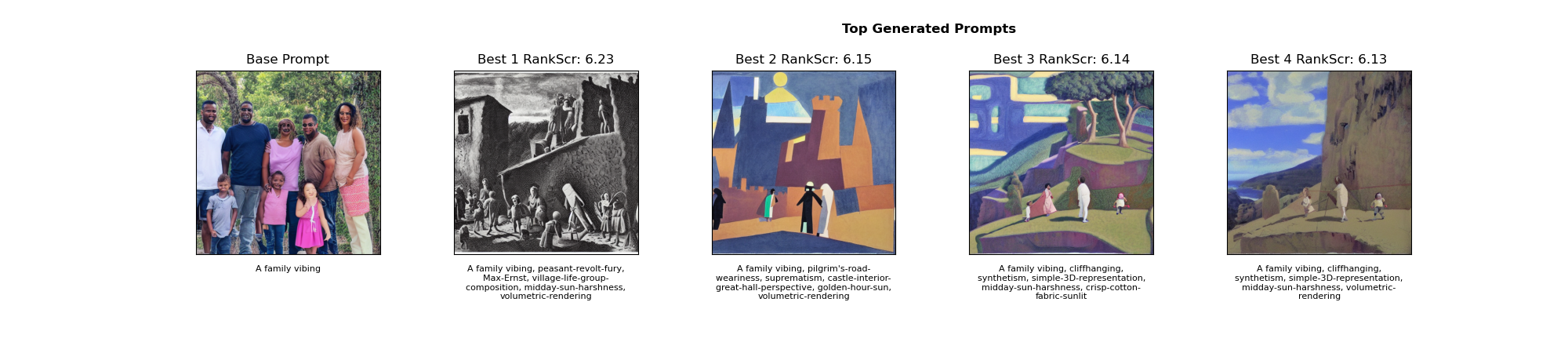}}\label{fig:summary_hf_family_rand_appendix}}
\caption{Top generated prompts for the base prompt ``A family vibing'' from the preliminary human study.}
\label{fig:appendix_summary_hf_family}
\end{figure*}

\FloatBarrier
\paragraph{Per-Style Panels (All Ten Styles)}
\label{sec:appendix_full_per_style}
Figure~\ref{fig:llm_all_styles_lambda100} shows per-style held-out accuracy curves for all ten styles (futuristic, sunny, forest, landscape, ancient, noir, watercolor, cyberpunk, minimalist, medieval) at $\lambda=100$. Each subplot reports mean held-out accuracy across 3 cross-validation folds for ED-PBRL and Random; curves are shown at training sizes 10/20/30/40/50 episodes with 10 episodes held out for testing.

\begin{figure*}[htbp]
\centering
\includegraphics[width=0.98\textwidth]{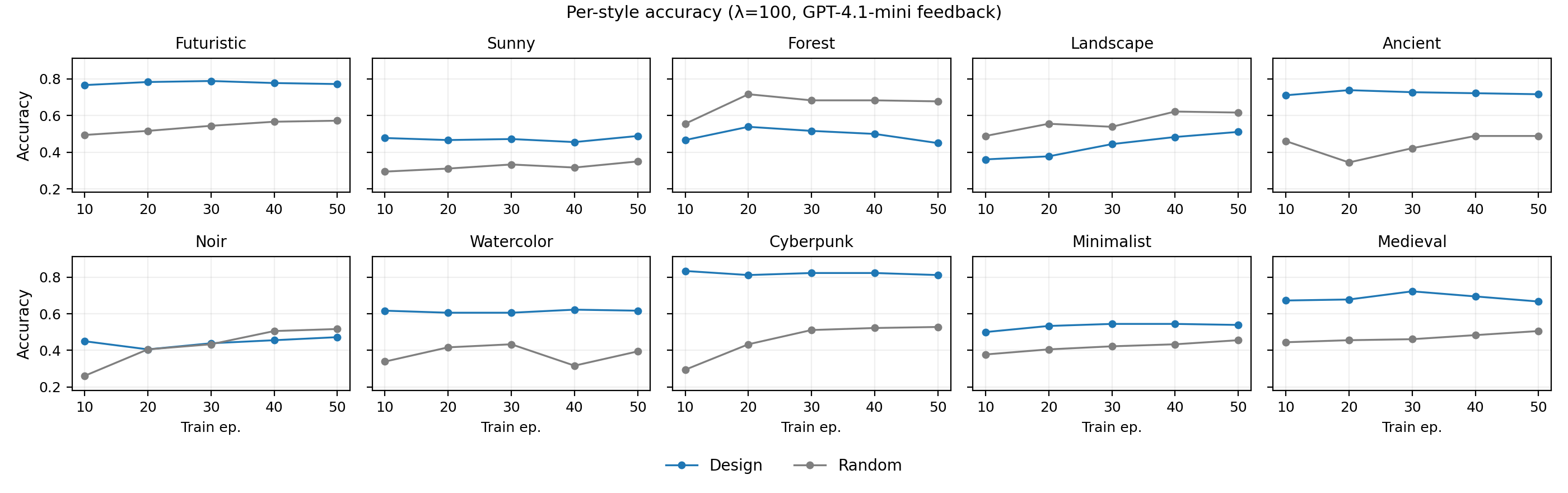}
\caption{LLM-simulated preference study, all ten styles, $\lambda=100$: per-style held-out accuracy (10 test episodes). ED-PBRL consistently outperforms random exploration across most styles, with the advantage being most pronounced when training data is limited.}
\label{fig:llm_all_styles_lambda100}
\end{figure*}

\paragraph{Comprehensive Results Table}
Table~\ref{tab:hf_all_results} lists per-style held-out accuracies at $\lambda=100$ for both ED-PBRL (Design) and Random exploration, their differences in percentage points, and the number of hold-out comparisons. Results are shown for 50 training episodes with 10 held-out test episodes, averaged across 3 cross-validation folds.\\
\begin{table}[t]
\centering
\small
\caption{LLM-simulated preference held-out accuracy by style at $\lambda=100$ and 50 training episodes. Accuracy values are in percentage points; Diff is Design minus Random. Hold-out consists of 10 test episodes (60 comparisons), averaged across 3 cross-validation folds.}
\begin{tabular}{l c c c}
\toprule
Style & Design (\%) & Random (\%) & Diff (pp) \\
\midrule
Futuristic & 77.2 & 57.2 & +20.0 \\
Sunny & 48.9 & 35.0 & +13.9 \\
Forest & 45.0 & 67.8 & $-$22.8 \\
Landscape & 51.1 & 61.7 & $-$10.6 \\
Ancient & 71.7 & 48.9 & +22.8 \\
Noir & 47.2 & 51.7 & $-$4.4 \\
Watercolor & 61.7 & 39.4 & +22.2 \\
Cyberpunk & 81.1 & 52.8 & +28.3 \\
Minimalist & 53.9 & 45.6 & +8.3 \\
Medieval & 66.7 & 50.6 & +16.1 \\
\midrule
\textbf{Average} & \textbf{60.4} & \textbf{51.1} & \textbf{+9.4} \\
\bottomrule
\end{tabular}
\label{tab:hf_all_results}
\end{table}
\FloatBarrier
\section{Appendix: Proofs, Derivations and Further Results}
\label{sec:appendix_detailed_results}

\subsection{Relationship between Estimation Error and Fisher Information}
\label{sec:appendix_mse_derivation}

We formalize the link between estimation error and information used in the main text. For the regularized MLE $\theta_\lambda$, we show that the Mean Squared Error (MSE) matrix is controlled by the inverse of the regularized Fisher Information at the true parameter $\theta^*$:
\[ \E\big[(\theta_\lambda-\theta^*)(\theta_\lambda-\theta^*)^\top\big] \preceq C^{\lambda}_{\theta^*} \cdot I_\lambda(\theta^*)^{-1}. \]
The proof relies on a self-concordant analysis of the (regularized) log-likelihood to relate the random Hessian to $I_\lambda(\theta^*)$ within a local neighborhood of $\theta^*$. We state the precise assumptions next and then provide the full proof.

\subsubsection{Assumptions for the MSE Bound}

The derivation of the bound relies on two key assumptions. We state them formally here before proceeding with the proof.

\begin{assumption}[Uniform Local Consistency]
\label{assum:local_consistency}
For a given experimental design (policies, $T$, $\lambda$), we assume the regularized MLE $\theta_\lambda$ is close to the true parameter $\theta^*$. Specifically, we assume there exists a constant $r^{\lambda}_{\theta^*} \in [0, 1)$, dependent on the problem parameters but not on the random data realization, that uniformly bounds the distance between $\theta_\lambda$ and $\theta^*$ in the local norm defined by the FIM at $\theta^*$:
\[ \|\theta_\lambda - \theta^*\|_{I(\theta^*)} \equiv \sqrt{(\theta_\lambda - \theta^*)^T I(\theta^*) (\theta_\lambda - \theta^*)} \le r^{\lambda}_{\theta^*} \]
This assumption is a prerequisite for our finite-sample analysis. It allows us to define a data-independent constant $C^{\lambda}_{\theta^*}=(1-r^{\lambda}_{\theta^*})^{-4}$ that can be moved outside the expectation in the proof, simplifying the analysis. Standard large-sample theory for MLE suggests that for a sufficiently large number of samples, this condition is expected to hold with high probability.
\end{assumption}

\begin{assumption}[Bounded True Parameter]
\label{assum:bounded_theta}
The squared $\ell_2$-norm of the true parameter vector $\theta^*$ is bounded relative to the regularization strength $\lambda$:
\[ \|\theta^*\|_2^2 \le \frac{1}{\lambda} \]
This assumption, which is standard in the analysis of ridge regression and regularized estimators \cite{mutny2024modern}, constrains the magnitude of the true parameter relative to the regularization strength. It ensures that the bias introduced by the $\ell_2$ penalty does not overwhelm the information-related terms in the analysis. In matrix terms, this implies $\lambda^2 \theta^* (\theta^*)^T \preceq \lambda I_d$.
\end{assumption}

With these conditions explicitly stated, we can now present the formal theorem and its proof.

\begin{theorem}[Upper Bound on MSE]
\label{thm:mse_bound_formal}
Let $\theta_\lambda$ be the regularized maximum likelihood estimator and $\theta^*$ be the true parameter. Under Assumptions \ref{assum:local_consistency} and \ref{assum:bounded_theta}, the Mean Squared Error (MSE) matrix of the estimator is bounded by:
\begin{align*}
&\E[(\theta_\lambda - \theta^*)(\theta_\lambda - \theta^*)^T] \preceq C^{\lambda}_{\theta^*} \cdot I_\lambda(\theta^*)^{-1}
\end{align*}
where $I_\lambda(\theta^*) = I(\theta^*) + \lambda I_d$ is the regularized Fisher Information Matrix at the true parameter, and $C^{\lambda}_{\theta^*} = (1-r^{\lambda}_{\theta^*})^{-4}$ is a constant determined by the radius $r^{\lambda}_{\theta^*}$ from Assumption \ref{assum:local_consistency}.
\end{theorem}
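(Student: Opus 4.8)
The plan is to follow the classical M-estimation route---first-order optimality, a mean-value expansion of the score, and a sandwich of the resulting averaged Hessian---while using self-concordance to make the local Hessian comparison precise. First I would write the stationarity condition for the regularized MLE, $\nabla L(\theta_\lambda) - \lambda \theta_\lambda = 0$, where $L$ is the (unregularized) multinomial log-likelihood summed over the $T\times H$ comparisons. Expanding the score around $\theta^*$ in integral (mean-value) form gives $\theta_\lambda - \theta^* = \bar{H}^{-1} g$, where $g := \nabla L(\theta^*) - \lambda\theta^*$ is the regularized score at the truth and $\bar H := \int_0^1 \big(-\nabla^2 L(\theta^*+u(\theta_\lambda-\theta^*))\big)\,du + \lambda I_d$ is the averaged regularized Hessian, positive definite for the log-concave softmax likelihood. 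Consequently the MSE matrix factors as $(\theta_\lambda-\theta^*)(\theta_\lambda-\theta^*)^\top = \bar H^{-1} g g^\top \bar H^{-1}$, and the whole proof reduces to controlling the two factors $\E[gg^\top]$ and $\bar H^{-1}$.

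The score factor is handled by Bartlett's identities together with Assumption \ref{assum:bounded_theta}. Since $\theta^*$ is the true parameter, the multinomial score is mean-zero with covariance equal to the (data) Fisher information, so $\E[\nabla L(\theta^*)]=0$ and $\Cov(\nabla L(\theta^*)) = I(\theta^*)$, the quantity $T\sum_h I_h(\theta^*)$ from \eqref{eq:expected_fim_main}. Hence $\E[g] = -\lambda\theta^*$ and $\E[gg^\top] = I(\theta^*) + \lambda^2\theta^*(\theta^*)^\top$. Here Assumption \ref{assum:bounded_theta}, in its matrix form $\lambda^2\theta^*(\theta^*)^\top \preceq \lambda I_d$, is exactly what absorbs the deterministic regularization bias into the ridge term, giving the clean bound $\E[gg^\top]\preceq I(\theta^*)+\lambda I_d = I_\lambda(\theta^*)$.

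The Hessian factor is where self-concordance and Assumption \ref{assum:local_consistency} enter. Because the negative multinomial log-likelihood is self-concordant, on the local neighborhood of Assumption \ref{assum:local_consistency} the Hessian is comparable to its value at $\theta^*$; in particular $\bar H \succeq (1-r^\lambda_{\theta^*})^2 I_\lambda(\theta^*)$. The decisive point is that Assumption \ref{assum:local_consistency} makes the radius $r^\lambda_{\theta^*}$ data-independent, so this lower bound holds on every sample path and the resulting constant can be externalized. Writing $B := I_\lambda(\theta^*)^{1/2}\bar H^{-1} I_\lambda(\theta^*)^{1/2}$, the bound on $\bar H$ gives $0 \preceq B \preceq (1-r^\lambda_{\theta^*})^{-2} I_d$, hence $B^2 \preceq (1-r^\lambda_{\theta^*})^{-4} I_d$; conjugating by $I_\lambda(\theta^*)^{-1/2}$ this is exactly $\bar H^{-1} I_\lambda(\theta^*)\, \bar H^{-1} \preceq (1-r^\lambda_{\theta^*})^{-4} I_\lambda(\theta^*)^{-1}$. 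Combining with $\E[gg^\top]\preceq I_\lambda(\theta^*)$ then yields $\E[\bar H^{-1} g g^\top \bar H^{-1}] \preceq (1-r^\lambda_{\theta^*})^{-4} I_\lambda(\theta^*)^{-1} = C^\lambda_{\theta^*}\, I_\lambda(\theta^*)^{-1}$, as claimed.

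The main obstacle is this final combination step, because $\bar H$ is itself random and correlated with $g$ (it depends on $\theta_\lambda$, hence on the labels), so one cannot naively pull $\bar H^{-1}$ outside the expectation, and a crude Cauchy--Schwarz bound would lose a spurious factor of $d$. The clean constant relies on two structural facts I would make explicit: (i) for the softmax with its canonical link the exact Hessian is label-independent---a function of the visited states and choice probabilities only---which decouples the Hessian from the label noise once we condition on the sampled trajectories; and (ii) the data-independence of $r^\lambda_{\theta^*}$ in Assumption \ref{assum:local_consistency}, which turns the self-concordance comparison into a deterministic Loewner inequality valid on every sample path, so the congruence bound $\bar H^{-1} I_\lambda(\theta^*) \bar H^{-1}\preceq (1-r^\lambda_{\theta^*})^{-4} I_\lambda(\theta^*)^{-1}$ may be applied before taking the expectation over $g$. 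Verifying that the softmax loss is self-concordant with the constant implicit in $r^\lambda_{\theta^*}$, and that the neighborhood condition of Assumption \ref{assum:local_consistency} places $\theta_\lambda$ inside the corresponding Dikin ellipsoid, are the remaining routine-but-careful checks.
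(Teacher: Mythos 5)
Your proposal follows the paper's proof essentially step for step: stationarity plus a mean-value expansion of the regularized score, a self-concordance comparison of the Hessian with $I_\lambda(\theta^*)$ under Assumption \ref{assum:local_consistency}, and the information-matrix equality plus Assumption \ref{assum:bounded_theta} to get $\E[gg^\top] = I(\theta^*) + \lambda^2\theta^*(\theta^*)^\top \preceq I_\lambda(\theta^*)$. Two of your choices are in fact \emph{more} careful than the paper's. First, you use the integral (averaged-Hessian) form of the expansion; the paper invokes a vector-valued mean value theorem with a single intermediate point $\theta_\tau$, which does not exist in general for vector-valued maps, whereas your $\bar H$ is well defined and inherits the self-concordance lower bound because every point of the segment satisfies it. Second, your conjugation argument ($B = I_\lambda(\theta^*)^{1/2}\bar H^{-1}I_\lambda(\theta^*)^{1/2}$, $0 \preceq B \preceq (1-r^{\lambda}_{\theta^*})^{-2}I_d$, hence $B^2 \preceq (1-r^{\lambda}_{\theta^*})^{-4}I_d$) is a valid Loewner inequality; the paper instead substitutes $I_\lambda(\theta_\tau)^{-1} \preceq (1-r^{\lambda}_{\theta^*})^{-2}I_\lambda(\theta^*)^{-1}$ on both sides of $s_\lambda(\theta^*)s_\lambda(\theta^*)^\top$ inside the expectation, i.e.\ it uses the implication $A \preceq B \Rightarrow AMA \preceq BMB$ for $M \succeq 0$, which is false in general because $X \mapsto XMX$ is not matrix-monotone.

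However, the step you yourself single out as the main obstacle is a genuine gap, and your items (i)--(ii) do not close it. Label-independence of the softmax Hessian at a \emph{fixed} parameter does not make $\bar H$ label-independent: $\bar H$ averages Hessians along the segment from $\theta^*$ to $\theta_\lambda$, and the endpoint $\theta_\lambda$ is a function of the observed choices, so $\bar H$ and $g$ remain correlated even after conditioning on the sampled trajectories. Consequently you cannot factor $\E[\bar H^{-1}gg^\top\bar H^{-1}]$ through $\E[gg^\top\mid \bar H] \preceq I_\lambda(\theta^*)$; and the pathwise inequality $\bar H^{-1}I_\lambda(\theta^*)\bar H^{-1} \preceq (1-r^{\lambda}_{\theta^*})^{-4}I_\lambda(\theta^*)^{-1}$ cannot be combined with $gg^\top$ pathwise, since $gg^\top \preceq I_\lambda(\theta^*)$ holds only in expectation, not realization by realization. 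The elementary rigorous patch (Cauchy--Schwarz, $gg^\top \preceq (g^\top\bar H^{-1}g)\,\bar H$, followed by a trace bound on the score) costs exactly the dimension factor $d$ you are trying to avoid. For what it is worth, the paper's own Step~3 suffers from the same unaddressed correlation, hidden inside the invalid monotone-substitution step noted above, so your write-up is no less complete than the published proof; but a fully rigorous proof of Theorem \ref{thm:mse_bound_formal} with the constant $C^{\lambda}_{\theta^*}$ as stated needs an additional ingredient here---for instance, strengthening Assumption \ref{assum:local_consistency} to a deterministic Loewner bound relating the random averaged Hessian to $I_\lambda(\theta^*)$ simultaneously with a decoupling of the score, or an explicit conditioning argument that genuinely separates $\bar H$ from the label noise.
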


\begin{proof}
The proof proceeds in three main steps. First, we establish an exact expression for the estimation error using a Taylor expansion. Second, we use the self-concordance property of the negative log-likelihood, combined with Assumption \ref{assum:local_consistency}, to bound the random Hessian that appears in the error expression. Finally, we combine these results and use Assumption \ref{assum:bounded_theta} to derive the upper bound on the MSE.

\textbf{Step 1: Taylor Expansion of the Score Function.}
The estimator $\theta_\lambda$ is the solution to the regularized maximum likelihood problem, defined by the first-order optimality condition $s_\lambda(\theta_\lambda) = 0$. The regularized score function $s_\lambda(\theta)$ is the gradient of the regularized log-likelihood:
\begin{align*}
s_\lambda(\theta)
&= \nabla L_{reg}(\theta) \\
&= \sum_{t=1}^T \sum_{q=1}^K (y_{t,q} - p_{t,q}(\theta)) x_{t,q} - \lambda \theta
\end{align*}
The Hessian of the negative regularized log-likelihood is the regularized Fisher Information Matrix:
\[
I_\lambda(\theta) = -\nabla^2 L_{reg}(\theta) = I(\theta) + \lambda I_d.
\]
Note that $\nabla s_\lambda(\theta) = -I_\lambda(\theta)$.

By the vector-valued Mean Value Theorem (a form of Taylor's theorem), we can expand the function $s_\lambda(\theta)$ around the true parameter $\theta^*$:
\begin{align*}
0 &= s_\lambda(\theta_\lambda) \\
&= s_\lambda(\theta^*) + \nabla s_\lambda(\theta_\tau)(\theta_\lambda - \theta^*)
\end{align*}
for some $\theta_\tau$ on the line segment between $\theta^*$ and $\theta_\lambda$. Substituting $\nabla s_\lambda(\theta_\tau) = -I_\lambda(\theta_\tau)$, we get:
\begin{align*}
0 &= s_\lambda(\theta^*) - I_\lambda(\theta_\tau)(\theta_\lambda - \theta^*)
\end{align*}
Rearranging gives the exact expression for the estimation error:
\[
\theta_\lambda - \theta^* = I_\lambda(\theta_\tau)^{-1} s_\lambda(\theta^*)
\]
The MSE matrix is therefore given by the expectation:
\begin{align*}
&\E[(\theta_\lambda - \theta^*)(\theta_\lambda - \theta^*)^T] = \E\Big[I_\lambda(\theta_\tau)^{-1} s_\lambda(\theta^*) \times s_\lambda(\theta^*)^\top I_\lambda(\theta_\tau)^{-1}\Big]
\end{align*}

\textbf{Step 2: Bounding the Hessian via Self-Concordance.}
The main difficulty is relating the terms $I_\lambda(\theta_\tau)$ and $s_\lambda(\theta^*)$ in the error expression, as they are evaluated at different points. We resolve this by bounding the Hessian term $I_\lambda(\theta_\tau)$ using the self-concordance property of the unregularized negative log-likelihood function, $L(\theta) = -\log P(\text{data}|\theta)$.

The negative log-likelihood for multinomial logistic regression is a sum of log-sum-exp functions, which is a standard example of a self-concordant function. Its Hessian is the Fisher Information Matrix, $I(\theta) = \nabla^2 L(\theta)$. For a self-concordant function $f$, the Hessians at two points $x, y$ are related by
\[
(1 - \|y-x\|_x)^2 \nabla^2 f(x) \preceq \nabla^2 f(y)
\]
provided that the local norm $\|y-x\|_x = \sqrt{(y-x)^T \nabla^2 f(x) (y-x)}$ is less than 1.

We now invoke Assumption \ref{assum:local_consistency}. It states that $\|\theta_\lambda - \theta^*\|_{I(\theta^*)} \le r^{\lambda}_{\theta^*} < 1$ for all data realizations. Since $\theta_\tau$ lies on the line segment between $\theta^*$ and $\theta_\lambda$, it is necessarily closer to $\theta^*$ in any norm, including the local norm defined by $I(\theta^*)$. Thus,
\[
\|\theta_\tau - \theta^*\|_{I(\theta^*)} \le \|\theta_\lambda - \theta^*\|_{I(\theta^*)} \le r^{\lambda}_{\theta^*}.
\]

Applying the self-concordance property with $f(\theta)=L(\theta)$, $x=\theta^*$, and $y=\theta_\tau$, we get a lower bound on the unregularized FIM:
\[
I(\theta_\tau) \succeq (1 - \|\theta_\tau - \theta^*\|_{I(\theta^*)})^2 I(\theta^*) \succeq (1-r^{\lambda}_{\theta^*})^2 I(\theta^*)
\]
This inequality holds deterministically for any realization of the data due to our assumption. We use this to bound the regularized FIM:
\begin{align*}
I_\lambda(\theta_\tau) &= I(\theta_\tau) + \lambda I_d \\
&\succeq (1-r^{\lambda}_{\theta^*})^2 I(\theta^*) + \lambda I_d \\
&\succeq (1-r^{\lambda}_{\theta^*})^2 I(\theta^*) + (1-r^{\lambda}_{\theta^*})^2 \lambda I_d \\
&\qquad \left(\text{since } 0 < (1-r^{\lambda}_{\theta^*})^2 \le 1 \text{ and } \lambda I_d \text{ is pos.\ semidef.}\right) \\
&= (1-r^{\lambda}_{\theta^*})^2 (I(\theta^*) + \lambda I_d) = (1-r^{\lambda}_{\theta^*})^2 I_\lambda(\theta^*)
\end{align*}
Inverting this matrix inequality (using the property that if $A \succeq B \succ 0$, then $B^{-1} \succeq A^{-1} \succ 0$) yields an upper bound on the inverse Hessian:
\[
I_\lambda(\theta_\tau)^{-1} \preceq (1-r^{\lambda}_{\theta^*})^{-2} I_\lambda(\theta^*)^{-1}
\]

\textbf{Step 3: Deriving the Final MSE Bound.}
We substitute the bound on the inverse Hessian back into the MSE expression. Since the bound holds deterministically for a constant $r^{\lambda}_{\theta^*}$, the term $(1-r^{\lambda}_{\theta^*})^{-2}$ is a constant and can be manipulated outside the expectation.
\begin{align*}
&\E[(\theta_\lambda - \theta^*)(\theta_\lambda - \theta^*)^T] = \E\Big[I_\lambda(\theta_\tau)^{-1} s_\lambda(\theta^*) \times s_\lambda(\theta^*)^\top I_\lambda(\theta_\tau)^{-1}\Big] \\
&\preceq \E\Big[ (1-r^{\lambda}_{\theta^*})^{-2} I_\lambda(\theta^*)^{-1} \, s_\lambda(\theta^*)  \times s_\lambda(\theta^*)^\top (1-r^{\lambda}_{\theta^*})^{-2} I_\lambda(\theta^*)^{-1} \Big] \\
&= (1-r^{\lambda}_{\theta^*})^{-4} I_\lambda(\theta^*)^{-1} \, \E[s_\lambda(\theta^*) s_\lambda(\theta^*)^\top]  \times I_\lambda(\theta^*)^{-1}
\end{align*}
Next, we analyze the expectation of the outer product of the regularized score at the true parameter, $\E[s_\lambda(\theta^*) s_\lambda(\theta^*)^\top]$. Let $s(\theta^*)$ be the unregularized score. We know that $\E[s(\theta^*)] = 0$ and $\E[s(\theta^*)s(\theta^*)^\top] = I(\theta^*)$ (by the Information Matrix Equality).
\begin{align*}
\E[s_\lambda(\theta^*) s_\lambda(\theta^*)^\top]
&= \E[(s(\theta^*) - \lambda\theta^*)(s(\theta^*) - \lambda\theta^*)^\top] \\
&= \E[s(\theta^*)s(\theta^*)^\top] - \lambda\E[s(\theta^*)](\theta^*)^\top \\
&\quad - \lambda\theta^*\E[s(\theta^*)^\top] + \lambda^2\theta^*(\theta^*)^\top \\
&= I(\theta^*) + \lambda^2\theta^*(\theta^*)^\top
\end{align*}
Now invoke Assumption \ref{assum:bounded_theta}, which gives $\|\theta^*\|^2_2 \leq 1/\lambda$ and implies $\lambda^2 \theta^* (\theta^*)^T \preceq \lambda I_d$. Using this, we bound the expected score term:
\begin{align*}
\E[s_\lambda(\theta^*) s_\lambda(\theta^*)^\top]
&= I(\theta^*) + \lambda^2\theta^*(\theta^*)^\top \\
&\preceq I(\theta^*) + \lambda I_d \\
&= I_\lambda(\theta^*)
\end{align*}
Finally, substituting this back into the MSE expression gives the result:
\begin{align*}
&\E[(\theta_\lambda - \theta^*)(\theta_\lambda - \theta^*)^T] \\
&\quad \preceq (1-r^{\lambda}_{\theta^*})^{-4} I_\lambda(\theta^*)^{-1} \\
&\qquad \times \E[s_\lambda(\theta^*) s_\lambda(\theta^*)^\top] \\
&\qquad \times I_\lambda(\theta^*)^{-1} \\
&\preceq (1-r^{\lambda}_{\theta^*})^{-4} I_\lambda(\theta^*)^{-1} I_\lambda(\theta^*) I_\lambda(\theta^*)^{-1} \\
&= \frac{1}{(1-r^{\lambda}_{\theta^*})^4} I_\lambda(\theta^*)^{-1}
\end{align*}
This establishes the bound with constant $C^{\lambda}_{\theta^*} = (1-r^{\lambda}_{\theta^*})^{-4}$, concluding the proof.
\end{proof} 
 
\subsection{Derivation of the Fisher Information Matrix for Multinomial Logistic Regression}
\label{sec:appendix_fim_derivation_multinomial}

The Fisher Information Matrix (FIM) quantifies the amount of information that an observable random variable carries about an unknown parameter $\theta$ upon which the probability of the random variable depends. Here, we derive the FIM for a single preference observation within a multinomial logistic regression model.

Consider a single observation where an expert chooses one item from a set of $K$ items. Let $\mathbf{x}_q \in \mathbb{R}^d$ be the feature vector associated with item $q \in \{1, \dots, K\}$. The probability of the expert choosing item $q$, given the parameter vector $\theta \in \mathbb{R}^d$, is modeled by the softmax function:
\begin{align*}
p_q(\theta)
&= P(\text{item } q \text{ is chosen} \mid \mathbf{x}_1, \dots, \mathbf{x}_K, \theta) \\
&= \frac{\exp(\theta^\top \mathbf{x}_q)}{\sum_{q'=1}^K \exp(\theta^\top \mathbf{x}_{q'})}
\end{align*}
Let $y_q$ be an indicator variable such that $y_q=1$ if item $q$ is chosen, and $y_q=0$ otherwise. Note that $\sum_{q=1}^K y_q = 1$. The log-likelihood for this single observation is:
\[ \mathcal{L}(\theta) = \sum_{q=1}^K y_q \log p_q(\theta) \]
The score vector, which is the gradient of the log-likelihood with respect to $\theta$, is:
\begin{align*}
S(\theta) &= \nabla_\theta \mathcal{L}(\theta) \sum_{q=1}^K y_q \frac{1}{p_q(\theta)} \nabla_\theta p_q(\theta)
\end{align*}
The gradient of $p_q(\theta)$ is
\[
\nabla_\theta p_q(\theta) = p_q(\theta) (\mathbf{x}_q - \bar{\mathbf{x}}(\theta)),
\]
where $\bar{\mathbf{x}}(\theta) = \sum_{q'=1}^K p_{q'}(\theta) \mathbf{x}_{q'}$ is the expected feature vector under the current model.
Substituting this into the score function:
\[ S(\theta) = \sum_{q=1}^K y_q (\mathbf{x}_q - \bar{\mathbf{x}}(\theta)) = \left(\sum_{q=1}^K y_q \mathbf{x}_q\right) - \bar{\mathbf{x}}(\theta) \]
This can also be written as $S(\theta) = \sum_{q=1}^K (y_q - p_q(\theta))\mathbf{x}_q$.

The Hessian matrix $H(\theta)$ is the matrix of second derivatives of the log-likelihood: $H(\theta) = \nabla_\theta S(\theta)^\top$.
\begin{align*}
H(\theta) &= \nabla_\theta \left( \sum_{q=1}^K y_q \mathbf{x}_q - \sum_{q'=1}^K p_{q'}(\theta) \mathbf{x}_{q'} \right)^\top \\
&= - \nabla_\theta \left( \sum_{q'=1}^K p_{q'}(\theta) \mathbf{x}_{q'} \right)^\top
\end{align*}
Calculating the derivative:
\begin{align*}
&\nabla_\theta \Bigg(
\sum_{q'=1}^K p_{q'}(\theta) \mathbf{x}_{q'} \Bigg)^\top \\
&\quad = \sum_{q'=1}^K \Big( (\nabla_\theta p_{q'}(\theta)) \mathbf{x}_{q'}^\top + p_{q'}(\theta) \nabla_\theta \mathbf{x}_{q'}^\top \Big) \\
&= \sum_{q'=1}^K p_{q'}(\theta) (\mathbf{x}_{q'} - \bar{\mathbf{x}}(\theta)) \mathbf{x}_{q'}^\top \quad (\text{since } \nabla_\theta \mathbf{x}_{q'}^\top = 0) \\
&= \sum_{q'=1}^K p_{q'}(\theta) \mathbf{x}_{q'} \mathbf{x}_{q'}^\top - \left(\sum_{q'=1}^K p_{q'}(\theta) \mathbf{x}_{q'}\right) \left(\sum_{q''=1}^K p_{q''}(\theta) \mathbf{x}_{q''}\right)^\top = \sum_{q'=1}^K p_{q'}(\theta) \mathbf{x}_{q'} \mathbf{x}_{q'}^\top - \bar{\mathbf{x}}(\theta)\bar{\mathbf{x}}(\theta)^\top
\end{align*}
So, the Hessian is:
\begin{align*}
H(\theta) &= - \Big( \sum_{q'=1}^K p_{q'}(\theta) \mathbf{x}_{q'} \mathbf{x}_{q'}^\top  - \bar{\mathbf{x}}(\theta)\bar{\mathbf{x}}(\theta)^\top \Big)
\end{align*}
The Fisher Information Matrix $I(\theta)$ for this single observation is the negative expectation of the Hessian:
\[
I(\theta) = -\E[H(\theta)].
\]
Since the Hessian $H(\theta)$ as derived here does not depend on the random outcome variables $y_q$ (after simplification using properties of $p_q(\theta)$), the expectation does not change it. Thus:
\begin{align*}
I(\theta) &= \sum_{q'=1}^K p_{q'}(\theta) \mathbf{x}_{q'} \mathbf{x}_{q'}^\top \\
&\quad - \bar{\mathbf{x}}(\theta)\bar{\mathbf{x}}(\theta)^\top
\end{align*}
Expanding $\bar{\mathbf{x}}(\theta) = \sum_{q=1}^K p_q(\theta) \mathbf{x}_q$, we have
\begin{align*}
\bar{\mathbf{x}}(\theta)\bar{\mathbf{x}}(\theta)^\top
&= \left(\sum_{q=1}^K p_q(\theta) \mathbf{x}_q\right)
   \left(\sum_{q'=1}^K p_{q'}(\theta) \mathbf{x}_{q'}\right)^\top \\
&= \sum_{q,q'} p_q(\theta) p_{q'}(\theta) \mathbf{x}_q \mathbf{x}_{q'}^\top.
\end{align*}
Thus, we get\begin{align*}
\bar{\mathbf{x}}(\theta)\bar{\mathbf{x}}(\theta)^\top
&= \left(\sum_{q=1}^K p_q(\theta) \mathbf{x}_q\right)
   \left(\sum_{q'=1}^K p_{q'}(\theta) \mathbf{x}_{q'}\right)^\top \\
&= \sum_{q,q'} p_q(\theta) p_{q'}(\theta) \mathbf{x}_q \mathbf{x}_{q'}^\top.
\end{align*}\begin{align*}
\bar{\mathbf{x}}(\theta)\bar{\mathbf{x}}(\theta)^\top
&= \left(\sum_{q=1}^K p_q(\theta) \mathbf{x}_q\right)
   \left(\sum_{q'=1}^K p_{q'}(\theta) \mathbf{x}_{q'}\right)^\top \\
&= \sum_{q,q'} p_q(\theta) p_{q'}(\theta) \mathbf{x}_q \mathbf{x}_{q'}^\top.
\end{align*} the form:
\begin{align*}
I(\theta) &= \sum_{q=1}^K p_q(\theta) \mathbf{x}_q \mathbf{x}_q^\top \\
&\quad - \sum_{q,q'} p_q(\theta) p_{q'}(\theta) \mathbf{x}_q \mathbf{x}_{q'}^\top
\end{align*}
This expression represents the FIM for one multinomial preference choice. If there are $N$ independent such choices, the total FIM is the sum of the FIMs from each choice. This derivation provides the basis for the FIM expressions used in the subsequent experimental design.

\subsection{Expected Fisher Information Objective for PBRL}
\label{sec:appendix_expected_fim_objective_pbrl}

Our goal is to select $K$ policies, $\pi_{1:K} = (\pi_1, \dots, \pi_K)$, to maximize information about the unknown parameter $\theta$. A classical challenge in Optimal Experimental Design (OED) is that directly optimizing a discrete set of experiments (trajectories in our case) is often intractable \cite{doi:10.1137/1.9780898719109, fedorov1997model}. A standard approach in OED is to instead optimize a design measure, which in our policy-based setting corresponds to optimizing over policies and considering the \textit{expected} Fisher Information Matrix (FIM) they induce.

The total expected regularized FIM, $I_{reg}(\pi_{1:K}, \theta)$, for $K$ policies $\pi_{1:K}$ generating $T$ episodes of $H$ steps each is:
\begin{align*}
I_{reg}(\pi_{1:K}, \theta)
&= T \sum_{h=1}^H I_h(\pi_{1:K}, \theta) + \lambda I_d
\end{align*}
Here, $I_h(\pi_{1:K}, \theta)$ is the expected FIM contribution from timestep $h$ of a single episode, averaged over the trajectory distributions $\eta_{\pi_q}$ induced by each policy $\pi_q$. Let $s^q_h$ be the state of trajectory $\tau_q \sim \eta_{\pi_q}$ at step $h$, and $p(q|h; \tau_{1..K})$ be the softmax probability of preferring state $s^q_h$ from the set of $K$ states $\{s^1_h, \dots, s^K_h\}$ presented at that step. Then $I_h(\pi_{1:K}, \theta)$ is:
\begin{align*}
I_h(\pi_{1:K}, \theta)
&= \E_{\substack{\tau_q \sim \eta_{\pi_q} \\ q \in [K]}} \Bigg[
\sum_{q=1}^K p(q|h)\, \phi(s^q_{h})\phi(s^q_{h})^\top \\
&\qquad - \sum_{q,q'} p(q|h)\, p(q'|h)\, \phi(s^q_{h})\phi(s^{q'}_{h})^\top
\Bigg]
\end{align*}

The full derivation for a single multinomial choice is in App. \ref{sec:appendix_fim_derivation_multinomial}.

The ideal experimental design objective is to choose policies $\pi_{1:K}$ to optimize a scalar criterion $s(\cdot)$ of this expected FIM (e.g., D- or A-optimality):
\begin{equation}
\argmax_{\pi_{1:K}} s\left( I_{reg}(\pi_{1:K}, \theta) \right) \label{eq:ideal_objective_appendix_version}
\end{equation}
These challenges are discussed in Sec. \ref{sec:exp_design}. We address them using the reformulation and approximation techniques in the main text (Sec. \ref{sec:objective}) and expand them in Sec. \ref{sec:appendix_reformulation_tractable_objective} below.

\subsection{Reformulation to a Tractable Objective}
\label{sec:appendix_reformulation_tractable_objective}
This section gives the full derivation of the tractable design objective from Sec. \ref{sec:objective}, following the three steps outlined there.

\textbf{Step 1: Reformulation using State Visitation Measures.}
We start from the expected regularized FIM in Eq. \ref{eq:ideal_objective_appendix_version}:
\[
I_{reg}(\pi_{1:K}, \theta) = T \sum_{h=1}^H I_h(\pi_{1:K}, \theta) + \lambda I_d.
\]
The core of the derivation is to reformulate the per-timestep FIM contribution, $I_h(\pi_{1:K}, \theta)$, in terms of state visitation measures. This is formalized by the following lemma.

\begin{restatable}{lemma}{ExpectationStateReformulation}
\label{lemma:expectation-state-reformulation}
Let $\pi_1, \dots, \pi_K$ be policies with corresponding trajectory distributions $\eta_{\pi_1}, \dots, \eta_{\pi_K}$ and state visitation measures $d^h_{\pi_1}, \dots, d^h_{\pi_K}$. Let $f(s_1, \dots, s_K)$ be any function of a tuple of $K$ states. Assume trajectories $\tau_1, \dots, \tau_K$ are drawn independently, $\tau_q \sim \eta_{\pi_q}$. Let $s^q_h$ denote the state at timestep $h$ of trajectory $\tau_q$. Then for any fixed $h$:
\begin{align*}
&\E_{\substack{\tau_q \sim \eta_{\pi_q} \\ q \in [K]}} \left[ f(s^1_h, \dots, s^K_h) \right] = \E_{\substack{s_q \sim d^h_{\pi_q} \\ q \in [K]}} \left[ f(s_1, \dots, s_K) \right]
\end{align*}
where the expectation on the right is taken with respect to states $s_1, \dots, s_K$ drawn independently from the respective state visitation measures at timestep $h$. The notation $s_q \sim d^h_{\pi_q}$ for $q \in [K]$ implies the joint draw $(s_1, \dots, s_K)$ is from the product distribution $\prod_{q=1}^K d^h_{\pi_q}$.
\end{restatable}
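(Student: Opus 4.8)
The plan is to prove the identity by direct marginalization, using the fact that the per-step state visitation measure $d^h_{\pi_q}$ is, by definition, the marginal law of the $h$-th state under the trajectory distribution $\eta_{\pi_q}$, combined with the independence of the $K$ trajectories.

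First, I would make the definition of $d^h_{\pi_q}$ explicit: it is the pushforward of $\eta_{\pi_q}$ under the coordinate projection $\tau \mapsto s_h$, i.e. $d^h_{\pi_q}(s) = \Pr_{\tau_q \sim \eta_{\pi_q}}[s^q_h = s]$ (with the analogous integral definition in the continuous-state case). By the law of the unconscious statistician applied to this projection, the expectation of any function depending only on $s^q_h$ equals its expectation under $d^h_{\pi_q}$. This is the single-policy version of the claim, and it is where the recursive dependence of $d^h_{\pi_q}$ on the transition kernel $P$ and the policy $\pi_q$ is absorbed once and for all into the marginal.

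Second, I would lift this to the joint $K$-policy setting. Because $\tau_1, \dots, \tau_K$ are sampled independently with $\tau_q \sim \eta_{\pi_q}$, and each projected state $s^q_h$ is a function of $\tau_q$ alone, the random vector $(s^1_h, \dots, s^K_h)$ has independent coordinates, so its joint law is exactly the product measure $\prod_{q=1}^K d^h_{\pi_q}$. Concretely, I would write the left-hand side as an integral of $f(s^1_h, \dots, s^K_h)$ against the product trajectory measure $\prod_q \eta_{\pi_q}$, then apply Fubini to integrate out, for each $q$, all coordinates of $\tau_q$ other than the $h$-th. Each such partial integration collapses $\eta_{\pi_q}$ to its $h$-marginal $d^h_{\pi_q}$, and the result is precisely $\E_{s_q \sim d^h_{\pi_q}}[f(s_1, \dots, s_K)]$ taken over the product of visitation measures, which is the right-hand side.

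The argument is essentially bookkeeping and I do not expect a genuine obstacle; the single point that deserves care is justifying the product-measure factorization of the joint marginal, namely that the law of $(s^1_h, \dots, s^K_h)$ factors as $\prod_q d^h_{\pi_q}$ rather than forming a coupled distribution. This is immediate from the independence of the trajectory draws together with the fact that $s^q_h$ depends only on $\tau_q$. In the continuous-state setting, the Fubini interchange requires $f$ to be integrable against the product measure; for the FIM integrands of interest this holds because the features $\phi$ and the softmax weights $p(\cdot)$ are bounded, so the identity applies verbatim to the objects appearing in Eq.~\ref{eq:I_h_main}.
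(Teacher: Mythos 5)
Your proposal is correct and follows essentially the same route as the paper's proof: both arguments combine the independence of the $K$ trajectory draws with the fact that $d^h_{\pi_q}$ is by definition the timestep-$h$ marginal of $\eta_{\pi_q}$, so that the joint law of $(s^1_h,\dots,s^K_h)$ factors as $\prod_{q=1}^K d^h_{\pi_q}$. The paper carries this out by explicit discrete bookkeeping (inserting indicators $\mathbb{I}_{\{s_q = s^q_h\}}$, rearranging and factorizing the sums over trajectories), which is precisely the sum-form of your pushforward/Fubini argument; your phrasing has the minor added benefit of applying verbatim to continuous state spaces.
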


\begin{proof}[Proof of Lemma \ref{lemma:expectation-state-reformulation}]
For a fixed $h$, we have:
\begin{align*}
&\E_{\substack{\tau_q \sim \eta_{\pi_q} \\ q \in [K]}} \left[ f(s^1_h, \dots, s^K_h) \right] = 
 \sum_{\tau_1, \dots, \tau_K \in \Tau} \left( \prod_{q=1}^K \eta_{\pi_q}(\tau_q) \right) f(s^1_h, \dots, s^K_h) \\
&= \sum_{\tau_1, \dots, \tau_K \in \Tau} \left( \prod_{q=1}^K \eta_{\pi_q}(\tau_q) \right) \sum_{s_1, \dots, s_K \in \Scal} f(s_1, \dots, s_K) \\
&\quad \times \prod_{q=1}^K \mathbb{I}_{\{s_q = s^q_h\}} \quad \text{(Indicators)} \\
&= \sum_{s_1, \dots, s_K \in \Scal} f(s_1, \dots, s_K) \sum_{\tau_1, \dots, \tau_K \in \Tau} \left( \prod_{q=1}^K \eta_{\pi_q}(\tau_q) \mathbb{I}_{\{s_q = s^q_h\}} \right) \\
&\quad \text{(Rearrange)} \\
&= \sum_{s_1, \dots, s_K \in \Scal} f(s_1, \dots, s_K) \left( \prod_{q=1}^K \sum_{\tau_q \in \Tau} \eta_{\pi_q}(\tau_q) \mathbb{I}_{\{s_q = s^q_h\}} \right) \\
&\quad \text{(Factorize over } \tau \text{)} \\
&= \sum_{s_1, \dots, s_K \in \Scal} f(s_1, \dots, s_K) \left( \prod_{q=1}^K d^h_{\pi_q}(s_q) \right) \quad \text{(Def.\ of } d^h_{\pi_q} \text{)} \\
&= \E_{\substack{s_q \sim d^h_{\pi_q} \\ q \in [K]}} \left[ f(s_1, \dots, s_K) \right] \quad \text{(Def.\ of product expectation)}
\end{align*}
This completes the proof.
\end{proof}

Using this lemma, the per-timestep expected FIM $I_h(\pi_{1:K}, \theta)$ can be equivalently expressed in terms of state visitation measures. Let $d_{1:K}^h = (d_{\pi_1}^h, \dots, d_{\pi_K}^h)$. The equality $I_h(\pi_{1:K}, \theta) = I_h(d_{1:K}^h, \theta)$ signifies that the problem is now over the space of visitation measures. However, this expression is still dependent on the unknown $\theta$.
In particular, invoking Lemma~\ref{lemma:expectation-state-reformulation} with the choice
\begin{align*}
&f(s_1,\dots,s_K)  = \sum_{q=1}^K p\big(q\mid h; s_{1:K}, \theta\big)\, \phi(s_q)\phi(s_q)^\top  - \Big(\sum_{q=1}^K p\big(q\mid h; s_{1:K}, \theta\big)\, \phi(s_q)\Big)  \times \Big(\sum_{q'=1}^K p\big(q'\mid h; s_{1:K}, \theta\big)\, \phi(s_{q'})\Big)^\top,
\end{align*}
we obtain the explicit per-step form
\begin{equation}
\label{eq:I_h_state_measures_appendix}
\begin{aligned}
&I_h(d_{1:K}^h, \theta)
= \E_{\substack{s_q \sim d^h_{\pi_q} \\ q \in [K]}} \big[ f(s_1,\dots,s_K) \big] \\
&\quad = \E_{\substack{s_q \sim d^h_{\pi_q} \\ q \in [K]}} \Bigg[
\sum_{q=1}^K p\big(q\mid h; s_{1:K}, \theta\big)\, \phi(s_q)\phi(s_q)^\top \\
&\qquad - \Big(\sum_{q=1}^K p\big(q\mid h; s_{1:K}, \theta\big)\, \phi(s_q)\Big) \\
&\qquad \times \Big(\sum_{q'=1}^K p\big(q'\mid h; s_{1:K}, \theta\big)\, \phi(s_{q'})\Big)^\top \Bigg].
\end{aligned}
\end{equation}
Consequently, defining $I_\lambda(d_{1:K}, \theta) = T \sum_{h=1}^H I_h(d_{1:K}^h, \theta) + \lambda I_d$, the Step~1 design problem reads
\begin{equation}
\label{eq:design_step1_appendix}
\argmax_{\, d_{1:K} \in \mathcal{D}_{sv}}\; s\!\left( I_\lambda(d_{1:K}, \theta) \right),
\end{equation}
where $\mathcal{D}_{sv}$ denotes the set of valid collections of visitation measures. This formulation (still) carries the $K$-tuple coupling through $p(\cdot\mid h; s_{1:K},\theta)$ and depends on the unknown $\theta$, hence it remains computationally challenging and motivates the subsequent steps.

\textbf{Step 2: $\theta$-agnostic approximation.}\label{sec:appendix_step2_theta_agnostic}
The per-step information $I_h(d_{1:K}^h,\theta)$ depends on $\theta$ only through the softmax probabilities $p(\cdot)$. To design queries without a reliable prior—and avoid brittleness to misspecification—we adopt a symmetric average-case surrogate that is standard in fixed-design OED: replace $p(q\mid h; s_{1:K},\theta)$ by its expectation under a symmetric, uninformative prior. This yields
\[ p(q\mid s_{1:K}) \approx \frac{1}{K}. \]
For $K=2$, this equality holds for any symmetric prior; for $K>2$, it is a common and reasonable $\theta$-agnostic approximation when the $K$ alternatives are constructed to be diverse/symmetric. It can also be viewed as a homoscedasticity assumption on choice uncertainty when the parameter-dependent heteroscedasticity is unknown \cite{doi:10.1137/1.9780898719109}. A fully Bayesian alternative---computing $\E_{\theta}[p(q\mid h; s_{1:K},\theta)]$ for every tuple $s_{1:K}$---is computationally prohibitive at design time.

Substituting the uniform surrogate into $I_h(d_{1:K}^h, \theta)$ yields the approximate expected FIM contribution at timestep $h$, denoted $\tilde{I}_h(d_{1:K}^h)$, which is now independent of $\theta$:
\begin{equation}
\label{eq:approx_fim_pre_marginalization_appendix}
\begin{aligned}
\tilde{I}_h(d_{1:K}^h)
&= \E_{\substack{s_q \sim d^h_{q} \\ q \in [K]}} \Bigg[ \frac{1}{K} \sum_{q=1}^K \phi(s_q)\phi(s_q)^\top \\
&\quad - \frac{1}{K^2} \sum_{q,q'=1}^K \phi(s_q)\phi(s_{q'})^\top \Bigg]
\end{aligned}
\end{equation}
We denote this per-step quantity by $\tilde{I}_h(d_{1:K}^h)$ throughout; see Eq. \ref{eq:approx_fim_matrix_form_main} in the main text.
While now $\theta$-independent, computing this expectation naively still involves a sum over $|\mathcal{S}|^K$ terms.

\textbf{Step 3: Marginalization for Tractability.}
Eq. \ref{eq:approx_fim_pre_marginalization_appendix} can be marginalized to a tractable closed form, as shown by the theorem below.

\begin{restatable}{theorem}{ApproximateFIMVectorReformulation}
\label{theorem:approx-fim-vector-measure}
Let $d^h_q$ be the state visitation measure for policy $\pi_q$ at step $h$, for $q \in [K]$. Under the uniform preference approximation, the expected FIM contribution $\tilde{I}_h(d_{1:K}^h)$ can be rewritten as:
\[
\begin{aligned}
\tilde{I}_h(d_{1:K}^h)
&= \frac{1}{K} \sum_{q=1}^K \sum_{s \in \Scal} d^h_q(s) \phi(s)\phi(s)^\top \\
&\quad - \frac{1}{K^2} \sum_{q,q'=1}^K \left( \sum_{s \in \Scal} d^h_q(s) \phi(s) \right) \left( \sum_{s' \in \Scal} d^h_{q'}(s') \phi(s')^\top \right)
\end{aligned}
\]
Furthermore, in matrix notation, where $\Phi \in \mathbb{R}^{|\Scal| \times d}$ is the feature matrix, $d^h_q \in \mathbb{R}^{|\Scal|}$ is the state visitation vector, and $\bar{d}^h = \frac{1}{K} \sum_{q=1}^K d^h_q$ is the average visitation vector:
\begin{align}
\label{eq:approx-fim-matrix-form}
\tilde{I}_h(d_{1:K}^h)
&= \Phi^T \left( \frac{1}{K} \sum_{q=1}^K \mathrm{diag}(d^h_q) - \bar{d}^h (\bar{d}^h)^T \right) \Phi
\end{align}
\end{restatable}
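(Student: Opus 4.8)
The plan is to prove the identity by direct marginalization. Since the per-step quantity in Eq.~\eqref{eq:approx_fim_pre_marginalization_appendix} is already an expectation over states $s_1,\dots,s_K$ drawn independently from the product measure $\prod_q d^h_q$ (the conversion from trajectory distributions $\eta_{\pi_q}$ to visitation measures $d^h_{\pi_q}$ having been handled by Lemma~\ref{lemma:expectation-state-reformulation}), linearity of expectation lets me evaluate the two matrix terms separately, while independence across the $K$ slots collapses the remaining expectations into first and second moments of $\phi$ under the individual visitation measures. Throughout I would abbreviate the per-policy mean feature $\mu_q := \sum_{s\in\Scal} d^h_q(s)\phi(s) = \Phi^\top d^h_q$ and the per-policy second-moment matrix $M_q := \sum_{s\in\Scal} d^h_q(s)\phi(s)\phi(s)^\top = \Phi^\top \diag(d^h_q)\,\Phi$, so that the goal reduces to establishing $\tilde I_h(d^h_{1:K}) = \tfrac1K\sum_q M_q - \bar\mu\,\bar\mu^\top$, where $\bar\mu = \tfrac1K\sum_q \mu_q = \Phi^\top\bar d^h$.

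First I would handle the quadratic sum $\tfrac1K\sum_q \phi(s_q)\phi(s_q)^\top$: applying the definition of expectation under $d^h_q$, each summand has expectation $M_q$, giving $\tfrac1K\sum_q M_q = \Phi^\top\big(\tfrac1K\sum_q\diag(d^h_q)\big)\Phi$, which is exactly the first term of the claimed matrix form. Next I would treat the double sum $\tfrac1{K^2}\sum_{q,q'}\phi(s_q)\phi(s_{q'})^\top$. For any off-diagonal pair $q\neq q'$ the draws $s_q$ and $s_{q'}$ are independent, so the expectation factorizes as $\E[\phi(s_q)\phi(s_{q'})^\top] = \mu_q\mu_{q'}^\top$. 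Imposing this product structure across the whole double sum yields $\tfrac1{K^2}\sum_{q,q'}\mu_q\mu_{q'}^\top = \big(\tfrac1K\sum_q\mu_q\big)\big(\tfrac1K\sum_{q'}\mu_{q'}\big)^\top = \bar\mu\,\bar\mu^\top = \Phi^\top\bar d^h(\bar d^h)^\top\Phi$, the second term of the matrix form. Subtracting the two contributions gives the summation form and, after factoring $\Phi^\top(\cdot)\Phi$, the compact expression in Eq.~\eqref{eq:approx-fim-matrix-form}.

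The one step that needs genuine care—and which I expect to be the main obstacle—is the diagonal of the double sum, where $q=q'$ and the two factors $\phi(s_q)$ share the \emph{same} random state, so independence does not apply: the honest expectation of that term is the second moment $M_q$, not the outer product $\mu_q\mu_q^\top$. Consequently the exactly-marginalized object is $\tfrac1K\sum_q M_q - \tfrac1{K^2}\sum_q M_q - \tfrac1{K^2}\sum_{q\neq q'}\mu_q\mu_{q'}^\top$, which differs from the stated form by the correction $\tfrac1{K^2}\sum_q\big(M_q-\mu_q\mu_q^\top\big) = \tfrac1{K^2}\sum_q \Cov_{d^h_q}(\phi)\succeq 0$. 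I would resolve this in one of two ways: (i) observe that the identity holds \emph{exactly} whenever each $d^h_q$ is a point mass (deterministic visitation at step $h$), since then $\Cov_{d^h_q}(\phi)=0$ and $M_q=\mu_q\mu_q^\top$; or (ii) note that this positive semidefinite correction is of order $1/K^2$—lower order than the leading $1/K$ term—and absorb it into the $\theta$-agnostic, mean-field surrogate already adopted in Step~2, under which the off-diagonal product structure is imposed uniformly across all index pairs. Either reading delivers the clean quadratic form $\Phi^\top\big(\tfrac1K\sum_q\diag(d^h_q)-\bar d^h(\bar d^h)^\top\big)\Phi$ claimed by the theorem.
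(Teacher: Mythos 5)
Your marginalization follows exactly the same route as the paper's own proof---split the expectation in Eq.~\eqref{eq:approx_fim_pre_marginalization_appendix} into the quadratic term and the double-sum term, use linearity of expectation for the first and independence across the $K$ slots for the second---but the ``obstacle'' you isolate is not a defect of your write-up: it is precisely the step at which the paper's proof goes wrong. The paper writes $B_h = \frac{1}{K^2}\sum_{q,q'}\E[\phi(s_q)\phi(s_{q'})^\top] = \frac{1}{K^2}\sum_{q,q'}\E[\phi(s_q)]\,\E[\phi(s_{q'})]^\top$, citing independence ``for $q\neq q'$'', and then silently applies this factorization to every pair \emph{including the diagonal} $q=q'$, where both factors involve the same random state and factorization fails. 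Your correction term is exactly right: writing $M_q = \Phi^\top\diag(d^h_q)\Phi$ and $\mu_q = \Phi^\top d^h_q$, the true expectation of Eq.~\eqref{eq:approx_fim_pre_marginalization_appendix} equals the right-hand side of Eq.~\eqref{eq:approx-fim-matrix-form} \emph{minus} the positive semidefinite matrix $\frac{1}{K^2}\sum_{q=1}^K\bigl(M_q - \mu_q\mu_q^\top\bigr) = \frac{1}{K^2}\sum_{q=1}^K \Cov_{d^h_q}(\phi)$, so the claimed identity is false whenever any step-$h$ visitation has nonzero feature covariance. A one-line counterexample: $K=2$, scalar features $\phi=\pm 1$ on two states, both visitations uniform; then the expectation in Eq.~\eqref{eq:approx_fim_pre_marginalization_appendix} equals $\tfrac12$ while the matrix form in Eq.~\eqref{eq:approx-fim-matrix-form} equals $1$. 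So as a statement of exact equality, Theorem~\ref{theorem:approx-fim-vector-measure} does not hold, and your blind analysis caught the error the paper's proof commits.

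Your two proposed repairs are also the correct ways to salvage the result, and it is worth being explicit about which one the paper implicitly needs. Repair (i) (point-mass visitations at each step) makes the identity exact but is too restrictive for the paper's setting: after the state-action augmentation $\mS' = \mS\times\mA$ used in the experiments, the per-step visitations are genuinely stochastic. Repair (ii) is the reading consistent with the rest of the paper: take Eq.~\eqref{eq:approx-fim-matrix-form} as the \emph{definition} of the surrogate $\tilde I_h$, folding the all-pairs mean-field factorization into the approximation alongside $p\approx 1/K$, rather than presenting it as a consequence of marginalization. Under that reading, everything downstream survives untouched---the concavity proof (Theorem~\ref{thm:objective_concavity}) and the Frank--Wolfe guarantees operate directly on the matrix form---and only the claim of exact equality with Eq.~\eqref{eq:approx_fim_pre_marginalization_appendix} must be retracted or weakened to ``$\tilde I_h^{\text{matrix}} = \tilde I_h^{\text{exact}} + \frac{1}{K^2}\sum_q \Cov_{d^h_q}(\phi) \succeq \tilde I_h^{\text{exact}}$''. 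Note the direction: the surrogate \emph{over-counts} information by a PSD term, which matters for an experimental-design objective. One small correction to your own phrasing: the omitted term is of relative order $1/K$, not $1/K^2$ (it has $K$ summands each weighted $1/K^2$, against a leading term of order one), so at the paper's $K=4$ it is not negligible; your qualitative conclusion that it is lower order nevertheless stands.
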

\begin{proof}[Proof of Theorem \ref{theorem:approx-fim-vector-measure}]
We start with the definition of $\tilde{I}_h(d_{1:K}^h)$ under the uniform approximation from Eq. \ref{eq:approx_fim_pre_marginalization_appendix}.
Let $A_h = \E \left[ \frac{1}{K} \sum_{q=1}^K \phi(s_q)\phi(s_q)^\top \right]$ and
\[
B_h = \E \left[ \frac{1}{K^2} \sum_{q,q'=1}^K \phi(s_q)\phi(s_{q'})^\top \right].
\]
By linearity of expectation:
\begin{align*}
A_h
&= \frac{1}{K} \sum_{q=1}^K \E[\phi(s_q)\phi(s_q)^\top] \\
&= \frac{1}{K} \sum_{q=1}^K \sum_{s \in \Scal} d^h_q(s) \phi(s)\phi(s)^\top.
\end{align*}
For $B_h$, since $s_q$ and $s_{q'}$ are independent for $q \neq q'$:
\begin{align*}
B_h
&= \frac{1}{K^2} \sum_{q,q'=1}^K \E[\phi(s_q)\phi(s_{q'})^\top] \\
&= \frac{1}{K^2} \sum_{q,q'=1}^K \E[\phi(s_q)]\E[\phi(s_{q'})]^\top \\
&= \frac{1}{K^2} \sum_{q,q'=1}^K \left( \sum_{s \in \Scal} d^h_q(s) \phi(s) \right) \left( \sum_{s' \in \Scal} d^h_{q'}(s') \phi(s')^\top \right).
\end{align*}
This yields the first result. For the matrix form, we use the fact that $\sum_s d^h_q(s) \phi(s)\phi(s)^\top = \Phi^T \mathrm{diag}(d^h_q) \Phi$ and $\sum_s d^h_q(s) \phi(s) = \Phi^T d^h_q$.
\begin{align*}
A_h
&= \frac{1}{K} \sum_{q=1}^K \Phi^T \mathrm{diag}(d^h_q) \Phi \\
&= \Phi^T \left( \frac{1}{K} \sum_{q=1}^K \mathrm{diag}(d^h_q) \right) \Phi. \\
B_h
&= \frac{1}{K^2} \sum_{q,q'=1}^K (\Phi^T d^h_q)(\Phi^T d^h_{q'})^T \\
&= \Phi^T \left( \left(\frac{1}{K}\sum_q d^h_q\right) \left(\frac{1}{K}\sum_{q'} d^h_{q'}\right)^T \right) \Phi \\
&= \Phi^T (\bar{d}^h)(\bar{d}^h)^T \Phi.
\end{align*}
Combining $A_h - B_h$ gives the second result. This completes the proof.
\end{proof}

This final expression for $\tilde{I}_h(d_{1:K}^h)$ is independent of $\theta$ and computationally tractable. Therefore, our practical experimental design objective becomes optimizing a scalar criterion $s(\cdot)$ over the state visitation measures $d_{1:K} = \{d_q^h\}_{q \in [K], h \in [H]}$:
\begin{equation}
\argmax_{ d_{1:K} } s\left( T \cdot \sum_{h=1}^H \tilde{I}_h(d_{1:K}^h) + \lambda I_d \right) \label{eq:final_objective_appendix}
\end{equation}
The optimization is subject to the constraints that these visitation measures are valid in the given MDP.

\subsubsection{Information Decomposition and Policy Diversity}
\label{subsubsec:info_decomp_appendix}

The tractable objective from Thm. \ref{theorem:approx-fim-vector-measure} clarifies what makes an informative experiment in preference-based RL. We examine the core matrix term within the approximate FIM at timestep $h$:
\[ M_h(d_{1:K}^h) = \frac{1}{K} \sum_{q=1}^K \mathrm{diag}(d^h_q) - \bar{d}^h (\bar{d}^h)^T \]
This expression can be interpreted in terms of the statistics of the state visitation distributions. The first term, $\frac{1}{K} \sum_{q=1}^K \mathrm{diag}(d^h_q)$, represents the average of the per-policy state variances (since $\mathrm{diag}(d^h_q)$ captures the variance if states were one-hot encoded). The second term, $\bar{d}^h (\bar{d}^h)^T$, represents the outer product of the average state visitation vector. The structure resembles a covariance matrix: $\E[xx^T] - \E[x]\E[x]^T$. Maximizing a scalar function of
\[
\tilde{I}_h(d_{1:K}^h) = \Phi^T M_h(d_{1:K}^h) \Phi
\]
encourages policies whose average state visitation behavior exhibits high spread in feature space, after accounting for the variance of the average distribution.

This suggests that the objective implicitly favors diversity among the chosen policies $\pi_1, \dots, \pi_K$. If all policies induce very similar state visitation distributions ($d^h_q \approx \bar{d}^h$ for all $q$), the term $M_h(d_{1:K}^h)$ might be small. Conversely, if the policies explore distinct regions of the state space, leading to diverse $d^h_q$ vectors, the resulting $M_h(d_{1:K}^h)$ is likely to be larger (in a matrix sense, e.g., larger eigenvalues), contributing more to the information~gain.

Lemma \ref{lemma:approx-fim-pairwise-difference} formalizes this intuition via an alternative decomposition of $\tilde{I}_h(d_{1:K}^h)$. Invoking it, we can rewrite the approximate FIM contribution as:
\begin{align*}
\Phi^T \Bigg[
&\underbrace{\frac{1}{K} \sum_{q=1}^K \left( \mathrm{diag}(d^h_q) - d^h_q (d^h_q)^T \right)}_{\substack{\text{Average Per-Policy} \\ \text{State Covariance}}} + \underbrace{\frac{1}{K^2} \sum_{1 \le i < j \le K} (d^h_i - d^h_j)(d^h_i - d^h_j)^T}_{\substack{\text{Average Pairwise Difference} \\ \text{(Diversity Term)}}} \Bigg] \Phi
\end{align*}
This decomposition separates the information contribution into two components:

\noindent \textbf{Average Per-Policy State Covariance}. The first term represents the average of the covariance matrices associated with each policy's state visitation distribution $d^h_q$. It captures the spread within each policy's behavior at timestep $h$; maximizing it encourages policies that individually explore diverse states within their own trajectories.

\noindent \textbf{Average Pairwise Difference (Diversity Term)}. The second term directly quantifies diversity between policies. It is the sum of outer products of differences between the visitation vectors of all pairs $(i, j)$. This term is maximized when the distributions $d^h_i$ and $d^h_j$ are distinct, promoting policies that explore complementary regions of the state space.

Therefore, optimizing the approximate FIM objective naturally balances exploring broadly within each policy and ensuring that the set of policies collectively covers different aspects of the state space, maximizing the potential for informative comparisons.

\begin{restatable}{lemma}{ApproxFIMPairwiseDifferenceForm}
\label{lemma:approx-fim-pairwise-difference}
Let $\tilde{I}_h(d_{1:K}^h)$ be the approximate expected Fisher Information Matrix contribution at timestep $h$ under the uniform preference assumption, as in Thm.~\ref{theorem:approx-fim-vector-measure}:
\[
\tilde{I}_h(d_{1:K}^h)
= \Phi^T \left( \frac{1}{K} \sum_{q=1}^K \mathrm{diag}(d^h_q) - \bar{d}^h (\bar{d}^h)^T \right) \Phi
\]
where $d^h_q \in \mathbb{R}^{|\Scal|}$ is the state visitation vector for policy $\pi_q$ at step $h$, $\Phi \in \mathbb{R}^{|\Scal| \times d}$ is the feature matrix, and $\bar{d}^h = \frac{1}{K} \sum_{q=1}^K d^h_q$. This can be rewritten in terms of pairwise differences between state visitation vectors as:
\begin{align*}
\tilde{I}_h(d_{1:K}^h)
&= \Phi^T \Bigg[ \frac{1}{K} \sum_{q=1}^K \left( \mathrm{diag}(d^h_q) - d^h_q (d^h_q)^T \right) \\
&\quad + \frac{1}{K^2} \sum_{1 \le i < j \le K} (d^h_i - d^h_j)(d^h_i - d^h_j)^T \Bigg] \Phi
\end{align*}
\end{restatable}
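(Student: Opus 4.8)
The plan is to prove this as a purely algebraic identity between symmetric matrices; the feature matrix $\Phi$ plays no role beyond conjugation, so I would first strip it off and establish the equality of the inner $|\Scal|\times|\Scal|$ matrices,
\[
\frac{1}{K}\sum_{q=1}^K \mathrm{diag}(d^h_q) - \bar{d}^h(\bar{d}^h)^T
= \frac{1}{K}\sum_{q=1}^K\big(\mathrm{diag}(d^h_q) - d^h_q(d^h_q)^T\big) + \frac{1}{K^2}\sum_{1\le i<j\le K}(d^h_i-d^h_j)(d^h_i-d^h_j)^T,
\]
after which left- and right-multiplying by $\Phi^T$ and $\Phi$ yields the stated form. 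The diagonal pieces $\frac{1}{K}\sum_{q}\mathrm{diag}(d^h_q)$ are literally identical on both sides and cancel, so the whole lemma collapses to the rank-structured identity
\[
\frac{1}{K}\sum_{q=1}^K d^h_q(d^h_q)^T - \bar{d}^h(\bar{d}^h)^T
= \frac{1}{K^2}\sum_{1\le i<j\le K}(d^h_i-d^h_j)(d^h_i-d^h_j)^T .
\]

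Writing $v_q := d^h_q$ and $\bar v := \bar d^h = \tfrac1K\sum_q v_q$, I would verify this last equation directly. Expanding the left-hand side gives $\tfrac1K\sum_q v_q v_q^T - \tfrac1{K^2}\sum_{i,j} v_i v_j^T$. For the right-hand side I would expand each summand as $v_i v_i^T - v_i v_j^T - v_j v_i^T + v_j v_j^T$ and then carry out the combinatorial bookkeeping over unordered pairs: each diagonal block $v_q v_q^T$ appears in exactly $K-1$ pairs, so $\sum_{i<j}(v_i v_i^T + v_j v_j^T) = (K-1)\sum_q v_q v_q^T$; and the cross terms assemble into $\sum_{i<j}(v_i v_j^T + v_j v_i^T) = \sum_{i\ne j} v_i v_j^T = \sum_{i,j} v_i v_j^T - \sum_q v_q v_q^T$. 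Substituting these counts into $\tfrac1{K^2}\sum_{i<j}(v_i-v_j)(v_i-v_j)^T$ reduces it to $\tfrac1K\sum_q v_q v_q^T - \tfrac1{K^2}\sum_{i,j} v_i v_j^T$, which is exactly the left-hand side.

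This is a matrix-valued instance of the classical identity $\Var(X) = \tfrac{1}{2N^2}\sum_{i,j}(x_i - x_j)^2$ that rewrites a variance as an average of squared pairwise differences, so no analytic input is required — only the reorganization above. The one place demanding care, and the step I would double-check, is the pair-counting: obtaining the multiplicity $K-1$ for the diagonal blocks and correctly converting the sum over unordered pairs $i<j$ into the full ordered double sum over $i\ne j$ (equivalently, over all $(i,j)$ after reinserting the vanishing $i=j$ terms). Once those counts are correct the two sides agree term by term, and conjugating by $\Phi$ completes the proof.
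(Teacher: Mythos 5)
Your proof is correct and takes essentially the same route as the paper's: both hinge on the same pair-counting expansion of $\sum_{i<j}(d^h_i-d^h_j)(d^h_i-d^h_j)^T$ (multiplicity $K-1$ for the diagonal blocks, conversion of the unordered-pair sum into $\sum_{i\neq j} d^h_i (d^h_j)^T$), i.e., the matrix form of the variance-as-pairwise-differences identity. The only difference is organizational — you strip $\Phi$, cancel the $\mathrm{diag}$ terms, and verify the reduced identity directly, whereas the paper derives the decomposition forward via an add-and-subtract of $(K-1)K^{-2}\sum_q d^h_q (d^h_q)^T$ — a presentational rather than mathematical distinction.
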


\begin{proof}
We start from Thm.~\ref{theorem:approx-fim-vector-measure}. Let $M_h(d_{1:K}^h)$ denote the matrix expression within $\Phi^T(\dots)\Phi$:
\[ M_h(d_{1:K}^h) = \frac{1}{K} \sum_{q=1}^K \mathrm{diag}(d^h_q) - \bar{d}^h (\bar{d}^h)^T \]
Expand the outer product of the average state visitation vector:
\begin{align*}
\bar{d}^h (\bar{d}^h)^T
&= \left( \frac{1}{K} \sum_{i=1}^K d^h_i \right) \left( \frac{1}{K} \sum_{j=1}^K d^h_j \right)^T \\
&= \frac{1}{K^2} \sum_{i=1}^K \sum_{j=1}^K d^h_i (d^h_j)^T
\end{align*}
Substitute this into the expression for $M_h(d_{1:K}^h)$:
\begin{align*}
M_h(d_{1:K}^h)
&= \frac{1}{K} \sum_{q=1}^K \mathrm{diag}(d^h_q) \\
&\quad - \frac{1}{K^2} \sum_{i=1}^K \sum_{j=1}^K d^h_i (d^h_j)^T
\end{align*}
We split the double summation based on whether the indices are equal ($i=j$) or distinct ($i \neq j$):
\begin{align*}
\sum_{i=1}^K \sum_{j=1}^K d^h_i (d^h_j)^T &= \sum_{q=1}^K d^h_q (d^h_q)^T + \sum_{i \neq j} d^h_i (d^h_j)^T
\end{align*}
Substituting this yields:
\begin{align*}
M_h(d_{1:K}^h)
&= \frac{1}{K} \sum_{q=1}^K \mathrm{diag}(d^h_q)
 - \frac{1}{K^2} \sum_{q=1}^K d^h_q (d^h_q)^T \\
&\quad - \frac{1}{K^2} \sum_{i \neq j} d^h_i (d^h_j)^T
\end{align*}
By adding and subtracting the term $(K-1)\frac{1}{K^2} \sum_{q=1}^K d^h_q (d^h_q)^T$:
\begin{align*}
M_h(d_{1:K}^h)
&= \frac{1}{K} \sum_{q=1}^K \mathrm{diag}(d^h_q)
 - \frac{1}{K^2} \sum_{q=1}^K d^h_q (d^h_q)^T \\
&\quad - (K-1)\frac{1}{K^2} \sum_{q=1}^K d^h_q (d^h_q)^T \\
&\quad + (K-1)\frac{1}{K^2} \sum_{q=1}^K d^h_q (d^h_q)^T
 - \frac{1}{K^2} \sum_{i \neq j} d^h_i (d^h_j)^T
\end{align*}
Combine the terms containing $\sum_{q=1}^K d^h_q (d^h_q)^T$:
\begin{align*}
M_h(d_{1:K}^h)
&= \frac{1}{K} \sum_{q=1}^K \mathrm{diag}(d^h_q)
 - (1 + K - 1)\frac{1}{K^2} \sum_{q=1}^K d^h_q (d^h_q)^T \\
&\quad + \frac{K-1}{K^2} \sum_{q=1}^K d^h_q (d^h_q)^T
 - \frac{1}{K^2} \sum_{i \neq j} d^h_i (d^h_j)^T \\
&= \frac{1}{K} \sum_{q=1}^K \mathrm{diag}(d^h_q)
 - \frac{K}{K^2} \sum_{q=1}^K d^h_q (d^h_q)^T \\
&\quad + \frac{K-1}{K^2} \sum_{q=1}^K d^h_q (d^h_q)^T
 - \frac{1}{K^2} \sum_{i \neq j} d^h_i (d^h_j)^T \\
&= \left( \frac{1}{K} \sum_{q=1}^K \mathrm{diag}(d^h_q) - \frac{1}{K} \sum_{q=1}^K d^h_q (d^h_q)^T \right) \\
&\quad + \left( \frac{K-1}{K^2} \sum_{q=1}^K d^h_q (d^h_q)^T - \frac{1}{K^2} \sum_{i \neq j} d^h_i (d^h_j)^T \right)
\end{align*}
Consider the sum of outer products of pairwise differences over unique pairs $\{i, j\}$ such that $1 \le i < j \le K$:
\begin{align*}
&\sum_{1 \le i < j \le K} (d^h_i - d^h_j)(d^h_i - d^h_j)^T \\
&\quad = \sum_{i<j} \Big( d^h_i (d^h_i)^T - d^h_i (d^h_j)^T \\
&\quad - d^h_j (d^h_i)^T + d^h_j (d^h_j)^T \Big) \\
&= (K-1) \sum_{q=1}^K d^h_q (d^h_q)^T \\
&\quad - \sum_{i<j} \left( d^h_i (d^h_j)^T + d^h_j (d^h_i)^T \right)
\end{align*}
The second term $\sum_{i<j} ( d^h_i (d^h_j)^T + d^h_j (d^h_i)^T )$ sums over all distinct pairs $\{i, j\}$, equivalent to the summation $\sum_{i \neq j} d^h_i (d^h_j)^T$. Thus,
\begin{align*}
&\sum_{1 \le i < j \le K} (d^h_i - d^h_j)(d^h_i - d^h_j)^T \\
&\quad = (K-1) \sum_{q=1}^K d^h_q (d^h_q)^T \\
&\quad - \sum_{i \neq j} d^h_i (d^h_j)^T
\end{align*}
Dividing by $K^2$ yields:
\begin{align*}
&\frac{1}{K^2} \sum_{1 \le i < j \le K} (d^h_i - d^h_j)(d^h_i - d^h_j)^T \\
&\quad = \frac{K-1}{K^2} \sum_{q=1}^K d^h_q (d^h_q)^T \\
&\quad - \frac{1}{K^2} \sum_{i \neq j} d^h_i (d^h_j)^T
\end{align*}
This exactly matches the second grouped term derived for $M_h(d_{1:K}^h)$. Substituting this structure back gives:
\begin{align*}
M_h(d_{1:K}^h)
&= \left( \frac{1}{K} \sum_{q=1}^K \mathrm{diag}(d^h_q) - \frac{1}{K} \sum_{q=1}^K d^h_q (d^h_q)^T \right) \\
&\quad + \frac{1}{K^2} \sum_{1 \le i < j \le K} (d^h_i - d^h_j)(d^h_i - d^h_j)^T \\
&= \frac{1}{K} \sum_{q=1}^K \left( \mathrm{diag}(d^h_q) - d^h_q (d^h_q)^T \right) \\
&\quad + \frac{1}{K^2} \sum_{1 \le i < j \le K} (d^h_i - d^h_j)(d^h_i - d^h_j)^T
\end{align*}
Finally, reintroducing the outer feature matrix multiplication provides the desired result:
\begin{align*}
\tilde{I}_h(d_{1:K}^h) &= \Phi^T M_h(d_{1:K}^h) \Phi \\
&= \Phi^T \Bigg[ \frac{1}{K} \sum_{q=1}^K \left( \mathrm{diag}(d^h_q) - d^h_q (d^h_q)^T \right) \\
&\quad + \frac{1}{K^2} \sum_{1 \le i < j \le K} (d^h_i - d^h_j)(d^h_i - d^h_j)^T \Bigg] \Phi
\end{align*}
\end{proof}

\subsection{Formal: Relationship between the State-based Feedback model and the Truncated Feedback model}
\label{sec:fim_relationship_state_truncated}

We now formally analyze the relationship between the information content of the State-based feedback model and the Truncated Trajectory feedback model. This analysis is performed under the \textit{perfect decomposition condition}, where the features of a truncated trajectory are assumed to be the sum of the features of its constituent states. We also use the uniform preference approximation ($p(q|s_{1..K}) \approx 1/K$) from Step~2 of Sec.~\ref{sec:objective} (Eq.~\ref{eq:approx_fim_pre_marginalization_appendix}). This yields the following $\theta$-independent structure for the approximated Fisher Information matrix component, $\tilde{I}_h$, derived from comparing $K$ feature vectors $\{\mathbf{x}_q\}_{q=1}^K$ at a given step~$h$:
\begin{align*}
\tilde{I}_h(\mathbf{x}_1, \dots, \mathbf{x}_K)
&= \frac{1}{K} \sum_{q=1}^K \mathbf{x}_q \mathbf{x}_q^\top \\
&\quad - \frac{1}{K^2} \sum_{q,q'=1}^K \mathbf{x}_q \mathbf{x}_{q'}^\top
\end{align*}
The following theorem compares the approximated FIMs of the two models under these conditions.

\begin{restatable}[Comparison of Approximated FIMs]{theorem}{FIMTruncatedVsStandard}
\label{thm:fim_comparison}
Let $\mathcal{T}^K = \{ (\tau^1_t, \dots, \tau^K_t) \}_{t=1}^T$ be a set of $T \times K$ trajectories.
For the standard (state-based) feedback model, let $\phi(s^q_{t,h})$ be the feature vector for the state $s^q_{t,h}$. The approximated Fisher Information Matrix is
\begin{align*}
\tilde{I}^{\text{state}}(\mathcal{T}^K)
&= \sum_{t=1}^T \sum_{h=1}^H \Bigg( \frac{1}{K} \sum_{q=1}^K \phi(s^q_{t,h})\phi(s^q_{t,h})^\top \\
&\quad - \frac{1}{K^2} \sum_{q,q'=1}^K \phi(s^q_{t,h})\phi(s^{q'}_{t,h})^\top \Bigg).
\end{align*}
For the truncated trajectory feedback model, assume the perfect decomposition condition holds, such that the feature representation of the $q$-th trajectory in episode $t$ truncated at timestep $h$ is $\psi^q_{t,h} = \sum_{h'=1}^h \phi(s^q_{t,h'})$. The corresponding approximated Fisher Information Matrix is
\begin{align*}
\tilde{I}^{\text{trunc}}(\mathcal{T}^K)
&= \sum_{t=1}^T \sum_{h=1}^H \Bigg( \frac{1}{K} \sum_{q=1}^K \psi^q_{t,h}(\psi^q_{t,h})^\top \\
&\quad - \frac{1}{K^2} \sum_{q,q'=1}^K \psi^q_{t,h}(\psi^{q'}_{t,h})^\top \Bigg).
\end{align*}
Then, under the perfect decomposition condition,
\[ \tilde{I}^{\text{trunc}}(\mathcal{T}^K) \succeq \frac{1}{4} \cdot \tilde{I}^{\text{state}}(\mathcal{T}^K). \]
\end{restatable}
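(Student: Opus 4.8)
The plan is to reduce the matrix (Loewner) inequality to an elementary scalar inequality about partial sums, by first reformulating the per-step information matrix as a sum over pairwise feature differences. The starting point is the algebraic identity that for any vectors $\mathbf{x}_1,\dots,\mathbf{x}_K$,
\[
\frac{1}{K}\sum_{q=1}^K \mathbf{x}_q\mathbf{x}_q^\top - \frac{1}{K^2}\sum_{q,q'=1}^K \mathbf{x}_q\mathbf{x}_{q'}^\top = \frac{1}{2K^2}\sum_{q,q'=1}^K (\mathbf{x}_q - \mathbf{x}_{q'})(\mathbf{x}_q - \mathbf{x}_{q'})^\top,
\]
which is verified by expanding the right-hand side and using $\sum_{q,q'}\mathbf{x}_q\mathbf{x}_q^\top = K\sum_q \mathbf{x}_q\mathbf{x}_q^\top$. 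Applying this to $\tilde{I}^{\text{state}}$ (with $\mathbf{x}_q=\phi(s^q_{t,h})$) and to $\tilde{I}^{\text{trunc}}$ (with $\mathbf{x}_q=\psi^q_{t,h}$) rewrites both Fisher matrices as $\tfrac{1}{2K^2}$ times the same outer sum over episodes $t$, steps $h$, and ordered pairs $(q,q')$ of rank-one terms built from feature differences.

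Next I would exploit the perfect decomposition condition. Writing $\delta^{q,q'}_{t,h'} := \phi(s^q_{t,h'}) - \phi(s^{q'}_{t,h'})$ for the per-step state-feature difference, the truncated-trajectory difference telescopes into a partial sum: $\psi^q_{t,h} - \psi^{q'}_{t,h} = \sum_{h'=1}^h \delta^{q,q'}_{t,h'} =: P^{q,q'}_{t,h}$, with $P^{q,q'}_{t,0}=0$ and $\delta^{q,q'}_{t,h} = P^{q,q'}_{t,h} - P^{q,q'}_{t,h-1}$. Since both Fisher matrices carry the same positive prefactor and the same outer sum over $(t,q,q')$, and since summing Loewner inequalities with nonnegative coefficients preserves the order, it suffices to prove the per-pair bound (suppressing the $(t,q,q')$ indices)
\[
\sum_{h=1}^H P_h P_h^\top \;\succeq\; \frac{1}{4}\sum_{h=1}^H (P_h - P_{h-1})(P_h - P_{h-1})^\top,
\]
and then sum over all episodes and ordered pairs.

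Finally I would reduce to scalars. Testing against an arbitrary vector $v$ and setting $a_h := v^\top P_h$ (so $a_0=0$), the per-pair matrix inequality is equivalent to showing $\sum_{h=1}^H (a_h-a_{h-1})^2 \le 4\sum_{h=1}^H a_h^2$ for every sequence with $a_0=0$. This follows from the elementary estimate $(a_h-a_{h-1})^2 \le 2a_h^2 + 2a_{h-1}^2$ summed over $h$, combined with the index shift $\sum_{h=1}^H a_{h-1}^2 = \sum_{h=0}^{H-1} a_h^2 \le \sum_{h=1}^H a_h^2$, where the boundary value $a_0=0$ is exactly what lets the two halves combine into a single factor of $4$. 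I expect the only real subtlety—the ``hard part''—to be recognizing the pairwise-difference reformulation and confirming that the telescoping boundary condition $a_0=0$ produces precisely the constant $1/4$; once the termwise structure in $(t,q,q')$ is in place, everything else is mechanical expansion and the fact that summation preserves the Loewner order.
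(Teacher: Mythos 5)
Your proof is correct, but it takes a genuinely different route from the paper's. The paper stacks the per-step features of each trajectory into matrices $\bPhi_{t,q}\in\mathbb{R}^{H\times d}$, observes that perfect decomposition means the truncated features are $S\bPhi_{t,q}$ with $S$ the lower-triangular all-ones (cumulative-sum) matrix, and writes both information matrices in Kronecker form over the stacked matrix $X_t=(\bPhi_{t,1}^\top\ \cdots\ \bPhi_{t,K}^\top)^\top$, namely $\tilde{I}^{\text{state}}=\sum_t X_t^\top\big(\tfrac1K(I_K-J_K)\otimes I_H\big)X_t$ and $\tilde{I}^{\text{trunc}}=\sum_t X_t^\top\big(\tfrac1K(I_K-J_K)\otimes M\big)X_t$ with $M=S^\top S$; the bound then follows from $M\succeq\lambda_{\min}(M)I_H$ together with the exact spectral formula $\lambda_k(M)=\big(4\sin^2\!\tfrac{(2k-1)\pi}{2(2H+1)}\big)^{-1}\ge\tfrac14$. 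You instead use the pairwise-difference identity to write both matrices as $\tfrac{1}{2K^2}$ times sums of rank-one terms in feature differences, note that perfect decomposition makes each truncated difference a partial sum $P_h$ of the per-step state differences, and reduce via quadratic forms to the scalar inequality $\sum_h(a_h-a_{h-1})^2\le 4\sum_h a_h^2$ for sequences with $a_0=0$, proved by $(x-y)^2\le 2x^2+2y^2$ plus an index shift. The two arguments rest on the same underlying fact---your scalar inequality is exactly the statement $\lambda_{\min}(S^\top S)\ge\tfrac14$---but your reduction (pairwise differences rather than Kronecker products with the centering projection $I_K-J_K$) and your proof of the key estimate (an elementary two-term bound rather than the known eigenvalue formula for the cumulative-sum Gram matrix) are both more self-contained and avoid citing any spectral result. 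What the paper's route buys in exchange is the sharper horizon-dependent constant $\lambda_{\min}(S^\top S)=\big(4\sin^2\!\tfrac{(2H-1)\pi}{2(2H+1)}\big)^{-1}>\tfrac14$, which approaches $\tfrac14$ only as $H\to\infty$, whereas your argument delivers the clean constant $\tfrac14$ directly; both suffice for the theorem as stated.
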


\begin{proof}[Proof of Theorem \ref{thm:fim_comparison}]
Let $\mathcal{T}^K = \{ (\tau^1_t, \dots, \tau^K_t) \}_{t=1}^T$ be the set of trajectories.
We define two approximated Fisher Information Matrices based on the uniform preference assumption ($p \approx 1/K$).

First, the FIM for the state-based feedback model, denoted $I^{\text{state}}_{\text{approx}}(\mathcal{T}^K)$, uses features $\phi(s^q_{t,h})$ from individual states:
    \begin{align*}
    I^{\text{state}}_{\text{approx}}(\mathcal{T}^K)
    &= \sum_{t=1}^T \sum_{h=1}^H \Bigg( \frac{1}{K} \sum_{q=1}^K \phi(s^q_{t,h})\phi(s^q_{t,h})^\top \\
    &\quad - \frac{1}{K^2} \sum_{q,q'=1}^K \phi(s^q_{t,h})\phi(s^{q'}_{t,h})^\top \Bigg).
    \end{align*}

Next, the FIM for the truncated trajectory feedback model, $I^{\text{trunc}}_{\text{approx}}(\mathcal{T}^K)$, under the perfect decomposition condition, uses the sum-decomposed features $\psi^q_{t,h} = \sum_{h'=1}^h \phi(s^q_{t,h'})$:
    \begin{align*}
    I^{\text{trunc}}_{\text{approx}}(\mathcal{T}^K)
    &= \sum_{t=1}^T \sum_{h=1}^H \Bigg( \frac{1}{K} \sum_{q=1}^K \psi^q_{t,h}(\psi^q_{t,h})^\top \\
    &\quad - \frac{1}{K^2} \sum_{q,q'=1}^K \psi^q_{t,h}(\psi^{q'}_{t,h})^\top \Bigg).
    \end{align*}

Let $\bPhi_{t,q} \in \mathbb{R}^{H \times d}$ be the matrix whose $h$-th row is $\phi(s^q_{t,h})^\top$. That is,
\[ \bPhi_{t,q} = \begin{pmatrix} \phi(s^q_{t,1})^\top \\ \phi(s^q_{t,2})^\top \\ \vdots \\ \phi(s^q_{t,H})^\top \end{pmatrix}. \]
The sum of outer products over the horizon $H$ for the state-based model is
\begin{align*}
\sum_{h=1}^H \phi(s^q_{t,h})\phi(s^q_{t,h})^\top &= \bPhi_{t,q}^\top I_H \bPhi_{t,q},
\end{align*}
where $I_H$ is the $H \times H$ identity matrix. Similarly, the sum of cross-products is
\begin{align*}
\sum_{h=1}^H \phi(s^q_{t,h})\phi(s^{q'}_{t,h})^\top &= \bPhi_{t,q}^\top I_H \bPhi_{t,q'}.
\end{align*}
Thus, $\tilde{I}^{\text{state}}(\mathcal{T}^K)$ can be rewritten as:
\begin{align*}
\tilde{I}^{\text{state}}(\mathcal{T}^K)
&= \sum_{t=1}^T \Bigg( \frac{1}{K} \sum_{q=1}^K \bPhi_{t,q}^\top I_H \bPhi_{t,q} \\
&\quad - \frac{1}{K^2} \sum_{q,q'=1}^K \bPhi_{t,q}^\top I_H \bPhi_{t,q'} \Bigg).
\end{align*}

For the truncated trajectory model (under perfect decomposition), let $\bPsi_{t,q} \in \mathbb{R}^{H \times d}$ be the matrix whose $h$-th row is $(\psi^q_{t,h})^\top = \left(\sum_{h'=1}^h \phi(s^q_{t,h'})\right)^\top$.
Let $S \in \mathbb{R}^{H \times H}$ be the lower triangular matrix of ones, i.e., $S_{ij}=1$ if $i \ge j$ and $S_{ij}=0$ if $i < j$. For example, if $H=3$:
\[ S = \begin{pmatrix} 1 & 0 & 0 \\ 1 & 1 & 0 \\ 1 & 1 & 1 \end{pmatrix}. \]
The cumulative sum structure means $\bPsi_{t,q} = S \bPhi_{t,q}$.
The sum of outer products over the horizon $H$ for the truncated model is
\begin{align*}
\sum_{h=1}^H \psi^q_{t,h}(\psi^q_{t,h})^\top
&= \bPsi_{t,q}^\top \bPsi_{t,q}
 = (S \bPhi_{t,q})^\top (S \bPhi_{t,q}) \\
&= \bPhi_{t,q}^\top S^\top S \bPhi_{t,q}.
\end{align*}
Similarly, the sum of cross-products is
\begin{align*}
\sum_{h=1}^H \psi^q_{t,h}(\psi^{q'}_{t,h})^\top
&= \bPsi_{t,q}^\top \bPsi_{t,q'} \\
&= \bPhi_{t,q}^\top S^\top S \bPhi_{t,q'}.
\end{align*}
Let $M = S^\top S$. This is an $H \times H$ symmetric positive definite matrix. For $H=3$,
\begin{align*}
M &= S^\top S \\
&= \begin{pmatrix} 1 & 1 & 1 \\ 0 & 1 & 1 \\ 0 & 0 & 1 \end{pmatrix}
   \begin{pmatrix} 1 & 0 & 0 \\ 1 & 1 & 0 \\ 1 & 1 & 1 \end{pmatrix} \\
&= \begin{pmatrix} 3 & 2 & 1 \\ 2 & 2 & 1 \\ 1 & 1 & 1 \end{pmatrix}.
\end{align*}
Thus, $\tilde{I}^{\text{trunc}}(\mathcal{T}^K)$ can be expressed in terms of $\bPhi_{t,q}$ and $M$:
\begin{align*}
\tilde{I}^{\text{trunc}}(\mathcal{T}^K)
&= \\
&\quad \sum_{t=1}^T \Bigg(
\frac{1}{K} \sum_{q=1}^K \bPhi_{t,q}^\top M \bPhi_{t,q} \\
&\qquad - \frac{1}{K^2} \sum_{q,q'=1}^K \bPhi_{t,q}^\top M \bPhi_{t,q'} \Bigg).
\end{align*}
Let
\[
X_t = \begin{pmatrix} \bPhi_{t,1}^\top & \dots & \bPhi_{t,K}^\top \end{pmatrix}^\top \in \mathbb{R}^{(KH) \times d}.
\]
Let $J_K = \frac{1}{K} \mathbf{1}_K \mathbf{1}_K^\top$ be the $K \times K$ matrix of all $1/K$.
Let $I_K$ be the $K \times K$ identity matrix.
The FIM expressions can be written compactly using Kronecker products $\otimes$:
\begin{align*}
\tilde{I}^{\text{state}}(\mathcal{T}^K) &= \sum_{t=1}^T X_t^\top \left( \frac{1}{K} (I_K - J_K) \otimes I_H \right) X_t \\
\tilde{I}^{\text{trunc}}(\mathcal{T}^K) &= \sum_{t=1}^T X_t^\top \left( \frac{1}{K} (I_K - J_K) \otimes M \right) X_t
\end{align*}
Since $M = S^\top S$ is positive definite (as $S$ is invertible), its eigenvalues are positive. Let $\lambda_{min}(M)$ be the smallest eigenvalue of $M$. Then $M \succeq \lambda_{min}(M) I_H$.
The matrix $I_K - J_K$ is positive semidefinite (it's proportional to a projection matrix).
Therefore, using properties of Kronecker products and Loewner order:
\begin{align*}
(I_K - J_K) \otimes M
&\succeq (I_K - J_K) \otimes (\lambda_{min}(M) I_H) \\
&= \lambda_{min}(M) (I_K - J_K) \otimes I_H
\end{align*}
Multiplying by $1/K$ and summing over $t$ after pre- and post-multiplying by $X_t^\top$ and $X_t$:
\begin{align*}
&\sum_{t=1}^T X_t^\top \left( \frac{1}{K} (I_K - J_K) \otimes M \right) X_t \\
&\quad \succeq \lambda_{min}(M) \sum_{t=1}^T X_t^\top \left( \frac{1}{K} (I_K - J_K) \otimes I_H \right) X_t
\end{align*}
This shows $\tilde{I}^{\text{trunc}}(\mathcal{T}^K) \succeq \lambda_{min}(M) \cdot \tilde{I}^{\text{state}}(\mathcal{T}^K)$.
The eigenvalues of $M=S^\top S$ are known to be
\[
\lambda_k(M) = \frac{1}{4 \sin^2\left(\frac{(2k-1)\pi}{2(2H+1)}\right)}, \quad k=1, \dots, H.
\]
Since $\sin^2(x) \le 1$ for any $x$, the minimum eigenvalue $\lambda_{min}(M)$ is lower-bounded by $1/4$.
Therefore, we can state the result using this constant lower bound:
\[ \tilde{I}^{\text{trunc}}(\mathcal{T}^K) \succeq \frac{1}{4} \cdot \tilde{I}^{\text{state}}(\mathcal{T}^K) \]
This completes the proof.
\end{proof}

\subsection{Detailed Algorithm Description}
\label{sec:appendix_detailed_algorithm_description}

ED-PBRL (Alg. \ref{alg:ed_pbrl}) has two phases. Phase 1 optimizes $K$ policies using Convex-RL for the objective in Sec. \ref{sec:objective}. Phase 2 executes these policies to collect $K$ sets of trajectories for preference feedback.

\begin{algorithm}[p]
\caption{ED-PBRL using Convex-RL (Detailed Version of Algorithm \ref{alg:ed_pbrl_short})}
\label{alg:ed_pbrl}
\small
\begin{algorithmic}
\STATE {\bfseries Input:} Known MDP components $M=(\mS, \mA, P, H, \rho)$, number of policies $K$, number of episodes $T$, feature map $\Phi$, scalar criterion $s(\cdot)$, number of optimization iterations $N$, regularization constant $\lambda$ ($\lambda > 0$)
\STATE {\bfseries Output:} Estimated preference parameter $\hat{\theta}$
\smallskip
\STATE \textbf{Phase 1: Compute Optimal State Visitation Measures} \COMMENT{Solve \eqref{eq:final_objective}}
\smallskip
    \STATE Initialize ${}^{(1)}d_{\text{mix},q}^{\{1,\ldots,H\}} \leftarrow \mathbf{0}$ for $q=1,\dots,K$ \COMMENT{Initialize visitation measures}
    \FOR{$n = 1$ {\bfseries to} $N-1$}
        \STATE Let $I_{total}^{(n)} = T \sum_{h=1}^H \tilde{I}_h({}^{(n)}d_{\text{mix},1:K}^h) + \lambda I_d$ \COMMENT{Objective using ${}^{(n)}d_{\text{mix}}$}
        \FOR{$q = 1$ {\bfseries to} $K$}
            \STATE Compute gradient reward: $r_{\text{grad}_q}(h,s,a) \leftarrow \nabla_{d_{\pi_q}^h(s,a)} s(I_{total}^{(n)})$
            \STATE Find policy maximizing linear objective: $\pi_{\text{grad}_q}^{(n)} \leftarrow \texttt{value\_iteration}(M, r_{\text{grad}_q})$
            \STATE Compute corresponding visitation vector $d_{\text{grad}_q}^{(n),\{1,\ldots,H\}}$ from $\pi_{\text{grad}_q}^{(n)}$
        \ENDFOR
        \STATE Determine step size $\alpha_n$ via line search: For $q=1,\dots,K$, let $d_{\text{cand},q}^h(\alpha') = (1-\alpha'){}^{(n)}d_{\text{mix},q}^h + \alpha' d_{\text{grad},q}^{(n),h}$.
        \STATE \quad Find $\alpha_n \leftarrow \argmax_{\alpha' \in [0,1]}$ $s\!\left( T \sum_{h=1}^H \tilde{I}_h(d_{\text{cand},1:K}^h(\alpha')) + \lambda I_d \right)$.
        \FOR{$q = 1$ {\bfseries to} $K$}
            \STATE ${}^{(n+1)}d_{\text{mix},q}^{\{1,\ldots,H\}} \leftarrow (1-\alpha_n) \cdot {}^{(n)}d_{\text{mix},q}^{\{1,\ldots,H\}} + \alpha_n \cdot d_{\text{grad}_q}^{(n),\{1,\ldots,H\}}$
        \ENDFOR
    \ENDFOR
    \STATE Let $\{d^{*h}_{\text{mix},q}\}_{h,q} \leftarrow \{{}^{(N)}d_{\text{mix},q}^{\{1,\ldots,H\}}\}_{h,q}$ be the final optimal visitation measures.
\smallskip
\STATE \textbf{Phase 2: Policy Extraction and Trajectory Sampling}
\smallskip
    \FOR{$q = 1$ {\bfseries to} $K$}
        \STATE Extract policy $\pi^*_q$ from final visitation measure $d^{*h}_{\text{mix},q}$
        \STATE $\mathcal{T}_q \leftarrow \emptyset$ \COMMENT{Initialize trajectory set for policy $\pi^*_q$}
    \ENDFOR
    \FOR{$t = 1$ {\bfseries to} $T$}
        \FOR{$q = 1$ {\bfseries to} $K$}
            \STATE Sample trajectory $\tau_t^q \sim \pi^*_q$
            \STATE $\mathcal{T}_q \leftarrow \mathcal{T}_q \cup \{\tau_t^q\}$
        \ENDFOR
    \ENDFOR
    \STATE Let $\mathcal{D}_{feedback} = \{\mathcal{T}_q\}_{q=1}^K$ be the collected trajectories.
\smallskip
\STATE \textbf{Phase 3: Parameter Estimation}
\smallskip
    \STATE Collect preference feedback for trajectories in $\mathcal{D}_{feedback}$.
    \STATE Estimate $\hat{\theta}$ using all collected feedback (see Sec. \ref{sec:problem_setting}).

\STATE {\bfseries return} $\hat{\theta}$
\end{algorithmic}
\end{algorithm}

\paragraph{Phase 1: Compute Optimal State Visitation Measures} This phase adapts the Frank-Wolfe algorithm \cite{frank1956algorithm} to maximize $s(I_{total}(\pi_{1:K}))$. Here, $I_{total}(\pi_{1:K})$ is the total approximate expected regularized FIM (the matrix argument of $s(\cdot)$ in Eq.~\ref{eq:final_objective}), expressed via policy-induced visitation measures. We iteratively build state-action visitation measures $\{ {}^{(n)}d^h_{\text{mix},q} \}$ corresponding to conceptual mixture policies, starting from ${}^{(1)}d_{\text{mix},q} = \mathbf{0}$.

Each iteration $n$ of this phase involves these main steps:
\begin{enumerate}
    \item \textbf{Gradient Computation}: The gradient of $s(I_{total})$ (using the current ${}^{(n)}d_{\text{mix},q}$) defines a reward function $r_{\text{grad}_q}$ for each policy $q$.
    \item \textbf{Policy Search Oracle}: For each $q$, a new base policy $\pi_{\text{grad}_q}^{(n)}$ is found by maximizing the expected cumulative reward $r_{\text{grad}_q}$ (e.g., via value iteration). Its visitation measure $d_{\text{grad}_q}^{(n)}$ is computed.
    \item \textbf{Line Search for Step Size}: The optimal step size $\alpha_n$ is determined by maximizing $s(\cdot)$ for the candidate mixture $(1-\alpha_n){}^{(n)}d_{\text{mix},q} + \alpha_n d_{\text{grad}_q}^{(n)}$.
    \item \textbf{Mixture Update}: The next mixture's visitation measure is constructed:
    \[
    {}^{(n+1)}d_{\text{mix},q} \leftarrow (1-\alpha_n) {}^{(n)}d_{\text{mix},q} + \alpha_n d_{\text{grad}_q}^{(n)}.
    \]
    This efficiently computes the visitation measure of the new conceptual mixture policy $\pi_{\text{mix},q}^{(n)}$.
\end{enumerate}
This iterative process converges to the globally optimal visitation measures $\{d^{*h}_{\text{mix},q}\}$ due to the concavity of $s(\cdot)$ and the convexity of the feasible set of visitation measures.

\paragraph{Phase 2: Policy Extraction and Trajectory Sampling} Upon convergence of Phase 1 after $N-1$ iterations, the final policies $\{\pi^*_q\}_{q=1}^K$ are extracted from the resulting state-action visitation measures $\{ d^{*h}_{\text{mix},q} \}_{q=1}^K$. These policies are then executed to generate the $K \times T$ trajectories, which form the dataset $\mathcal{D}_{feedback}$ for collecting user preference feedback.

\paragraph{Phase 3: Parameter Estimation} After the trajectories are generated and collected into $\mathcal{D}_{feedback}$ in Phase 2, preference feedback is obtained from the user. This accumulated feedback is then used to compute the final estimate $\hat{\theta}$ of the true reward parameter $\theta$, as detailed in Sec. \ref{sec:problem_setting}.

\subsection{Detailed Theoretical Guarantees}
\label{sec:appendix_detailed_guarantees}

The Convex-RL optimization phase (Alg. \ref{alg:ed_pbrl}, lines 13-24) employs the Frank-Wolfe (conditional gradient) method over the convex polytope of valid state-action visitation measures \cite{puterman2014markov, frank1956algorithm, pmlr-v28-jaggi13}. The exact line search for the step size $\alpha_n$ is a standard Frank-Wolfe variant.

The key to guaranteeing global optimality is the concavity of the objective. Let $D = \{d^h_q\}_{h \in [H], q \in [K]}$ be the collection of state visitation vectors, with each $d^h_q \in \Delta^{|\mS|-1}$. The domain of $D$ is convex. The objective is $f(D) = s(I_{total}(D))$, where $I_{total}(D)$ is the matrix argument of $s(\cdot)$ in Eq. \ref{eq:final_objective}:
\begin{align*}
I_{total}(D)
&= T \sum_{h=1}^H \Bigg[ \Phi^T \Bigg(
\frac{1}{K} \sum_{q=1}^K \mathrm{diag}(d^h_q) \\
&\qquad - \left(\frac{1}{K}\sum_{q=1}^K d^h_q\right)
\left(\frac{1}{K}\sum_{q=1}^K d^h_q\right)^T
\Bigg) \Phi \Bigg] + \lambda I_d.
\end{align*}

With concavity established (Thm. \ref{thm:objective_concavity}), we can state the convergence guarantee for Alg. \ref{alg:ed_pbrl} (Thm. \ref{thm:fw_convergence_appendix}).

\subsubsection{Proof of Objective Concavity (Thm. \ref{thm:objective_concavity})}
\label{proof:objective_concavity_full}

\thmObjectiveConcavityRestateLabel*

\begin{proof}
Let $D = \{d^h_q\}_{h \in [H], q \in [K]}$ be the collection of state visitation vectors, where each $d^h_q \in \Delta^{|\mS|-1}$ (the probability simplex in $\mathbb{R}^{|\mS|}$). The domain of $D$, denoted $\mathcal{D}_{sv}$, is a Cartesian product of simplices, which is a convex set.
The objective function is $f(D) = s(I_{total}(D))$, where
\[I_{total}(D) = T \sum_{h=1}^H \tilde{I}_h(D^h) + \lambda I_d,\]
which matches Eq.~\ref{eq:final_objective} in the main text.
and $D^h = (d^h_1, \dots, d^h_K)$ are the visitation vectors for timestep $h$. The term $\tilde{I}_h(D^h)$ is given by:
\begin{align*}
\tilde{I}_h(D^h) &= \Phi^T M_h(D^h) \Phi, \\
M_h(D^h) &= \frac{1}{K} \sum_{q=1}^K \mathrm{diag}(d^h_q)  - \left(\frac{1}{K}\sum_{q=1}^K d^h_q\right)
\left(\frac{1}{K}\sum_{q=1}^K d^h_q\right)^T.
\end{align*}

We will prove the concavity of $f(D)$ by showing that $I_{total}(D)$ is a concave matrix-valued function of $D$, and then using the properties of $s(\cdot)$.

1.  \textbf{Concavity of $M_h(D^h)$}:
    Let $L_h(D^h) = \frac{1}{K} \sum_{q=1}^K \mathrm{diag}(d^h_q)$. The function $\mathrm{diag}(v)$ is a linear mapping from a vector $v$ to a diagonal matrix. Thus, $L_h(D^h)$ is a linear function of the collection of vectors $D^h = (d^h_1, \dots, d^h_K)$. Linear functions are both concave and convex.

    Let $\bar{d}^h(D^h) = \frac{1}{K}\sum_{q=1}^K d^h_q$. This is also a linear function of $D^h$.
    Consider the function $Q(v) = -vv^T$. The function $v \mapsto vv^T$ is convex. To see this, for $v_1, v_2$ and $\alpha \in [0,1]$:
    \begin{align*}
        &\alpha v_1 v_1^T + (1-\alpha) v_2 v_2^T \\
        &\quad - (\alpha v_1 + (1-\alpha) v_2)(\alpha v_1 + (1-\alpha) v_2)^T \\
        &= \alpha v_1 v_1^T + (1-\alpha) v_2 v_2^T \\
        &\quad - \big(\alpha^2 v_1 v_1^T + \alpha(1-\alpha)(v_1 v_2^T + v_2 v_1^T) \\
        &\qquad + (1-\alpha)^2 v_2 v_2^T\big) \\
        &= (\alpha - \alpha^2) v_1 v_1^T \\
        &\quad + \big((1-\alpha) - (1-\alpha)^2\big) v_2 v_2^T \\
        &\quad - \alpha(1-\alpha)(v_1 v_2^T + v_2 v_1^T) \\
        &= \alpha(1-\alpha) v_1 v_1^T + \alpha(1-\alpha) v_2 v_2^T \\
        &\quad - \alpha(1-\alpha)(v_1 v_2^T + v_2 v_1^T) \\
        &= \alpha(1-\alpha) (v_1 - v_2)(v_1 - v_2)^T.
    \end{align*}
    Since $\alpha(1-\alpha) \ge 0$ and $(v_1 - v_2)(v_1 - v_2)^T \succeq 0$ (it's an outer product, hence positive semidefinite), the expression is $\succeq 0$. Thus, $v \mapsto vv^T$ is convex.
    Therefore, $Q(v) = -vv^T$ is concave.
    The composition of a concave function with a linear function is concave. Since $Q(v)$ is concave and $\bar{d}^h(D^h)$ is linear, the function $D^h \mapsto Q(\bar{d}^h(D^h)) = -\bar{d}^h(D^h)(\bar{d}^h(D^h))^T$ is concave.

    $M_h(D^h) = L_h(D^h) + Q(\bar{d}^h(D^h))$ is the sum of a linear function (which is concave) and a concave function. Thus, $M_h(D^h)$ is a concave matrix-valued function of $D^h$.

2.  \textbf{Concavity of $\tilde{I}_h(D^h)$}:
    The function $\tilde{I}_h(D^h) = \Phi^T M_h(D^h) \Phi$ is a congruence transformation of $M_h(D^h)$. Since $M_h(D^h)$ is concave in $D^h$, and congruence transformations preserve concavity (i.e., if $A(x)$ is concave, then $C^T A(x) C$ is concave for any constant matrix $C$), $\tilde{I}_h(D^h)$ is concave in $D^h$.

3.  \textbf{Concavity of $I_{total}(D)$}:
    The total approximate FIM before regularization is $\sum_{h=1}^H \tilde{I}_h(D^h)$. Since each $\tilde{I}_h(D^h)$ is concave with respect to its arguments $D^h$ (and thus with respect to the full $D$, as it doesn't depend on $D^{h'}$ for $h' \neq h$), their sum is concave with respect to $D$.
    Multiplying by a non-negative scalar $T$ preserves concavity. So, $T \sum_{h=1}^H \tilde{I}_h(D^h)$ is concave in $D$.
    Adding a constant matrix $\lambda I_d$ also preserves concavity.
    Therefore, $I_{total}(D) = T \sum_{h=1}^H \tilde{I}_h(D^h) + \lambda I_d$ is a concave matrix-valued function of $D$.

4.  \textbf{Concavity of $s(I_{total}(D))$}:
    We are given that the scalar criterion $s: \mathbb{S}^d_+ \to \mathbb{R}$ is concave and matrix-monotone non-decreasing.
    If $g(x)$ is matrix-valued concave and $s(A)$ is scalar-valued concave and non-decreasing in $A$ (Loewner order), then $s(g(x))$ is concave (see Boyd \& Vandenberghe, Convex Optimization, Sec. 3.2.4).
    In our case, $g(D) = I_{total}(D)$ is concave in $D$.
    Thus, $f(D) = s(I_{total}(D))$ is concave with respect to $D = \{d^h_q\}_{h \in [H], q \in [K]}$ over the convex domain $\mathcal{D}_{sv}$.
\end{proof}

\subsubsection{Proof of Convergence Guarantee (Thm. \ref{thm:fw_convergence_appendix})}
\label{proof:fw_convergence_full}

\begin{restatable}{theorem}{thmFWConvergenceAppendix}[Convergence Guarantee (Detailed)]
\label{thm:fw_convergence_appendix}
Let $D^{(n)}$ be the sequence of collections of state visitation measures generated by Algorithm \ref{alg:ed_pbrl}, where $D^{(1)}$ is the initialization and $D^{(n+1)}$ is the iterate after $n$ Frank-Wolfe steps. Let $f(D) = s(I_{total}(D))$ be the objective function defined in Theorem \ref{thm:objective_concavity}, and let $D^* \in \mathcal{D}_{sv}$ be an optimal solution, $D^* = \argmax_{D \in \mathcal{D}_{sv}} f(D)$. The domain $\mathcal{D}_{sv}$ of valid collections of state visitation measures is compact and convex.
If Algorithm \ref{alg:ed_pbrl} performs $N_{iter}$ iterations of the Frank-Wolfe update (i.e., the loop from $n=1$ to $N_{iter}$ in the algorithm's notation, resulting in the final iterate $D^{(N_{iter}+1)}$), using exact line search for $\alpha_n$ at each iteration, then the suboptimality of the final iterate $D^{(N_{iter}+1)}$ is bounded by:
\[ f(D^*) - f(D^{(N_{iter}+1)}) \le \frac{2 C_f}{N_{iter} + 2} \]
where $C_f$ is the curvature constant of $f$ over $\mathcal{D}_{sv}$.
\end{restatable}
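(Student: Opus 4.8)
The plan is to recognize this as the textbook convergence guarantee for Frank--Wolfe with exact line search on a smooth concave objective over a compact convex set, following \citet{jaggi2013revisiting}, and to supply the two problem-specific ingredients the generic theorem requires: that the linearization oracle is solved exactly, and that the curvature constant $C_f$ is finite. Since maximizing the concave $f(D) = s(I_{total}(D))$ is equivalent to minimizing the convex $-f$, I would first invoke Theorem~\ref{thm:objective_concavity} to guarantee concavity of $f$ on the convex polytope $\mathcal{D}_{sv}$, so that the first-order condition $f(D^*) \le f(D^{(n)}) + \langle \nabla f(D^{(n)}),\, D^* - D^{(n)}\rangle$ holds at every iterate.

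Next I would verify the oracle step. The Frank--Wolfe linear maximization oracle seeks $\hat{D}^{(n)} = \argmax_{D \in \mathcal{D}_{sv}} \langle \nabla f(D^{(n)}), D\rangle$. Because the gradient with respect to each $d^h_q$ defines a per-policy reward $r_{\mathrm{grad}_q}$ and the visitation polytope factorizes across the $K$ policies, this linear program decouples into $K$ independent problems, each of which is exactly a linear reward-maximization over the single-policy visitation polytope --- a standard MDP planning problem solved optimally by value iteration. This is the Convex-RL observation, so the oracle is exact (i.e.\ $\delta=0$ in Jaggi's approximate-oracle notation).

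I would then state the curvature/descent inequality. Defining the curvature constant
\[
C_f = \sup_{\substack{D,\hat D \in \mathcal{D}_{sv},\ \alpha \in (0,1] \\ D' = D + \alpha(\hat D - D)}} \frac{2}{\alpha^2}\big( f(D) + \langle \nabla f(D), D' - D\rangle - f(D')\big),
\]
any Frank--Wolfe step with step size $\alpha \in [0,1]$ satisfies $f(D') \ge f(D^{(n)}) + \alpha \langle \nabla f(D^{(n)}), \hat D^{(n)} - D^{(n)}\rangle - \tfrac{\alpha^2}{2} C_f$. Combining oracle optimality, $\langle \nabla f(D^{(n)}), \hat D^{(n)} - D^{(n)}\rangle \ge \langle \nabla f(D^{(n)}), D^* - D^{(n)}\rangle$, with the first-order condition lower-bounds the linear term by the primal gap $h_n := f(D^*) - f(D^{(n)})$. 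Since the algorithm selects $\alpha_n$ by exact line search, the attained value dominates that of the reference schedule $\alpha_n = 2/(n+2)$, yielding the recursion $h_{n+1} \le (1 - \alpha_n) h_n + \tfrac{\alpha_n^2}{2} C_f$. The standard induction of \citet{jaggi2013revisiting} with this schedule, applied with the paper's indexing (final iterate $D^{(N_{iter}+1)}$ after $N_{iter}$ steps), then delivers the claimed $f(D^*) - f(D^{(N_{iter}+1)}) \le \tfrac{2 C_f}{N_{iter}+2}$.

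The main obstacle is not the recursion --- that is the routine argument --- but establishing that $C_f < \infty$, since the scalarizations in play (A-design $s(A) = 1/\Tr(A^{-1})$ and V-design $s(A) = -\Tr(V A^{-1})$) blow up as $A$ approaches the boundary of the positive semidefinite cone. Here the regularizer is decisive: for every feasible $D$ we have $I_{total}(D) = T\sum_{h=1}^H \tilde I_h(D^h) + \lambda I_d \succeq \lambda I_d \succ 0$ with $\lambda > 0$, while the feature-weighted terms remain bounded above on the compact domain $\mathcal{D}_{sv}$. Hence $I_{total}(D)$ ranges over a compact subset of the interior of the PSD cone, on which $s$ is twice continuously differentiable with bounded Hessian; finiteness of $C_f$ follows from this uniform smoothness together with compactness of $\mathcal{D}_{sv}$.
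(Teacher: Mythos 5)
Your proof is correct and follows essentially the same route as the paper's: both verify concavity via Theorem~\ref{thm:objective_concavity} and compactness/convexity of $\mathcal{D}_{sv}$, identify the value-iteration step as an exact linear maximization oracle (the Convex-RL observation), and then invoke the standard Frank--Wolfe analysis of \citet{jaggi2013revisiting} with exact line search to obtain the $2C_f/(N_{iter}+2)$ bound. One point where your write-up is actually stronger than the paper's: the paper merely asserts that $C_f$ is finite because $\mathcal{D}_{sv}$ is compact and $f$ is continuously differentiable, whereas you correctly observe that the scalarizations involved degenerate at the boundary of the PSD cone and that it is the regularizer $\lambda I_d \succ 0$ which confines $I_{total}(D)$ to a compact subset of the cone's interior, where $s$ has a bounded Hessian --- a gap the paper glosses over.
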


\thmFWConvergenceAppendix*

\begin{proof}
The convergence of Algorithm \ref{alg:ed_pbrl} relies on standard results for the Frank-Wolfe algorithm when maximizing a concave function over a compact convex set. We verify the conditions required for these guarantees.

\textbf{1. Objective Function and Domain:}
\begin{itemize}
    \item \textbf{Concavity:} The objective function $f(D) = s(I_{total}(D))$ is concave with respect to the collection of state visitation vectors $D = \{d^h_q\}_{h,q}$, as proven in Theorem \ref{thm:objective_concavity}.
    \item \textbf{Domain $\mathcal{D}_{sv}$:} The domain $\mathcal{D}_{sv}$ is the set of all valid collections of state visitation measures $\{d^h_q\}_{h,q}$. Each $d^h_q$ is a probability distribution over the finite state space $\mS$, so it belongs to the probability simplex $\Delta^{|\mS|-1}$. The full domain $\mathcal{D}_{sv}$ is a Cartesian product of $K \times H$ such simplices. Each simplex is compact and convex, and thus their Cartesian product $\mathcal{D}_{sv}$ is also compact and~convex.
\end{itemize}

\textbf{2. Frank-Wolfe Algorithm Steps:}
Algorithm \ref{alg:ed_pbrl} implements the Frank-Wolfe algorithm:
\begin{itemize}
    \item \textbf{Initialization (Line 13):} ${}^{(1)}d_{\text{mix},q} \leftarrow \mathbf{0}$. This initializes the iterate $D^{(1)}$ within $\mathcal{D}_{sv}$ (the zero vector is on the boundary of the simplex if non-negativity is the only constraint, or can be seen as a valid (degenerate) visitation measure).
    \item \textbf{Gradient Computation (Line 16):} The algorithm computes the gradient $\nabla f(D^{(n)})$ (implicitly, by computing $r_{\text{grad}_q}$ which is derived from this gradient).
    \item \textbf{Linear Maximization Oracle (LMO) (Lines 17-18):} For each $q$, the step $\pi_{\text{grad}_q}^{(n)} \leftarrow \texttt{value\_iteration}(M, r_{\text{grad}_q})$ finds a policy that maximizes the linear objective $\sum_{s,a} d^h_{\pi}(s,a) r_{\text{grad}_q}(h,s,a)$ over all policies $\pi$. This is equivalent to finding a vertex $S_q^{(n)}$ of the polytope of visitation measures for policy~$q$ that maximizes $\langle \nabla_{d^h_q} f(D^{(n)}), S_q^{(n)} \rangle$. The collection of these $S_q^{(n)}$ for all $q$ forms the $S^{(n)}$ in the standard Frank-Wolfe update $S^{(n)} = \argmax_{S \in \mathcal{D}_{sv}} \langle \nabla f(D^{(n)}), S \rangle$. The computation of $d_{\text{grad}_q}^{(n)}$ from $\pi_{\text{grad}_q}^{(n)}$ yields this~$S^{(n)}$.
    \item \textbf{Step Size (Line 20):} $\alpha_n$ is determined by exact line search: $\alpha_n \leftarrow \argmax_{\alpha' \in [0,1]} f((1-\alpha')D^{(n)} + \alpha' S^{(n)})$.
    \item \textbf{Update (Line 22):} $D^{(n+1)} \leftarrow (1-\alpha_n)D^{(n)} + \alpha_n S^{(n)}$.
\end{itemize}
The algorithm performs $N_{iter} = N-1$ such iterations, producing iterates $D^{(2)}, \dots, D^{(N)}$. (Note: $D^{(1)}$ is the initialization).

\textbf{3. Convergence Rate:}
For a concave function $f$ maximized over a compact convex set $\mathcal{D}$ using the Frank-Wolfe algorithm with exact line search for the step size, the suboptimality gap $h_k = f(D^*) - f(D^{(k+1)})$ after $k$ iterations (where $D^{(1)}$ is the initial point and $D^{(k+1)}$ is the iterate after $k$ Frank-Wolfe steps) is bounded by (Jaggi, 2013, Theorem 1 \cite{pmlr-v28-jaggi13} and discussion for line search):
\[ f(D^*) - f(D^{(k+1)}) \le \frac{2 C_f}{k+2} \]
where $C_f$ is the curvature constant of $f$ over $\mathcal{D}$, defined as
\[ C_f = \sup_{\substack{X,S \in \mathcal{D}, \gamma \in (0,1] \\ Y=(1-\gamma)X+\gamma S}} \frac{2}{\gamma^2} (f(X) + \gamma \langle \nabla f(X), S-X \rangle - f(Y)). \]
In our case, Algorithm \ref{alg:ed_pbrl} initializes with $D^{(1)}$ and performs $N_{iter}$ iterations of the Frank-Wolfe update (corresponding to the loop variable $n$ from $1$ to $N_{iter}$ in the algorithm's notation as per Algorithm \ref{alg:ed_pbrl} where the loop runs $N-1$ times; here we use $N_{iter}$ to denote this count of iterations). The final iterate is $D^{(N_{iter}+1)}$.
The standard bound $2C_f / (k+2)$ applies after $k$ iterations. Here, $k = N_{iter}$.
So, the suboptimality of the final iterate $D^{(N_{iter}+1)}$ is bounded by:
\[ f(D^*) - f(D^{(N_{iter}+1)}) \le \frac{2 C_f}{N_{iter}+2} \]
This holds for $N_{iter} \ge 1$.
The constant $C_f$ depends on the objective function $f$ and the domain $\mathcal{D}_{sv}$. Since $\mathcal{D}_{sv}$ is compact, $C_f$ is well-defined and finite, provided $f$ is continuously differentiable (which it is, assuming $s(\cdot)$ is, and $I_{total}(D)$ is differentiable).

Thus, Algorithm \ref{alg:ed_pbrl} converges to the global optimum with a rate of $O(1/N_{iter})$.
\end{proof}

\section{Vocabulary-Free Experimental Design via REINFORCE}
\label{sec:reinforce}

We extend ED-PBRL to the vocabulary-free setting, where prompts are generated by learned LLM policies rather than selected from a fixed vocabulary. This enables optimization over the full space of natural language prompts, addressing scalability concerns about fixed vocabulary constraints.

\subsection{Setup and Notation}
\label{sec:reinforce:setup}

We parameterize $K$ prompt-generation policies using $K$ copies of a pretrained GPT-2 model fine-tuned on Stable Diffusion prompts (MagicPrompt). Each policy $q \in [K]$ has learnable parameters $\theta_q$ and generates prompts autoregressively. In practice, policies share the same base initialization, and diversity is induced primarily through stochastic sampling (and optionally small initialization noise). All embeddings are computed using a \emph{frozen} CLIP text encoder for computational efficiency; Stable Diffusion image generation occurs only after optimization, during feedback collection.

\paragraph{Truncated trajectory embeddings.}
Following the Truncated Trajectory Preference Feedback model (Section~\ref{sec:feedback_models}), we embed prompts at each word position, not just the final prompt. For a prompt $\tau_q = (w_1, w_2, \ldots, w_H)$ from policy $q$ consisting of $H$ words, we use the truncated prefixes $\tau_q[1:h] = (w_1, \ldots, w_h)$ for $h = 1, \ldots, H$, and embed each using the frozen CLIP text encoder:
\begin{equation}
    \phi(\tau_q[1:h]) = \text{CLIP}_{\text{text}}(\tau_q[1:h]) \in \mathbb{R}^d
\end{equation}

This gives $K \times H$ embeddings per trajectory (one per policy per word position), directly instantiating the per-timestep structure from the main ED-PBRL formulation. In our experiments, we use $H=8$ words per prompt, yielding 32 embeddings per trajectory.

\subsection{Per-Timestep Fisher Information}
\label{sec:reinforce:fisher}

The Fisher Information structure directly instantiates the approximation from Section~\ref{sec:objective}. Consider a single \emph{trajectory}: a set of $K$ prompts $\{\tau_1, \ldots, \tau_K\}$, one sampled from each policy. At each word position $h \in [H]$, we have $K$ truncated trajectory embeddings $\{\phi(\tau_q[1:h])\}_{q=1}^K$.

\paragraph{Per-timestep contribution.} The Fisher contribution at timestep $h$ is:
\begin{equation}
    I_h = \frac{1}{K} \sum_{q=1}^{K} \phi_q^{(h)} (\phi_q^{(h)})^\top - \mu_h \mu_h^\top
    \label{eq:reinforce_I_h}
\end{equation}
where $\phi_q^{(h)} := \phi(\tau_q[1:h])$ and $\mu_h = \frac{1}{K} \sum_{q=1}^K \phi_q^{(h)}$ is the mean embedding at timestep $h$. This is the empirical covariance of the $K$ policy embeddings at position $h$.

\paragraph{Single-trajectory Fisher.} The Fisher Information for one trajectory sums over timesteps:
\begin{equation}
    I_{\text{traj}} = \sum_{h=1}^{H} I_h
    \label{eq:reinforce_I_traj}
\end{equation}

This structure directly corresponds to Eq.~\eqref{eq:approx_fim_matrix_form_main} in the main paper, where the per-timestep visitation measures $d_q^h$ are replaced by Monte Carlo samples from the learned policies.

\subsection{\texorpdfstring{Training Objective: Summing $T$ Trajectories}{Training Objective: Summing T Trajectories}}
\label{sec:reinforce:objective}

A critical implementation detail is how multiple trajectories are combined. Let $T$ denote the number of trajectories used to estimate the Fisher Information.

\paragraph{The training objective.} We sample $T$ independent trajectories and \emph{sum} their Fisher contributions:
\begin{equation}
    I = \sum_{t=1}^{T} I_{\text{traj}}^{(t)} + \lambda I_d = \sum_{t=1}^{T} \sum_{h=1}^{H} I_h^{(t)} + \lambda I_d
    \label{eq:reinforce_I_sum}
\end{equation}
where $I_{\text{traj}}^{(t)}$ is the Fisher from trajectory $t$ (Eq.~\ref{eq:reinforce_I_traj}), and $\lambda > 0$ is the regularization parameter. The D-optimal objective is:
\begin{equation}
    \mathcal{L} = \log \det(I)
    \label{eq:reinforce_logdet}
\end{equation}

\subsection{Monte Carlo Gradient Estimation}
\label{sec:reinforce:gradient}

We use REINFORCE (the score function estimator) to optimize the non-differentiable sampling process. Let $M$ denote the number of Monte Carlo samples used for gradient estimation.

\paragraph{Sampling structure.} For each gradient step, we draw $M$ independent \emph{meta-samples}. Each meta-sample $m \in [M]$ consists of $T$ trajectories, where each trajectory contains $K$ prompts (one per policy). The total number of prompts generated per gradient step is $M \times T \times K$.

\paragraph{Per-sample objective.} For meta-sample $m$, we compute:
\begin{equation}
    I^{(m)} = \sum_{t=1}^{T} I_{\text{traj}}^{(m,t)} + \lambda I_d, \qquad \mathcal{L}_m = \log\det(I^{(m)})
\end{equation}

\paragraph{Variance reduction via baseline.} Raw REINFORCE has high variance. We use a \emph{batch mean baseline}: the mean objective across the $M$ samples within the current iteration:
\begin{equation}
    \bar{\mathcal{L}} = \frac{1}{M} \sum_{m=1}^{M} \mathcal{L}_m
\end{equation}
The centered advantage $(\mathcal{L}_m - \bar{\mathcal{L}})$ reduces variance while preserving the gradient's expected direction.

\paragraph{REINFORCE gradient.} For policy $q$, the gradient estimate is:
\begin{equation}
    \nabla_{\theta_q} \mathbb{E}[\mathcal{L}] \approx \frac{1}{M} \sum_{m=1}^{M} \left( \mathcal{L}_m - \bar{\mathcal{L}} \right) \cdot \nabla_{\theta_q} \log p_{\theta_q}(\mathcal{T}_q^{(m)})
    \label{eq:reinforce_grad}
\end{equation}
where $\mathcal{T}_q^{(m)}$ denotes all prompts generated by policy $q$ in meta-sample $m$ (i.e., $T$ prompts), and:
\begin{align*}
    &\log p_{\theta_q}(\mathcal{T}_q^{(m)}) = \sum_{t=1}^{T} \log p_{\theta_q}(\tau_q^{(m,t)}) \\
    &\quad = \sum_{t=1}^{T} \sum_{j=1}^{|\mathbf{z}_q^{(m,t)}|} \log p_{\theta_q}(z_j \mid z_{1:j-1})
\end{align*}
Here $\mathbf{z}_q^{(m,t)}$ denotes the BPE token sequence of prompt $\tau_q^{(m,t)}$. In implementation, we accumulate token-level log-probabilities and use word boundaries only for constructing CLIP prefixes; the REINFORCE estimator uses the sum over token log-probabilities.

\subsection{Algorithm}
\label{sec:reinforce:algorithm}

\begin{algorithm}[H]
\caption{ED-PBRL with REINFORCE (Vocabulary-Free)}
\label{alg:reinforce}
\begin{algorithmic}[1]
\REQUIRE $K$ LLM policies with parameters $\{\theta_1, \ldots, \theta_K\}$, frozen CLIP text encoder
\REQUIRE Hyperparameters: $N$ (iterations), $M$ (MC samples), $T$ (trajectories per sample), $H$ (words), $\lambda$ (reg.), $\eta$ (learning rate)
\FOR{$n = 1, \ldots, N$}
    \STATE \textbf{Sample:} For each $m \in [M]$, $t \in [T]$, $q \in [K]$: sample prompt $\tau_q^{(m,t)} \sim p_{\theta_q}$
    \STATE \textbf{Embed:} Compute $\phi(\tau_q^{(m,t)}[1:h]) = \text{CLIP}_{\text{text}}(\tau_q^{(m,t)}[1:h])$ for all $m, t, q, h \in [H]$
    \FOR{$m = 1, \ldots, M$}
        \FOR{$t = 1, \ldots, T$}
            \FOR{$h = 1, \ldots, H$}
                \STATE $\mu_h^{(m,t)} \leftarrow \frac{1}{K}\sum_{q=1}^K \phi(\tau_q^{(m,t)}[1:h])$
                \STATE $I_h^{(m,t)} \leftarrow \frac{1}{K}\sum_{q=1}^K \phi(\tau_q^{(m,t)}[1:h])\phi(\tau_q^{(m,t)}[1:h])^\top - \mu_h^{(m,t)}(\mu_h^{(m,t)})^\top$
            \ENDFOR
        \ENDFOR
        \STATE $I^{(m)} \leftarrow \sum_{t=1}^{T} \sum_{h=1}^{H} I_h^{(m,t)} + \lambda I_d$ \hfill \COMMENT{Sum $T$ trajectories, Eq.~\ref{eq:reinforce_I_sum}}
        \STATE $\mathcal{L}_m \leftarrow \log\det(I^{(m)})$
    \ENDFOR
    \STATE $\bar{\mathcal{L}} \leftarrow \frac{1}{M}\sum_{m=1}^M \mathcal{L}_m$ \hfill \COMMENT{Batch mean baseline}
    \FOR{$q = 1, \ldots, K$}
        \STATE $g_q \leftarrow \frac{1}{M} \sum_{m=1}^{M} (\mathcal{L}_m - \bar{\mathcal{L}}) \cdot \nabla_{\theta_q} \log p_{\theta_q}(\mathcal{T}_q^{(m)})$ \hfill \COMMENT{Eq.~\ref{eq:reinforce_grad}}
        \STATE $\theta_q \leftarrow \theta_q + \eta \cdot g_q$ \hfill \COMMENT{Gradient ascent}
    \ENDFOR
\ENDFOR
\STATE \textbf{Return:} Optimized policies $\{p_{\theta_q}\}_{q=1}^K$
\end{algorithmic}
\end{algorithm}

After optimization, prompts are sampled from the learned policies and passed to Stable Diffusion for image generation. Preference feedback is then collected and used to estimate $\hat{\theta}$ as in the standard ED-PBRL pipeline.

\subsection{Direct Optimization of Truncated Trajectory Fisher}
\label{sec:reinforce:comparison}

A notable advantage of the REINFORCE formulation is that it directly optimizes the Fisher Information for the \emph{truncated trajectory} feedback model (Section~\ref{sec:feedback_models}). In the main Frank-Wolfe approach, tractability requires optimizing the state-based Fisher, which only indirectly improves the truncated trajectory Fisher via Theorem~\ref{thm:informal_fim_comparison} (under an additive feature assumption). The REINFORCE approach bypasses this indirection: since we compute embeddings $\phi(\tau[1:h])$ of actual truncated prompts, the Monte Carlo Fisher estimate (Eq.~\ref{eq:reinforce_I_sum}) directly targets the quantity we care about. The trade-off is a non-convex optimization landscape, but our experiments show that REINFORCE with variance reduction still achieves consistent improvement.

\paragraph{Practical hyperparameters.} We run $N=200$ optimization iterations with $M=8$ Monte Carlo samples for gradient estimation, $T=8$ trajectories per sample for Fisher estimation, and $H=8$ words per prompt. We use $\lambda=1$ regularization and learning rate $\eta = 10^{-3}$ with SGD. The baseline model is MagicPrompt (GPT-2 fine-tuned on 80k Lexica.art prompts for Stable Diffusion). All $K=4$ policies are updated jointly at each iteration. See Table~\ref{tab:hyperparams} for the complete parameter summary.

\subsection{Additional Experimental Details}
\label{sec:reinforce:experiments}

The main experimental results for the REINFORCE extension (optimization trajectory and t-SNE visualization) are presented in Figure~\ref{fig:reinforce_results} in Section~\ref{sec:reinforce_preview}. Here we provide an additional quantitative metric for embedding space structure.

\paragraph{Between-policy cosine similarity.}
To quantitatively assess whether the $K=4$ optimized policies learn complementary prompt distributions, we analyze the pairwise similarity of their CLIP embeddings. We sample 10 prompts from each policy, repeat this for 10 independent draws, and compute the mean cosine similarity between prompts from \emph{different} policies (lower indicates more diversity). Figure~\ref{fig:reinforce_cosine_between} reports the mean $\pm$ std across repeats. ED-PBRL yields substantially lower between-policy similarity ($0.140 \pm 0.006$) than the unoptimized MagicPrompt baseline ($0.185 \pm 0.012$), indicating more structured diversity in the learned prompt generators; the low standard deviation across repeats suggests this effect is stable.

\begin{figure}[ht]
\centering
\includegraphics[width=0.4\linewidth]{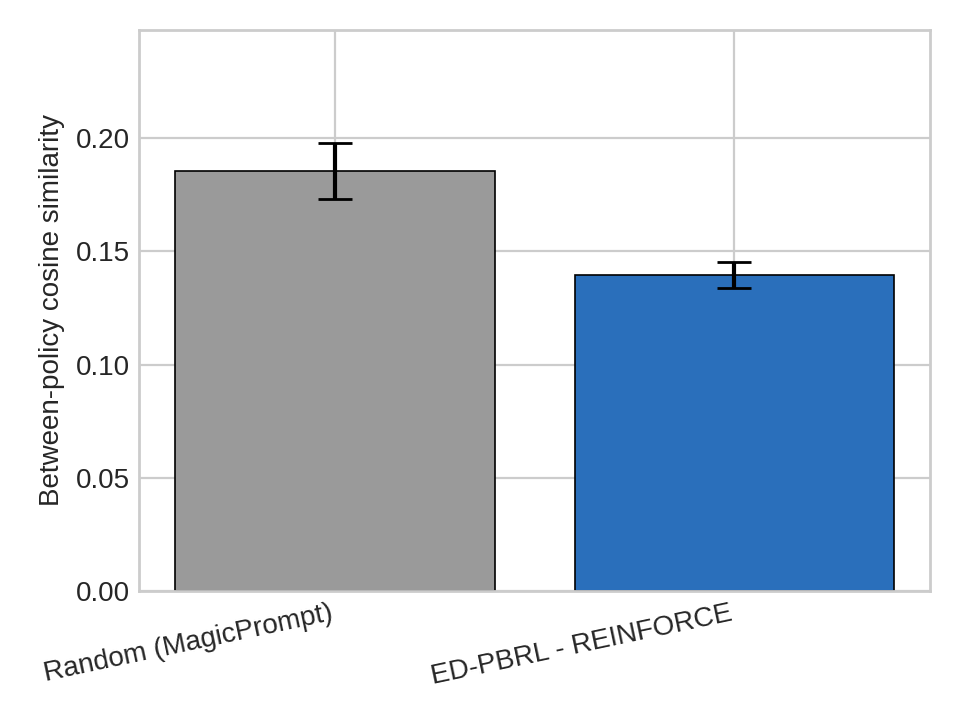}
\caption{Between-policy cosine similarity of CLIP embeddings (mean $\pm$ std across 10 repeats; 10 prompts per policy). Lower values indicate more diverse policies. ED-PBRL achieves $24\%$ lower similarity than the random baseline.}
\label{fig:reinforce_cosine_between}
\end{figure}

\end{document}